  \theoremstyle{plain}
  \newtheorem{Theorem}{\protect\theoremname}
  \theoremstyle{plain}
  \newtheorem*{Theorem*}{\protect\theoremname}
  \theoremstyle{plain}
  \newtheorem{proposition}{\protect\propositionname}
  \theoremstyle{plain}
  \newtheorem*{prop*}{\protect\propositionname}
  \theoremstyle{plain}
  \newtheorem{lemma}{\protect\lemmaname}
   \theoremstyle{plain}
  \newtheorem*{lemma*}{\protect\lemmaname}  
  \theoremstyle{plain}
  \newtheorem{definition}{\protect\definitionname}
  \theoremstyle{plain}
  \newtheorem{corollary}{\protect\corollaryname}
  \theoremstyle{plain}
 \theoremstyle{plain}
\newtheorem{remark}{Remark}
\theoremstyle{plain}
\newtheorem{fact}{\protect\factname}
\theoremstyle{plain}
\newtheorem{assumption}{\protect\assumptionname}
\theoremstyle{plain}
 \theoremstyle{plain}
\newtheorem*{model*}{Model}
\theoremstyle{plain}
\newtheorem{conjecture}{\protect\conjecturename}
\DeclareMathOperator{\Tr}{Tr}
\DeclareMathOperator*{\argmax}{arg\,max}  %
\newcommand{\states}{\mathcal{S}}
\newcommand{\trans}{P}
\newcommand{\actions}{\mathcal{A}}
\newcommand{\norm}[1]{\left\|#1\right\|}
\def\mI{{\bm{I}}}
\newcommand{\abs}[1]{\left|#1\right|}
\def\R{\mathbb{R}}
\newcommand{\N}{\mathbb{N}}
\newcommand{\Fc}{\mathcal{F}}
\newcommand{\Xc}{\mathcal{X}}
\newcommand{\Pc}{\mathcal{P}}
\def\rmGamma{{\boldsymbol{\Gamma}}}
\def\mSigma{{\bm{\varSigma}}}
\newcommand{\poly}{\mathrm{poly}}
\newcommand{\given}[1][]{\:#1\vert\:}
\newcommand{\E}[2][]{\mathbb{E}_{#1}\left[#2\right]} %
\newcommand{\prob}{\mathbb{P}}
\newcommand{\var}[2][]{{ \mathbb{V}_{#1}\left(#2\right)}}
\newcommand\independent{\protect\mathpalette{\protect\independenT}{\perp}}
\def\independenT#1#2{\mathrel{\rlap{$#1#2$}\mkern2mu{#1#2}}}
\newcommand{\ohist}[1]{{ \mathcal{H}^o_{#1} }}
\newcommand{\uhist}[1]{{ \mathcal{H}^u_{#1} }}
\newcommand{\hist}[1]{{ \mathcal{H}_{#1} }}
\newcommand{\V}[2]{{V^{#1}_{#2}}}
\newcommand{\barV}[2][]{{\widebar{V}^{#1}_{#2}}}
\def\<#1,#2>{\left\langle #1,#2 \right\rangle}
\newcounter{protocol}
\DeclareRobustCommand\widecheck[1]{{\mathpalette\@widecheck{#1}}}
\def\@widecheck#1#2{%
    \setbox\z@\hbox{\m@th$#1#2$}%
    \setbox\tw@\hbox{\m@th$#1%
       \widehat{%
          \vrule\@width\z@\@height\ht\z@
          \vrule\@height\z@\@width\wd\z@}$}%
    \dp\tw@-\ht\z@
    \@tempdima\ht\z@ \advance\@tempdima2\ht\tw@ \divide\@tempdima\thr@@
    \setbox\tw@\hbox{%
       \raise\@tempdima\hbox{\scalebox{1}[-1]{\lower\@tempdima\box
\tw@}}}%
    {\ooalign{\box\tw@ \cr \box\z@}}}
\let\save@mathaccent\mathaccent
\newcommand*\if@single[3]{%
  \setbox0\hbox{${\mathaccent"0362{#1}}^H$}%
  \setbox2\hbox{${\mathaccent"0362{\kern0pt#1}}^H$}%
  \ifdim\ht0=\ht2 #3\else #2\fi
  }
\newcommand*\rel@kern[1]{\kern#1\dimexpr\macc@kerna}
\newcommand*\widebar[1]{\@ifnextchar^{{\wide@bar{#1}{0}}}{\wide@bar{#1}{1}}}
\newcommand*\wide@bar[2]{\if@single{#1}{\wide@bar@{#1}{#2}{1}}{\wide@bar@{#1}{#2}{2}}}
\newcommand*\wide@bar@[3]{%
  \begingroup
  \def\mathaccent##1##2{%
    \let\mathaccent\save@mathaccent
    \if#32 \let\macc@nucleus\first@char \fi
    \setbox\z@\hbox{$\macc@style{\macc@nucleus}_{}$}%
    \setbox\tw@\hbox{$\macc@style{\macc@nucleus}{}_{}$}%
    \dimen@\wd\tw@
    \advance\dimen@-\wd\z@
    \divide\dimen@ 3
    \@tempdima\wd\tw@
    \advance\@tempdima-\scriptspace
    \divide\@tempdima 10
    \advance\dimen@-\@tempdima
    \ifdim\dimen@>\z@ \dimen@0pt\fi
    \rel@kern{0.6}\kern-\dimen@
    \if#31
      \overline{\rel@kern{-0.6}\kern\dimen@\macc@nucleus\rel@kern{0.4}\kern\dimen@}%
      \advance\dimen@0.4\dimexpr\macc@kerna
      \let\final@kern#2%
      \ifdim\dimen@<\z@ \let\final@kern1\fi
      \if\final@kern1 \kern-\dimen@\fi
    \else
      \overline{\rel@kern{-0.6}\kern\dimen@#1}%
    \fi
  }%
  \macc@depth\@ne
  \let\math@bgroup\@empty \let\math@egroup\macc@set@skewchar
  \mathsurround\z@ \frozen@everymath{\mathgroup\macc@group\relax}%
  \macc@set@skewchar\relax
  \let\mathaccentV\macc@nested@a
  \if#31
    \macc@nested@a\relax111{#1}%
  \else
    \def\gobble@till@marker##1\endmarker{}%
    \futurelet\first@char\gobble@till@marker#1\endmarker
    \ifcat\noexpand\first@char A\else
      \def\first@char{}%
    \fi
    \macc@nested@a\relax111{\first@char}%
  \fi
  \endgroup
}
\renewcommand{\leq}{\leqslant}
\renewcommand{\le}{\leqslant}
\renewcommand{\geq}{\geqslant}
\renewcommand{\ge}{\geqslant}
\providecommand{\assumptionname}{Assumption}
\providecommand{\definitionname}{Definition}
\providecommand{\lemmaname}{Lemma}
\providecommand{\propositionname}{Proposition}
\providecommand{\corollaryname}{Corollary}
\providecommand{\examplename}{Example}
\providecommand{\factname}{Fact}
\providecommand{\conditionname}{Condition}
\providecommand{\theoremname}{Theorem}
\providecommand{\conjecturename}{Conjecture}
\crefname{Theorem}{Theorem}{Theorems}
\crefname{conjecture}{Conjecture}{Conjectures}
\crefname{fact}{Fact}{Facts}
\crefname{assumption}{Assumption}{Assumptions}
\definecolor{block-gray}{gray}{0.9}
\newtcolorbox{blockquote}{colback=orange!15!white,grow to right by=1.5mm,grow to left by=1.5mm,boxrule=0pt,boxsep=0pt}
\begin{document}

\runningtitle{Prior-dependent analysis of PSRL}

\twocolumn[

  \aistatstitle{Prior-dependent analysis of posterior sampling reinforcement learning with function approximation}

  \aistatsauthor{Yingru Li \and Zhi-Quan Luo}

  \aistatsaddress{
    \href{mailto:yingruli@link.cuhk.edu.cn}{yingruli@link.cuhk.edu.cn} \\
    The Chinese University of Hong Kong, Shenzhen, China\\ Shenzhen Research Institute of Big Data } ]

\begin{abstract}
  This work advances randomized exploration in reinforcement learning (RL) with function approximation modeled by linear mixture MDPs. We establish the first prior-dependent Bayesian regret bound for RL with function approximation;
  and refine the Bayesian regret analysis for posterior sampling reinforcement learning (PSRL), presenting an upper bound of ${\mathcal{O}}(d\sqrt{H^3 T \log T})$, where $d$ represents the dimensionality of the transition kernel, $H$ the planning horizon, and $T$ the total number of interactions. This signifies a methodological enhancement by optimizing the $\mathcal{O}(\sqrt{\log T})$ factor over the previous benchmark \citep{osband2014model} specified to linear mixture MDPs. Our approach, leveraging a value-targeted model learning perspective, introduces a decoupling argument and a variance reduction technique, moving beyond traditional analyses reliant on confidence sets and concentration inequalities to formalize Bayesian regret bounds more effectively.
\end{abstract}

\section{Introduction}
\label{sec:intro}

Reinforcement learning (RL) has become a cornerstone in the development of intelligent systems~\citep{sutton2018reinforcement}, propelling the forefront of artificial intelligence and decision sciences towards creating agents capable of making autonomous decisions in dynamically complex environments. The crux of advancing RL has been the strategic integration of function approximation techniques and the assimilation of prior knowledge. Function approximation methods have transcended traditional RL boundaries~\citep{bertsekas1996neuro}, enabling scalable solutions across extensive or continuous state spaces, a necessity for practical real-world problem-solving. Concurrently, the infusion of prior knowledge—ranging from domain expertise to insights gleaned from historical data~\citep{wang2018exponentially,wang2019divergence,agarwal2020optimistic} or pre-trained models~\citep{yang2023foundation}—into RL algorithms has catalyzed a transformative leap in learning efficiency. This leap is underscored by providing RL systems with a foundational understanding of advantageous actions, even before any environment interaction occurs.

\paragraph{Motivation.} The nuanced application of priors within the RL paradigm, especially through Bayesian methodologies, ushers in a sophisticated equilibrium between exploration and exploitation—essential for the practical deployment of RL. By preemptively incorporating knowledge about the environment's dynamics through prior distributions, RL algorithms are primed for more informed exploration. Despite its immense potential, the exploration of priors' role, particularly within the context of function approximation, remains scant. This uncharted domain presents a ripe opportunity for enhancing RL's learning outcomes and applicability in diverse settings, thus pushing the envelope of RL's capabilities and efficiency.

\paragraph{Important Question.} Against this backdrop, we are compelled to address a pivotal inquiry:
\begin{quote}
    \emph{How can the symbiosis of prior knowledge and function approximation be optimized to elevate the adaptability and efficiency of RL algorithms?}
\end{quote}

\subsection{Key Contributions}
Our exploration into this inquiry yields several seminal contributions, especially within the context of linear mixture Markov Decision Processes:
\begin{itemize}
    \item The introduction of a prior-dependent Bayesian regret bound (\cref{thm:inhomo-bayesian-regret-0}) within the realm of RL with function approximation, elucidating the impact of prior distribution variance on learning efficiency. This milestone underscores a deeper comprehension of RL dynamics in scenarios where environmental distributions are known or can be approximated.
    \item The unveiling of a prior-free Bayesian regret bound for Posterior Sampling for Reinforcement Learning (PSRL) in linear mixture MDPs, i.e., ${\mathcal{O}}(d H^{3/2} \sqrt{T \log T})$ Bayesian regret bound for PSRL in linear mixture MDPs (\cref{rem:prior-free-0}), where $d$ is the dimension of feature in basis transition kernels, $H$ is the planning horizon and $T$ is the number of interactions. Our upper bound improves the Bayesian regret bound of PSRL by traditional by a $O({\log T})$ factor.
    \item The advancement of the analytical landscape of Bayesian regret in RL through methodological novelties, including a decoupling lemma and a variance reduction theorem. These innovations engender a nuanced perspective of regret dynamics, advancing beyond the conventional frameworks reliant on confidence sets and concentration inequalities.
\end{itemize}

\subsection{Preview of Technical Novelty}
Our theoretical contributions are underpinned by substantial technical innovations:

\emph{Posterior variance reduction.} Our analysis features a variance reduction theorem that addresses the heterogeneity of value variance in RL, highlighting a predictable reduction in posterior variance under specific conditions.
from the value-targeted perspective of model learning in \cref{sec:value_pers}.
In expectation, the posterior variance of the true model parameters will be reduced in a non-uniform way due to the heteroscedasticity nature of the value variance.
The posterior variance is predictable when we have access to the true posterior distribution under correct prior and correct likelihood distribution.
Our posterior variance reduction argument provides a way to measure how epistemic uncertainty is reduced after new observations are informed in the learning progresses.

\emph{Decoupling argument.} We introduce a decoupling argument (\cref{lem:linear-decoupling-0,lem:linear-decoupling}) that enhances our understanding of Bayesian regret by separating the intertwined dynamics of action selection and value estimation and relate the Bayesian regret to posterior variance. The idea of decoupling is inspired by information-theoretic analysis \citep{russo2016information, kalkanli2020improved} for the linear bandit.
The statement in \cref{lem:linear-decoupling} is slightly stronger than previous literature, which is essential to prove a better $H$ dependence.
In details, this stronger statement is required to upper bound the \emph{absolute value-targeted error} in \cref{prop:regret-decompose} and consequently upper bound the \emph{absolute estimation error} in \cref{prop:sum-virtual-value-variance}.
Extending the previous statement in bandit literature to this stronger statement is not complicated but new.

\emph{Integration of prior knowledge:} We provide a unique characterization of the relationship between the regret bound and the prior distribution, offering new insights into integrating prior knowledge into RL algorithms.
In contrast to previous analysis that utilizes confidence sets, we leverage the decoupling argument and variance reduction argument to upper bound the Bayesian regret, essential for deriving the prior-dependent analysis.
Previous Bayesian analysis in RL \citep{osband2014model,osband2017posterior} relies on a translation of frequentist concentration bound to a Bayesian credit interval, which is essentially a prior-free argument.

Besides, as a by-product of our analysis for the prior-free result, because of going beyond traditional analysis relying on the confidence-sets or the notion of eluder dimension~\citep{osband2014model,ayoub2020model} to upper bound regret, the final regret bound will be naturally improved by a $O({\log T})$ factor\footnote{We also discuss a conjecture on the $\log T$ term in the lower bound~\cref{sec:lower-bound-conjecture}.}.

\subsection{Related Works}
\label{sec:related_work}

Our endeavor distinctly positions itself within the vast expanse of RL research by concentrating on the synergistic integration of prior knowledge and function approximation in linear mixture MDPs. Contrasting with preceding works that predominantly focus on tabular contexts or marginally engage with the potential of prior knowledge, our comprehensive Bayesian analysis introduces both prior-dependent and prior-free regret bounds. This dual-faceted approach meticulously fills a significant lacuna in existing literature, shedding light on the efficacious incorporation of prior knowledge into RL strategies.

\subparagraph{Linear function approximation.}
Linear mixture MDPs~\citep{ayoub2020model} become a widely-accepted benchmark to understand the synergy of exploration and model-based function approximation. Many algorithms including value-targeted regression~\citep{ayoub2020model} and policy optimization~\citep{cai2020provably} are developed for understand the statistical complexity of reinforcement learning with linear function approximation under this setup.
Existing algorithm for linear mixture MDPs that achieves the near-minimax optimal dependency\footnote{We focus on the time-inhomogeneous MDPs in this work. The minimax lower bound for time-inhomogeneous linear mixture MDPs~\citep{zhou2021nearly} is $\Omega(H d\sqrt{T})$. Here $d$ is the dimension of features for basis transition kernels.}
is based on optimism in the face of uncertainty (OFU) principle~\citep{zhou2021nearly}. For another setup of linear MDPs ~\citep{yang2019sample,jin2020provably}, a line of works are proposed to understand value-based function approximation and exploration~\citep{jin2020provably,agarwal2023vo}.

\subparagraph{Randomized exploration.}
An alternative category of algorithms \citep{strens2000bayesian}, inspired by Thompson sampling \citep{thompson1933likelihood}, involves
randomized exploration that samples a set of statistically plausible action values and selects the maximizing action.
Practically, randomized exploration shows promising computational and statistical advantages \citep{chapelle2011empirical,osband2019deep,li2022hyperdqn,li2024hyperagent}.
Empirical success has prompted a surge of interest in theoretical analysis for randomized exploration, e.g. randomized least-square value iteration (RLSVI)~\citep{zanette2020frequentist,osband2019deep} and its optimistic sampling~\citep{ishfaq2021randomized} or approximate sampling variant~\citep{ishfaq2024provable,li2024hyperagent}.
These analyses in linear MDPs mostly rely on Azuma–Hoeffding concentration inequality that is unaware of value variance.
For linear mixture MDPs, no specific frequentist analyses of randomized exploration exist.
Notebly, the HyperAgent~\citep{li2024hyperagent} demonstrates state-of-the-art practical efficiency as well as provable sublinear regret and scalable per-step computation, bridging theory and practice.

\subparagraph{Bayesian regret analysis.}
The Bayesian regret analysis in \citep{osband2017posterior,lu2019information,li2024hyperagent} requires independent Dirichlet prior imposed on the transition probabilities, tailored for tabular MDPs.
\citet{lu2021reinforcement} requires the independent Beta prior on the transition probabilities since they only analyze a special ring-structured MDPs provided a conjecture holds.
Existing Bayesian analysis for RL with function approximation did not give detailed characterization on relationship between regret bound and the prior distribution. \citet{osband2014model} provide the prior-free Bayesian regret for RL with function approximation, borrowing the notion of eluder dimension~\citep{russo2014learning}. Specifically, \citet{osband2014model} gives a Bayesian regret bound $\widetilde{\mathcal{O}}( \E{K^*} \sqrt{d_{K} d_{E} H T \log T })$ for PSRL with general function approximation, where $K^*$, $d_{K}$ and $d_{E}$ are the global Lipschitz constant for the future value function, the Kolmogorov and the Eluder dimensions of the model class.
In the case of linear mixture MDPs, this Bayesian regret becomes $\tilde{\mathcal{O}}(d \sqrt{H^3T} \log{T})$
\footnote{By the property of eluder dimension and Kolmogorov dimension, $\E{K^*} = \mathcal{O}(H), d_K = \mathcal{O}{(d)}, d_E = \dim_E( \mathcal{F}, T^{-1} ) = \mathcal{O}(d\log T)$ in the class of linear mixture MDPs where $d$ is the dimension of feature in basis kernels and $T$ is the total interaction rounds.
}.
In contrast to the existing analyses, our analysis provides the first prior-dependent Bayesian analysis of PSRL and an improved prior-free bound under function approximation; see \cref{thm:inhomo-bayesian-regret-0} and \cref{rem:prior-free-0}.

\section{Preliminaries}
\label{sec:problem_formulation}
We consider the problem
where the RL agent learns to optimize the \emph{finite horizon MDP} over repeated episodes of interactions.
There are two sources of randomness in this repeated process: the \emph{environmental randomness} and the \emph{algorithmic randomness}.

\paragraph{Finite horizon MDP.} A finite horizon Markov Decision Process (MDP)
\footnote{We do not assume finite state-action spaces. The state space $\states$ and action space $\actions$ can be unbounded.} \citep{puterman2014markov} is described by $M = \left( \states, \actions, \trans, R, H, \rho \right)$, where $\states$ is the state space,  $\actions$ is the action space,  $H$ is the horizon.
In the beginning of an episode, an initial state $s_0 \in \states$ is sampled from $\rho$. At stage $0 \le h \le H-1$, given state $s_h$ and action $a_h$, the agent observes a reward $r_{h+1}$ sampled from the reward distribution with mean $R(h, s_h, a_h) \in [0, 1]$ and a next state $s_{h+1}$ sampled from transition kernel $\trans(h, s_{h}, a_{h})$.

A deterministic policy $\pi = \{\pi_h\}_{h=0}^{H-1} $ is a collection of $H$ functions, each of which maps from a state  $s_h \in \states$ to an action $a_h \in \actions$.
For an MDP $\hat{M}$ and a policy $\pi$, we define the $h$-stage action-value function $Q_{\pi, h}^{\hat{M}} (s, a)$ and the $h$-stage value function $V_{\pi, h}^{M}$ for every $(s, a) \in \states \times \actions$:
\begin{align*}
  Q_{\pi, h}^{\hat{M}} (s, a) & : = \E{ \sum_{j = h}^{H-1} r_{j+1} \given[\bigg] M = \hat{M}, s_h = s, a_h = a, \pi }, \\
  V_{\pi, h}^{\hat{M}}(s)     & := Q_{\pi, h}^{\hat{M}} (s, \pi_h(s)), \quad \forall h =0, 1, \cdots H-1.
\end{align*}
The conditional expectation specifies that actions from stage $h+1$ to $H-1$ are determined according to the policy $\pi$ when interacting with the MDP $\hat{M}$.
We define the terminal value $V^{\hat{M}}_{\pi, H}(s) = 0$ for all $s\in \states$.
Define the expected value of a policy $\pi$ under an MDP $\hat{M}$ as $\widebar{V}^{\hat{M}}_{\pi} = \E{ V^{\hat{M}}_{\pi, 0}(s_0) \given \hat{M}, \pi }$.
We say a policy $\pi^{\hat{M}}$ is optimal for the MDP $\hat{M}$ if $\pi^{\hat{M}} \in \argmax_{\pi} V_{\pi, h}^{\hat{M}}(s) $ for all $(s, h) \in \states \times [H]$ where $[H] = \{0, 1, \ldots, H-1 \}$.
Let $\widebar{V}_{\pi} := \widebar{V}^{M}_{\pi}$ be expected value which measures the performance of a policy $\pi$ under true MDP $M$.

\paragraph{Observations and environmental randomness.}
The environmental randomness is carried out when nature samples the reward and next state given the current state-action pair.
During an episode $\ell$, the agent applies a policy $\pi_{\ell}$ and obtains the \emph{observations} up to stage $h$, which is
\(
O_{\ell, h} := (s_{\ell, 0}, a_{\ell,0}, s_{\ell, 1}, \ldots, s_{\ell, h-1}, a_{\ell, h-1}, s_{\ell, h} ).
\)
The \emph{observational history} before episode $\ell$ is denoted as $\ohist{\ell} = (O_{1, H}, \ldots, O_{\ell-1, H} ) $, representing all realized environmental randomness that the agent could extract information from.

\paragraph{Algorithm and algorithmic randomness.}
The agent's behavior is governed by a reinforcement learning algorithm $\operatorname{alg}$.
The algorithmic randomness may be introduced when the agent applies a randomized algorithm to select actions in a way that depends on internally generated random numbers.
Denote the random numbers used by $\operatorname{alg}$ in episode $\ell$ as $U_{\ell}$. The history of algorithmic randomness before episode $\ell$ is denoted as  $\uhist{\ell} = (U_{1}, \ldots, U_{\ell-1})$.
At the beginning of episode $\ell$, the algorithm produces a policy
\(
\pi_{\ell}=\operatorname{alg}\left(\states, \mathcal{A}, \ohist{\ell}, \uhist{\ell}, U_{\ell} \right)
\)
based on the state space $\states$ and action space $\actions$, the observational history $\ohist{\ell}$ before episode $\ell$ and possibly the history of algorithmic randomness $\uhist{\ell}$ before episode $\ell$ as well as the algorithmic randomness $U_{\ell}$ used in episode $\ell$.

Notice that the algorithmic randomness is a different source of randomness;
for each $k \in \N$, $U_{k}$ is jointly independent of $\{U_{\ell}\}_{\ell \neq k}$, the environmental randomness and true MDP $M$.
The algorithm $\operatorname{alg}$ usually computes some intermediate quantities in episode $\ell$ such as the virtual value functions $\hat{V}_{\ell, h}$.
Conditioned on $\ohist{\ell}$ and $\uhist{\ell}$, the policy $\pi_{\ell}$ and the value functions $\hat{V}_{\ell, h}$ at each stage $h \in [H]$ are random only through their dependence on $U_{\ell}$.

\paragraph{Bayesian RL and regret.}
We work with Bayesian reinforcement learning (RL) framework \citep{strens2000bayesian}. %
To deliver our conceptual idea in a clean way, w.l.o.g., we assume\footnote{Extending the algorithm and analysis in this paper to unknown stochastic rewards poses no real difficulty.}
the agent understands everything about the MDP but is uncertain about the underlying transition $\trans$.
The agent's initial knowledge and uncertainty about $\trans$ is encoded in a prior (representative) distribution $\prob(\trans \in \cdot)$.
Thus, $\trans$ is treated as a random variable in the agent's mind.
Therefore, the underlying MDP $M$ is also a random variable with prior distribution $\prob(M \in \cdot)$.
Let $\pi^{*} = \pi^M$ denote an optimal policy,
where $\pi^M \in \arg \max _{\pi} \widebar{V}_{\pi}$ is a function of the MDP $M$ and is thus also a random variable.

The \emph{regret} incurred in episode $\ell$ is the gap between the optimal value $\widebar{V}_{\pi^*}$ and the state value of $\operatorname{alg}$ under true MDP $M$, i.e. $\widebar{V}_{\pi^*} - \widebar{V}_{\pi_{\ell}}$.
The \emph{expected cumulative regret} over $L$ episodes in the environment with underlying MDP $M$ is
$\Re(M, \operatorname{alg}, L)=\sum_{\ell=1}^{L} \E[M, \mathrm{alg}]{ \widebar{V}_{\pi^*} - \widebar{V}_{\pi_{\ell}} \given M },$
where the expectation integrates over actions, state transitions, and any the algorithmic randomness used by $\operatorname{alg}$, while the MDP $M$ is fixed.
It is often useful to analyze the performance of an agent in terms of \emph{Bayesian regret}:
\begin{align*}
  \mathfrak{B} \Re (\operatorname{prior}, \operatorname{alg}, L)=\E[M \sim \operatorname{prior}]{\Re(M, \operatorname{alg}, L)},
\end{align*}
where the expectation is taken over the prior distribution $\operatorname{prior} := \prob(M \in \cdot)$ over the true MDP.

Posterior sampling algorithm for reinforcement learning (PSRL)\citep{strens2000bayesian} serves as a popular huristics to minimize Bayesian regret.
\begin{algorithm}[htbp]
  \caption{PSRL (episode $\ell$)}
  \label{algorithm:general-psrl}
  \begin{algorithmic}[1]
    \REQUIRE{Prior distribution $\prob( M \in \cdot )$ for underlying model $M$.}
    \STATE{Sample $\hat{M}_{\ell} \sim \prob( M \in \cdot \given \ohist{\ell} )$.}
    \STATE{Solve optimal policy $\pi_\ell = \pi^{\hat{M}_{\ell}}$ under $\hat{M}_\ell$.}
    \RETURN{$\pi_\ell$}
  \end{algorithmic}
\end{algorithm}

\paragraph{Linear mixture MDPs.}
We consider a class of MDPs called \emph{linear mixture MDPs}, where the transition probability kernel $\trans$ in the class $\Pc$ satisfying that for any $x = (h, s, a) \in \Xc$, $\trans(x) = \langle \theta^*_h, \phi(\cdot \given x) \rangle$, i.e., the linear mixture of a number of basis kernels \citep{ayoub2020model,modi2020sample,zhou2021nearly}. Here $\phi\left(s^{\prime} \mid x \right): \states \times \Xc \rightarrow \mathbb{R}^{d}$ is a feature mapping, a.k.a, basis transition kernels.
The set $ \Theta^* = ( \theta^{*}_0, \ldots, \theta^*_{H-1} )$ includes the underlying model parameters which are random variables that should induce \emph{proper transition probability kernel} $\langle \theta_h^*, \phi(\cdot \given x) \rangle$ when combined with features.

We sometimes write $\hat{M} := M(\hat{\Theta})$ as a mapping from a parameter set $\hat{\Theta}$ to a linear mixture MDP $\hat{M}$.
In the class of linear mixture MDPs, the unknown true MDP $M = M(\Theta^*)$ is random only through its dependence on $\Theta^*$. Therefore, the prior we consider here is over model parameters $\prob\left( \Theta^* \in \cdot \right) $.
The posterior sampling (line 2 in \cref{algorithm:general-psrl}) is implemented as first sampling the set $\hat{\Theta}_{\ell} \sim \prob( \Theta^* \in \cdot \given \hist{\ell})$ that contains $\hat{\Theta}_{\ell} = ( \hat{\theta}_{\ell, 0}, \ldots, \hat{\theta}_{\ell, H-1})$ and then constructing transition kernel $\hat{\trans}_{\ell}(x) = \langle \hat{\theta}_{\ell, h}, \phi(\cdot \given x) \rangle $ for all $h \in [H]$ as well as the corresponding MDP $\hat{M}_\ell = ( \states, \actions, \hat{\trans}_{\ell}, R, H, \rho )$.
To summarize, the posterior sampling algorithm for linear mixture MDPs is described in \cref{algorithm:linear-psrl}.

\begin{algorithm}
  \caption{PSRL in linear mixture MDPs (episode $\ell$)}
  \label{algorithm:linear-psrl}
  \begin{algorithmic}[1]
    \REQUIRE{Prior distribution $\prob( \Theta^* \in \cdot )$}
    \STATE{Sample $\hat{\Theta}_{\ell} = (\hat{\theta}_{\ell, 0}, \ldots, \hat{\theta}_{\ell, H-1}) \sim \prob( \Theta^* \in \cdot \given \ohist{\ell})$.}
    \STATE{Construct $\hat{\trans}_{\ell}(h, \cdot, \cdot) = \left \langle \hat{\theta}_{\ell, h}, \phi( \boldsymbol{\cdot} \given \cdot, \cdot) \right \rangle, \forall h \in [H]$.}
    \STATE{Solve optimal policy $\pi_\ell$ under the sampled MDP $\hat{M}_\ell = ( \states, \actions, \hat{\trans}_{\ell}, R, H, \rho )$.}
    \RETURN{$\pi_{\ell}$}
  \end{algorithmic}
\end{algorithm}

\begin{definition}[Value-correlated feature]
  \label{def:value-cor-feat}
  For any $x = (h, s, a) \in \Xc $, we define the \emph{value-correlated feature} induced by any bounded function $V$ and any fixed feature mapping $\phi(s' \given x): \states \times \Xc \rightarrow \R^d$,
  \(
  \phi_{V}(x) = \sum_{s' \in \states} \phi\left(s^{\prime} \mid  x \right) V\left(s' \right).
  \)
  Note that we do not assume finite state space and thus the summation can be replaced by integration if the state space is unbounded. The integration can be attained by an integration oracle.
\end{definition}
\begin{definition}[Covariance matrix of unknown model parameters under posterior distribution]
  \label{def:cov_model}
  We use the quantity \emph{posterior variance} of the unknown model parameters $\rmGamma_{\ell, h} := \var{\theta^*_h \given \ohist{\ell}}$
  as a uncertainty measurement.
\end{definition}
The covariance matrix serves as a uncertainty measurement that how much knowledge of the environment $\theta^*$ the agent is still not captured  given the history of observations $\ohist{\ell}$.

\section{Bayesian regret bound for PSRL}
\label{sec:regret_bound}
\label{sec:linear-psrl-analysis}

\begin{assumption}
  \label{asmp:feature}
  For any $x = (h, s, a) \in \Xc$, let $\phi\left(s' \mid x \right): \states \times \Xc \rightarrow \mathbb{R}^{d}$ be a feature mapping satisfying that for any bounded function $V: \states \rightarrow[0,1]$,
  $\left\|\phi_{V}(x)\right\|_{2} \leq 1$,
  where $\phi_V$ is the value-correlated feature associated with $V$ (see~\cref{def:value-cor-feat}).
\end{assumption}

\begin{assumption}
  \label{asmp:mutual-independence}
  Unknown model parameters $\theta^*_{0}, \ldots, \theta^*_{H-1} \in \R^d$ are mutually independent.
\end{assumption}

\begin{assumption}
  \label{asmp:bounded-norm}
  Unknown model parameters have bounded norm $ \norm{\theta_h^*}_2 \le B$ a.s. for all $h \in [H]$.
\end{assumption}

\begin{blockquote}
  \begin{fact}
    \label{fact:bounded}
    If the support of prior distribution is over \emph{norm-bounded model parameters} satisfying \cref{asmp:bounded-norm}, i.e., $\norm{\theta_h}_2 \le B$ a.s. for all $h \in [H]$, we have
    \begin{align*}
      \rmGamma_{1,h}
       & \preceq \E{ \norm{\theta - \E{\theta}}^2_2 } \mI                        \\
       & = \E{ \norm{\theta}_2^2  - \norm{\E{\theta}}_2^2 } \mI \preceq B^2 \mI.
    \end{align*}
  \end{fact}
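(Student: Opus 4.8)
The plan is to establish the three relations of the display in order: the Loewner-order bound $\rmGamma_{1,h} \preceq \E{\norm{\theta - \E{\theta}}_2^2}\,\mI$, the scalar identity rewriting the trace term, and the final scalar bound by $B^2$. Throughout I abbreviate $\theta := \theta^*_h$ and recall that since no observations precede episode $1$ (the history $\ohist{1}$ is empty), the posterior variance $\rmGamma_{1,h} = \var{\theta^*_h}$ is exactly the prior covariance $\mSigma := \E{(\theta - \E{\theta})(\theta - \E{\theta})^\top}$.

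First, for the Loewner bound, I would invoke the elementary spectral fact that any symmetric positive semidefinite matrix $\mSigma$ satisfies $\mSigma \preceq \lambda_{\max}(\mSigma)\,\mI \preceq \Tr(\mSigma)\,\mI$, where the second inequality uses that the eigenvalues of $\mSigma$ are nonnegative, so the largest is at most their sum. Since $\Tr(\mSigma) = \Tr\!\left( \E{(\theta - \E{\theta})(\theta - \E{\theta})^\top} \right) = \E{\Tr\!\left( (\theta - \E{\theta})(\theta - \E{\theta})^\top \right)} = \E{\norm{\theta - \E{\theta}}_2^2}$ by linearity of trace and expectation together with $\Tr(vv^\top) = \norm{v}_2^2$, this yields the claimed $\rmGamma_{1,h} \preceq \E{\norm{\theta - \E{\theta}}_2^2}\,\mI$.

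Next, for the scalar identity, I would expand the centered norm, $\E{\norm{\theta - \E{\theta}}_2^2} = \E{\norm{\theta}_2^2} - 2\left\langle \E{\theta}, \E{\theta} \right\rangle + \norm{\E{\theta}}_2^2 = \E{\norm{\theta}_2^2} - \norm{\E{\theta}}_2^2$, which by linearity of expectation equals $\E{\norm{\theta}_2^2 - \norm{\E{\theta}}_2^2}$, the subtracted term being deterministic. Finally, dropping the nonnegative quantity $\norm{\E{\theta}}_2^2$ and invoking \cref{asmp:bounded-norm} gives $\E{\norm{\theta}_2^2 - \norm{\E{\theta}}_2^2} \leq \E{\norm{\theta}_2^2} \leq B^2$, which closes the chain and establishes $\rmGamma_{1,h} \preceq B^2 \mI$.

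This is a routine computation with no substantive obstacle; the only point requiring a moment of care is the first step, where one must justify passing from the matrix inequality to its scalar trace bound through the eigenvalue argument, rather than asserting $\mSigma \preceq \Tr(\mSigma)\,\mI$ by inspection. Everything else is linearity of expectation and the almost-sure norm bound.
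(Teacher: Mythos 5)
Your proposal is correct and follows exactly the chain of inequalities that the paper states inline as the content of \cref{fact:bounded} (the paper offers no separate proof beyond that chain): identify $\rmGamma_{1,h}$ with the prior covariance, bound it in Loewner order by its trace times $\mI$, expand the centered second moment, and drop $\norm{\E{\theta}}_2^2$ before applying \cref{asmp:bounded-norm}. The only detail you add beyond what the paper asserts is the explicit justification $\mSigma \preceq \lambda_{\max}(\mSigma)\,\mI \preceq \Tr(\mSigma)\,\mI$, which is exactly the right elementary fact to cite.
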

\end{blockquote}
Note that $T = HL$ is the total interaction steps, we have the two parts of the results separately stated in the following.
\cref{thm:inhomo-bayesian-regret-0}is the first prior-dependent regret bound of PSRL with function approximation while \cref{rem:prior-free-0} is a improved prior-free bound.
\begin{blockquote}
  \begin{Theorem}[Prior-dependent analysis]
    \label{thm:inhomo-bayesian-regret-0}
    For any prior over models $\Theta^* = (\theta^*_0, \ldots, \theta^*_{H-1})$ satisfying \cref{asmp:mutual-independence}, PSRL have the Bayesian regret bound $\mathfrak{B}\Re(\operatorname{prior}, \operatorname{PSRL}, L)$ over $L$ episodes interaction with the time-inhomogeneous linear mixture MDP satisfying \cref{asmp:feature}, bounded by
    \begin{align*}
      \sqrt{ 2 d H^3 L
        \sum_{h=0}^{H-1} \log \det \left( \mI + L \rmGamma_{1, h} \right) },
    \end{align*}
    where $\rmGamma_{1, h}$ is the covariance of $\theta_h^*$ under $\operatorname{prior}$ distribution.
  \end{Theorem}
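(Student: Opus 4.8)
The plan is to follow the value-targeted model-learning route outlined in the introduction: convert Bayesian regret into a sum of posterior-variance functionals and then control that sum by a log-determinant potential. First I would invoke the posterior-matching property of PSRL: conditioned on the observational history $\ohist{\ell}$, the sampled MDP $\hat{M}_{\ell}$ and the true MDP $M$ are identically distributed, so that the optimal pairs satisfy $\E{\widebar{V}_{\pi^*}\given\ohist{\ell}}=\E{\widebar{V}^{\hat{M}_{\ell}}_{\pi_{\ell}}\given\ohist{\ell}}$. Summing over episodes and taking the outer expectation over the prior rewrites the Bayesian regret as $\sum_{\ell=1}^{L}\E{\widebar{V}^{\hat{M}_{\ell}}_{\pi_{\ell}}-\widebar{V}^{M}_{\pi_{\ell}}}$, i.e. the expected gap between the value of $\pi_{\ell}$ under the imagined model and under the truth.

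Next I would expand this value-targeted error with the simulation (value-difference) lemma. Rolling out $\pi_{\ell}$ under the true dynamics $M$, the gap telescopes into a sum of one-step Bellman discrepancies, which in the linear-mixture parametrization collapse to inner products $\langle \hat{\theta}_{\ell,h}-\theta^*_h,\ \phi_{\hat{V}_{\ell,h+1}}(x_{\ell,h})\rangle$, where $\phi_{\hat{V}_{\ell,h+1}}$ is the value-correlated feature of \cref{def:value-cor-feat} associated with the sampled value function and $x_{\ell,h}$ is the visited state-action at stage $h$. This is precisely the quantity controlled by \cref{prop:regret-decompose}.

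The core step is the decoupling argument of \cref{lem:linear-decoupling}. The difficulty is that $\hat{\theta}_{\ell,h}-\theta^*_h$ and the feature $\phi_{\hat{V}_{\ell,h+1}}$ are statistically entangled, since both are measurable with respect to the sampled model. Exploiting that $\hat{\theta}_{\ell,h}$ and $\theta^*_h$ are conditionally i.i.d. draws from the posterior, I would decouple the parameter error from the value-correlated feature and bound each value-targeted term by $\sqrt{d}$ times a posterior-variance functional of the form $\sqrt{\E{\phi^{\top}\rmGamma_{\ell,h}\,\phi}}$. Here the stronger, absolute-value form of the decoupling lemma is what lets me retain a single extra power of $H$ rather than losing a factor, feeding into the absolute-estimation-error control of \cref{prop:sum-virtual-value-variance}.

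Finally I would apply the posterior variance-reduction theorem to sum these variance functionals across episodes. Because each observation shrinks $\rmGamma_{\ell,h}$ in a predictable way under the correct posterior, the accumulated variance telescopes into the potential $\sum_{h}\log\det(\mI+L\rmGamma_{1,h})$, with the prior covariance $\rmGamma_{1,h}$ emerging as the initial potential—this is the mechanism that renders the bound prior-dependent. A Cauchy-Schwarz step over the $L$ episodes turns the linear sum of square-root variance terms into $\sqrt{L\cdot(\text{potential})}$, and collecting the $\sqrt{d}$ from decoupling together with the stage-sum over $h$ and the $[0,H]$ value range (which inflate the feature norm and give the remaining powers of $H$) yields the stated $\sqrt{2dH^3 L\sum_{h}\log\det(\mI+L\rmGamma_{1,h})}$. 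I expect the main obstacle to be the decoupling step with the correct $H$-dependence: naively bounding the entangled inner product wastes a factor of $H$, so obtaining the absolute-error version of the decoupling inequality while keeping the prior covariance $\rmGamma_{1,h}$ explicit throughout the variance-reduction telescoping is the delicate part of the argument.
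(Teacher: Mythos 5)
Your plan reproduces the paper's own route essentially step for step: posterior matching to kill the pessimism term, the simulation lemma to reduce the estimation error to inner products $\langle \hat{\theta}_{\ell,h}-\theta^*_h, X_{\ell,h}\rangle$, the decoupling lemma to convert these into posterior-variance functionals $\|X_{\ell,h}\|^2_{\rmGamma_{\ell,h}}$, and the posterior variance-reduction theorem plus the potential (log-det) lemma with a final Cauchy--Schwarz to obtain $\sqrt{2dH^3L\sum_h\log\det(\mI+L\rmGamma_{1,h})}$. One small calibration: for this particular theorem the paper takes $\sigma_{\min}=H$ so the cumulative-variance term is bounded trivially by $LH^3$, and the absolute-value form of the decoupling lemma is only needed for the sharper variance bound (the $\sigma_{\min}=H/\sqrt{d}$ choice), not for the $H^3$ statement you are proving.
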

\end{blockquote}
We elucidate the relationship between regret and the prior distribution by examining the variance of the transition kernel. Specifically, if the agent possesses substantial knowledge about the environment—manifested as an informative prior with low variance—then it is poised to swiftly converge to the optimal policy, thereby minimizing incurred regret. This concept is empirically supported in Appendix E.6 \citep{li2022hyperdqn}, where an informative prior, derived from a pre-trained model using historical game frames, markedly boosts online exploration efficiency in benchmark deep RL problems. Our analysis on the dependency of regret on prior knowledge substantiates the imperative for employing informative priors.
The proof can be found in \cref{sec:sketch}.
\begin{blockquote}
  \begin{remark}[Prior-free bound]
    \label{rem:prior-free-0}
    If \cref{asmp:bounded-norm} further holds, by the \cref{fact:bounded}, we have the trivial bound $\log \det ( \mI + dL \rmGamma_{1, h}) \le d \log ( 1 + dLB^2 )$ which leads to  $\mathfrak{B}\Re(\operatorname{prior}, \operatorname{PSRL}, L)$ bounded by
    \begin{align*}
      \sqrt{ 2}d \sqrt{H^4 L \log ( 1 + LB^2)} = \mathcal{O}( d\sqrt{H^3 T \log L} ).
    \end{align*}
  \end{remark}
\end{blockquote}

\paragraph{Comparison with \citep*{osband2014model}.}
\citet{osband2014model} utilize the technique of eluder dimension and frequentist confidence set \citep{russo2014learning} to analyze PSRL with general function approximation. Their bound, when specified to linear mixture MDPs, is $\mathcal{O}(H^{3/2} d \sqrt{T} \log T)$.
Because of not relying on the union bound over confidence-sets to upper bound regret as they did, our final regret bound will be naturally improved by a $\sqrt{\log T}$ factor.
See more discussion of this $\log T$ term in \cref{sec:lower-bound-conjecture}.

\section{A value-targeted perspective of model learning}
\label{sec:value_pers}

\paragraph{Dynamic programming.}
Given the model $\hat{M}$, value iteration is a classical dynamic programming solver for optimal policy.
For any $x = (h, s, a) \in \Xc$, given the transition kernel $\hat{\trans}$ and next-state value function $\hat{V}_{h+1}$, we define a short operator form for the expected next-state value,
\begin{align*}
  \hat{\trans} \hat{V}_{h+1} (x) = \E[s'_x \sim \hat{\trans}(x)]{\hat{V}_{h+1} (s'_x)}.
\end{align*}
We use this short notation in the following \cref{algorithm:dpvi}.
\begin{algorithm}[htbp]
  \caption{Value iteration}
  \label{algorithm:dpvi}
  \begin{algorithmic}[1]
    \REQUIRE{$\hat{M} = ( \states, \actions, H, \hat{\trans}, \hat{R}, \rho )$ and init $\hat{V}_{H}(\cdot) = 0$.}
    \FOR{Stage $h= H-1, \ldots, 0$}
    \STATE{$\hat{Q}_{h}(\cdot, \cdot) = \hat{R}(h, \cdot, \cdot) + \hat{\trans} \hat{V}_{h+1} (h, \cdot, \cdot)$}
    \STATE{$\hat{V}_{h}(\cdot) = \max_{a} \hat{Q}_{h}(\cdot, a)$}
    \STATE{$\pi_{h}(\cdot) = \argmax_{a} \hat{Q}_{h}(\cdot, a)$}
    \ENDFOR
    \RETURN{Policy $\pi = (\pi_{h})_{h\in [H]}$ and value $(\hat{V}_{h})_{h \in [H]}$}
  \end{algorithmic}
\end{algorithm}

\paragraph{Model-based regret decomposition.}
\label{sec:generic-regret-decompose}
In episode $\ell$, a model-based algorithm typically construct a \emph{virtual MDP} $\hat{M}_\ell$ with \emph{virtual transition model} $\hat{\trans}_{\ell}$ based on the observational history $\ohist{\ell}$ and solve a (near)-optimal policy $\pi_\ell$ under $\hat{M}_{\ell}$. Then, the RL agent acts according to the policy $\pi_{\ell}$ in episode $\ell$.
The regret in episode $\ell$ can be decomposed to two terms,
\begin{align}
  \label{eq:regret-decompose}
  \barV{\pi^*} - \barV{\pi_\ell} =
  \underbrace{ \barV[M]{\pi^*} - \barV[\hat{M}_{\ell}]{\pi_{\ell}}  }_{ \text{Pessimism} }
  +
  \underbrace{ \barV[\hat{M}_{\ell}]{\pi_{\ell}} - \barV[M]{\pi_\ell} }_{\text{Estimation error}}
\end{align}

The \emph{pessimism} term is often made non-positive in expectation or with high probability via exploration mechanism induced by the algorithm.
Then we suffice to bound the Bayesian regret by the expected summation of estimation error $ \E{ \sum_{\ell=1}^L \barV[\hat{M}_{\ell}]{\pi_{\ell}} - \barV[M]{\pi_\ell} }.$
The \emph{estimation error} in episode $\ell$ is the error of estimating value $\barV[M]{\pi_{\ell}}$ of a particular policy $\pi_{\ell}$ via the virtual model $\hat{M}_{\ell}$.
Let $\Xc = [H] \times \states \times \actions$ and let $T = HL$ denote the total number of interactions in $L$ episodes.
Generally,
we can decompose the estimation error into the \emph{value-targeted model error} along the on-policy trajectory as shown in the following lemma~\citep{kearns2002near, osband2013more}. One can find the proof in \cref{sec:proof-simulation}.
\begin{lemma}
  \label{lem:simulation}
  Define the short notation $x_h := (h, s_h, a_h).$ For any MDP $\hat{M}$ and policy $\pi$,
  \begin{align*}
    \barV[\hat{M}]{\pi} - \barV[{M}]{\pi} = \sum_{h=0}^{H-1}
    \mathbb{E} \Big[ \underbrace{\left( \hat{\trans} - \trans \right) \V{\hat{M}}{\pi, h+1} (x_h) }_{\text{value-targeted model error}}\given M, \hat{M}, \pi \Big],
  \end{align*}
  where $\hat{\trans}$ is the transition model under $\hat{M}$ and $x_h$ is rolled out under the policy $\pi$ over the MDP $M$.
\end{lemma}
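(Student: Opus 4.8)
The plan is to prove the identity by a stage-by-stage telescoping argument driven by the Bellman equations, keeping the trajectory always rolled out under the true MDP $M$. First I would write the Bellman recursions for the two value functions at a fixed visited state: with $x_h = (h, s_h, \pi_h(s_h))$, the value under $\hat{M}$ satisfies $\V{\hat{M}}{\pi, h}(s_h) = R(h, s_h, \pi_h(s_h)) + \hat{\trans}\V{\hat{M}}{\pi, h+1}(x_h)$, and likewise $\V{M}{\pi, h}(s_h) = R(h, s_h, \pi_h(s_h)) + \trans\V{M}{\pi, h+1}(x_h)$. Because the reward model $R$ is shared (we assume known rewards), the reward terms cancel, so the per-stage difference is purely a transition-model difference.

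Second, the crucial step is an add-and-subtract. In $\V{\hat{M}}{\pi, h}(s_h) - \V{M}{\pi, h}(s_h) = \hat{\trans}\V{\hat{M}}{\pi, h+1}(x_h) - \trans\V{M}{\pi, h+1}(x_h)$ I would insert $\pm\, \trans\V{\hat{M}}{\pi, h+1}(x_h)$, which splits the right-hand side into $(\hat{\trans} - \trans)\V{\hat{M}}{\pi, h+1}(x_h)$, exactly the value-targeted model error carrying the hatted value function, plus a remainder $\trans\bigl(\V{\hat{M}}{\pi, h+1} - \V{M}{\pi, h+1}\bigr)(x_h)$. Subtracting with $\trans$ rather than $\hat{\trans}$ is what pins the value-targeted error onto $\V{\hat{M}}{}$ while leaving a remainder whose expectation will propagate the recursion forward along the true dynamics.

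Third, I would take expectation over the trajectory $(s_0, a_0, s_1, \ldots)$ generated by $\pi$ in $M$, conditioned on $M, \hat{M}, \pi$, and set $\delta_h := \E{\V{\hat{M}}{\pi, h}(s_h) - \V{M}{\pi, h}(s_h) \given M, \hat{M}, \pi}$. Since $s_{h+1} \sim \trans(x_h)$ under $M$, the remainder satisfies $\E{\trans\bigl(\V{\hat{M}}{\pi, h+1} - \V{M}{\pi, h+1}\bigr)(x_h)} = \E{\V{\hat{M}}{\pi, h+1}(s_{h+1}) - \V{M}{\pi, h+1}(s_{h+1})} = \delta_{h+1}$, yielding the recursion $\delta_h = \E{(\hat{\trans} - \trans)\V{\hat{M}}{\pi, h+1}(x_h) \given M, \hat{M}, \pi} + \delta_{h+1}$. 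With the terminal condition $\delta_H = 0$ (both terminal values vanish), unrolling from $h=0$ to $H-1$ gives $\delta_0 = \sum_{h=0}^{H-1}\E{(\hat{\trans}-\trans)\V{\hat{M}}{\pi, h+1}(x_h)\given M, \hat{M}, \pi}$. Finally, since the initial state $s_0 \sim \rho$ is shared across both MDPs, $\delta_0 = \barV[\hat{M}]{\pi} - \barV[M]{\pi}$, which is the claim.

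The computation itself is routine; the only genuine subtlety, and the point I would watch most carefully, is the bookkeeping of the two distinct roles of $\hat{\trans}$ and $\trans$. The value function $\V{\hat{M}}{}$ is the fixed point of $\hat{\trans}$'s Bellman operator, yet the expectation that drives the telescoping must be taken with respect to the true transition $\trans$ so that the next-state law matches $s_{h+1}$ on the $M$-rolled trajectory. Performing the add-and-subtract in the opposite order would instead leave $\V{M}{}$ inside the value-targeted error and break the clean matching with the trajectory's dynamics.
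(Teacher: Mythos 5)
Your proof is correct and follows essentially the same route as the paper: the same Bellman-recursion add-and-subtract of $\trans \V{\hat{M}}{\pi,h+1}(x_h)$ (the paper's Lemma~\ref{lem:decomposition}), the same observation that the remainder $\trans\Delta_{h+1}(x_h) - \Delta_{h+1}(s_{h+1})$ has zero conditional expectation along the $M$-rolled trajectory (the paper's Lemma~\ref{lem:martingale-diff}), and the same telescoping from $h=0$ to $H-1$ with $\delta_H = 0$. Your closing remark about the direction of the add-and-subtract correctly identifies the only subtle point.
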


\subsection{Value-targeted model learning}
\label{sec:value-targeted-model}
As shown in \cref{lem:simulation} and \eqref{eq:regret-decompose}, the value-targeted model error is a sufficient condition for bounding estimation error and thus bounding the regret. Then, we conclude the following rule:
\begin{center}
  \emph{A virtual model is a desideratum for a task if it is accurate enough to help predict future high values.}
\end{center}
We can also treat the value-targeted model error as a tracking signal for the learning progress of a model-based RL algorithm.
Motivated by this observation,
through this work, we take the \emph{value-target perspective} for model-based learning. This perspective is related to the value-targeted regression \citep{ayoub2020model}.

Let $\Pc$ be a general transition function class.
Let $\mathcal{B}(\states, H)$ denotes the set of real-valued measurable functions with domain $\states$ that are bounded by $H$.
Let $\mathcal{V} \subseteq \mathcal{B}$ be the set of value functions.
For any tuple $ x = (h, s, a) \in \mathcal{X}$, any value function $V \in \mathcal{V}$, and any underlying transition model $\trans \in \Pc$, let the next-state be $s'_x \sim \trans(x)$ and let the value at sampled next-state be $Y = V(s'_x)$.
Let the expected value at next-state given the $x$ and $V$ be $f_{\trans}(x, V) = \E[s'_x \sim \trans(x)]{V(s'_x)}$.
Note that the input $(x, V)$ and transition model $\trans$ can be random variables.
The conditional expectation and variance of the outcome $Y$ given input become
\begin{align*}
  \E{Y \given \trans, x, V}   & = f_{\trans}(x, V),                     \\
  \var{Y \given \trans, x, V} & = \var[s'_x \sim \trans( x )]{V(s'_x)}.
\end{align*}
From the \emph{value-targeted perspective}, the \emph{information acquisition} of transition model can be made possible through the sequence of input $(x, V)$ and observed outcome $Y = V(s'_{x})$:
\begin{align}
  \label{eq:value-predict-pers}
  Y = f_{\trans}(x, V) + Z,
\end{align}
where $Z$ is the noise term with conditional variance $\var{Z \given \trans, x, V} = \var{Y \given \trans, x, V}$. The noise $Z$ involves the transitional noise when sampling next state and the value at the sampled state.

To explicitly represent the input and outcome sequentially, we define a new notion of history.
Denote $\hat{V}_{\ell, h} := \V{\hat{M}_{\ell}}{\pi_{\ell}, h}$ for $h \in [H]$ and let $\hat{V}_{\ell, H + 1} = 0$ and $a_{\ell, H} = a \in \actions$ be constant dummy variables. We are ready to define the \emph{value-augmented observations} up to stage $h$ in episode $\ell$,
\begin{align*}
  \hat{O}_{\ell, h}
   & := ( s_{\ell, 0}, a_{\ell, 0}, \hat{V}_{\ell, 1}, \ldots, s_{\ell, h}, a_{\ell, h}, \hat{V}_{\ell, h+1}) \\
   & = ( x_{\ell, 0}, \hat{V}_{\ell, 1}, \ldots, x_{\ell, h}, \hat{V}_{\ell, h+1})
\end{align*}
where the short notation is $x_{\ell, h} := (h, s_{\ell, h}, a_{\ell, h})$.
We define the \emph{history} up to the beginning of episode $\ell$ as
\[
  \hist{\ell} := ( \pi_{1}, \hat{O}_{1, H}, \ldots, \pi_{\ell-1}, \hat{O}_{\ell-1, H} )
\]
and the \emph{history} up to the stage $h$ of episode $\ell$ as
\[
  \hist{\ell, h} := ( \hist{\ell}, \pi_{\ell}, \hat{O}_{\ell, h}) .
\]
By definition, the input $(x_{\ell, h}, \hat{V}_{\ell, h+1})$ at stage $h$ of episode $\ell$ is $\sigma(\hist{\ell,h})$-measurable while the outcome $Y_{\ell, h+1} := \hat{V}_{\ell, h+1}(s_{\ell, h+1})$ is $\sigma(\hist{\ell, h+1})$-measurable. This is a sequential representation of the information process and will be useful throughout the presentation.

Similarly to \cref*{lem:simulation}, we have a generic regret decomposition from value-targeted perspective.
\begin{lemma}[Estimation decomposition conditioned on history]
  \label{lem:simulation-history}
  Denote the estimation error in stage $h$ of episode $\ell$ by $\Delta_{\ell, h}(s) = V^{\hat{M}_{\ell}}_{{\pi}_{\ell}, h}(s) - V^{M}_{{\pi}_{\ell}, h}(s)$ for any fixed state $s \in \states$, we can decompose $\E[\ell, h]{ \Delta_{\ell, h}(s_{\ell, h}) }$ as
  \begin{align*}
    \E[\ell, h]{ \sum_{j=h}^{H-1} \E[\ell, j]{ \left( \hat{\trans}_{\ell} - \trans \right) \hat{V}_{\ell, j+1}(x_{\ell, j} ) } }.
  \end{align*}
\end{lemma}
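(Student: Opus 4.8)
The plan is to prove the identity by backward induction on the stage $h$, built on a one-step Bellman recursion for the estimation error $\Delta_{\ell,h}$ together with the tower property across the nested filtration $\{\hist{\ell,h}\}_{h}$. First I would write the Bellman equations for $\hat{V}_{\ell,h} = \V{\hat{M}_\ell}{\pi_\ell,h}$ under the sampled model $\hat{M}_\ell$ and for $\V{M}{\pi_\ell,h}$ under the true model $M$, both driven by the policy $\pi_\ell$. Subtracting them and adding and subtracting $\trans \hat{V}_{\ell,h+1}$ yields the pointwise recursion $\Delta_{\ell,h}(s) = (\hat{\trans}_{\ell} - \trans)\hat{V}_{\ell,h+1}(x) + \trans\,\Delta_{\ell,h+1}(x)$, where $x = (h,s,\pi_\ell(s))$. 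The base case is $\Delta_{\ell,H}\equiv 0$, since the terminal values vanish, which matches the empty sum on the right-hand side.

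For the inductive step I would condition on $\hist{\ell,h}$ and treat the two terms of the recursion separately. The first term already contributes the $j=h$ summand. The crux, and the main obstacle, is the second term: I must show $\E[\ell,h]{\trans\,\Delta_{\ell,h+1}(x_{\ell,h})} = \E[\ell,h]{\Delta_{\ell,h+1}(s_{\ell,h+1})}$. Here the measurability bookkeeping matters most. Conditioned on $\hist{\ell,h}$, the sampled value $\hat{V}_{\ell,h+1}$ is measurable because it belongs to $\hat{O}_{\ell,h}$, but the true value $\V{M}{\pi_\ell,h+1}$ is not, so $\Delta_{\ell,h+1}$ retains genuine dependence on the random $M$; moreover the operator $\trans$ and the function $\Delta_{\ell,h+1}$ depend on the same $M$. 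The resolution is to condition further on $(M, s_{\ell,h}, a_{\ell,h})$: since the trajectory is rolled out under the true MDP, $s_{\ell,h+1}\sim\trans(x_{\ell,h})$, so $\E{\Delta_{\ell,h+1}(s_{\ell,h+1})\given \hist{\ell,h}, M, s_{\ell,h}, a_{\ell,h}} = \trans\,\Delta_{\ell,h+1}(x_{\ell,h})$, and integrating out $(M, s_{\ell,h}, a_{\ell,h})$ against their posterior given $\hist{\ell,h}$ establishes the claimed equality.

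Finally I would invoke the tower property. Because $\hist{\ell,h}\subseteq\hist{\ell,h+1}$, I can write $\E[\ell,h]{\Delta_{\ell,h+1}(s_{\ell,h+1})} = \E[\ell,h]{\E[\ell,h+1]{\Delta_{\ell,h+1}(s_{\ell,h+1})}}$, apply the induction hypothesis to the inner expectation, and then collapse the nested conditioning by a second use of the tower property to obtain $\E[\ell,h]{\sum_{j=h+1}^{H-1}(\hat{\trans}_{\ell}-\trans)\hat{V}_{\ell,j+1}(x_{\ell,j})}$. Adding the $j=h$ term closes the induction and gives $\E[\ell,h]{\Delta_{\ell,h}(s_{\ell,h})} = \E[\ell,h]{\sum_{j=h}^{H-1}(\hat{\trans}_{\ell}-\trans)\hat{V}_{\ell,j+1}(x_{\ell,j})}$. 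To land exactly on the stated form, I would re-insert the inner conditional expectations $\E[\ell,j]{\cdot}$ inside the sum, which is legitimate by the tower property since $\hist{\ell,h}\subseteq\hist{\ell,j}$ for each $j\ge h$. The only real care needed throughout is tracking which quantities are history-measurable versus model-dependent; the algebra of the recursion itself is routine.
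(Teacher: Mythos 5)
Your proposal is correct and follows essentially the same route as the paper: the one-step Bellman recursion $\Delta_{\ell,h}(s) = (\hat{\trans}_{\ell}-\trans)\hat{V}_{\ell,h+1}(x) + \trans\,\Delta_{\ell,h+1}(x)$ is exactly the paper's Lemma~\ref{lem:decomposition}, and your step of conditioning further on $M$ to identify $\E[\ell,h]{\trans\,\Delta_{\ell,h+1}(x_{\ell,h})}$ with $\E[\ell,h]{\Delta_{\ell,h+1}(s_{\ell,h+1})}$ is precisely the content of the paper's martingale-difference Lemma~\ref{lem:martingale-diff} applied to the term $d_{\ell,h}(\Delta_{\ell,h+1})$. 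Unrolling via the tower property over the nested histories then matches the paper's proof verbatim in substance.
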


\subsection{PSRL from value-targeted perspective}
In this section, we state a key observation for analysis and show the posterior distribution for the unknown true MDP $M$ given the \emph{observational history} is the same as the posterior given the \emph{history}.
\begin{lemma}
  \label{lem:aug-posterior-equiv}
  For any $\sigma(\ohist{\ell}, \uhist{\ell}, U_{\ell})$-measurable random variable $\xi$, we have
  $\prob(M \in \cdot \given \ohist{\ell}) = \prob(M \in \cdot \given \xi, \ohist{\ell}).$
\end{lemma}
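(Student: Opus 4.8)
The plan is to reduce the stated posterior identity to a single conditional-independence claim and then to establish that claim from the independence structure of the generative process. Since $\xi$ is $\sigma(\ohist{\ell}, \uhist{\ell}, U_{\ell})$-measurable, it suffices to prove that the true model is conditionally independent of all the algorithmic randomness given the observational history, namely
\[
  M \independent (\uhist{\ell}, U_{\ell}) \given \ohist{\ell}.
\]
Indeed, once this holds, $M$ is conditionally independent of the entire $\sigma$-field $\sigma(\ohist{\ell}, \uhist{\ell}, U_{\ell})$ given $\ohist{\ell}$, so in particular $M \independent \xi \given \ohist{\ell}$, which is exactly the assertion $\prob(M \in \cdot \given \ohist{\ell}) = \prob(M \in \cdot \given \xi, \ohist{\ell})$.

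First I would dispatch the $U_{\ell}$ factor, which is the easy part. By the modeling assumption that $U_{\ell}$ is jointly independent of $\{U_{k}\}_{k \neq \ell}$, the environmental randomness, and the true MDP $M$, and since $\ohist{\ell}$ is a deterministic function of $M$, the environmental randomness, and $\uhist{\ell} = (U_1, \ldots, U_{\ell-1})$ (note $U_{\ell}$ is not used in generating episodes $1,\ldots,\ell-1$), the variable $U_{\ell}$ is independent of the triple $(M, \ohist{\ell}, \uhist{\ell})$. Conditioning on $\ohist{\ell}$ therefore leaves $U_{\ell}$ independent of $(M, \uhist{\ell})$, so it contributes no information about $M$, and the problem collapses to showing $M \independent \uhist{\ell} \given \ohist{\ell}$.

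The crux — and the step I expect to be the main obstacle — is the \emph{past} algorithmic randomness $\uhist{\ell}$, because it genuinely shaped which policies were deployed and hence which trajectories were observed; the point to make precise is that once the realized trajectories are in hand this information becomes redundant. I would do this by factoring the joint law of $(M, \uhist{\ell}, \ohist{\ell})$. The essential observation is that the observational history records the actions actually taken, so the likelihood of a realized history splits into (i) a factor $g(M, \ohist{\ell})$ collecting the prior on $M$ together with the products of transition probabilities $\trans(s_{k,h+1}\given s_{k,h}, a_{k,h})$ evaluated at the \emph{recorded} state–action pairs, which depends on the algorithmic randomness only through quantities already contained in $\ohist{\ell}$; and (ii) a factor $c(\uhist{\ell}, \ohist{\ell})$ collecting the law of the algorithmic randomness together with the indicators enforcing that each recorded action agrees with the policy the algorithm would have produced, a factor that is free of $M$. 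By the Fisher–Neyman-type factorization criterion for conditional independence, a joint law of the form $g(M, \ohist{\ell})\,c(\uhist{\ell}, \ohist{\ell})$ yields $M \independent \uhist{\ell} \given \ohist{\ell}$.

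Combining the two pieces gives $M \independent (\uhist{\ell}, U_{\ell}) \given \ohist{\ell}$ and hence the lemma. The delicate points to get right in the write-up are: verifying that it is precisely the recording of the actions that decouples the policy-dependence (had only the states been observed, conditioning on the trajectory would in general leave $M$ and $\uhist{\ell}$ dependent through the unobserved played actions); and handling the measure-theoretic bookkeeping, since $\states$ and $\actions$ may be uncountable — so the densities above should be understood against a fixed dominating product measure, or, equivalently, the factorization argument should be phrased directly at the level of conditional independence of $\sigma$-fields rather than via explicit densities.
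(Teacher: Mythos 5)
Your proof is correct and follows the same skeleton as the paper's: both arguments reduce the claim to the single conditional independence $M \independent (\uhist{\ell}, U_{\ell}) \given \ohist{\ell}$ and then use the $\sigma(\ohist{\ell},\uhist{\ell},U_{\ell})$-measurability of $\xi$ to pass from conditioning on $(\ohist{\ell},\uhist{\ell},U_{\ell})$ to conditioning on $(\xi,\ohist{\ell})$. Where you differ is in how the key step is justified. The paper asserts the equality $\prob(M\in\cdot\given \ohist{\ell})=\prob(M\in\cdot\given \ohist{\ell},\uhist{\ell},U_{\ell})$ directly from the marginal independences $M\independent\uhist{\ell}$, $M\independent U_{\ell}$ and $U_{\ell}\independent\uhist{\ell}$; strictly speaking, those marginal independences alone do not yield the required \emph{conditional} independence given $\ohist{\ell}$, because $\ohist{\ell}$ is a common function of $M$ and $\uhist{\ell}$. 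Your likelihood-factorization argument --- writing the joint law as $g(M,\ohist{\ell})\,c(\uhist{\ell},U_{\ell},\ohist{\ell})$, where $g$ collects the prior and the transition probabilities evaluated at the \emph{recorded} state--action pairs and $c$ collects the law of the algorithmic randomness together with the indicators matching recorded actions to the deployed policies --- supplies exactly the justification the paper leaves implicit, and your remark that the decoupling hinges on the actions being part of $\ohist{\ell}$ identifies the genuinely load-bearing feature of the setup. In short, your route is the more complete one; the paper's version buys only brevity.
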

\cref{lem:aug-posterior-equiv} is due to the following key observations.
\begin{align*}
  \prob(M \in \cdot \given \ohist{\ell})
   & = \prob(M \in \cdot \given \ohist{\ell}, \uhist{\ell}, U_{\ell} )      \\
   & = \prob(M \in \cdot \given \xi, \ohist{\ell}, \uhist{\ell}, U_{\ell} ) \\
   & = \prob(M \in \cdot \given \xi, \ohist{\ell}).
\end{align*}
The first equality is due to the independence between $M \independent \uhist{\ell}$, $M \independent U_{\ell}$ and $U_{\ell} \independent \uhist{\ell}$.
The second equality is by definition $\xi$ is $\sigma(\ohist{\ell}, \uhist{\ell}, U_{\ell})$-measurable.
The third equality is also due to the independent algorithmic randomness.
For any historical episode $\ell' < \ell$, the historical policy $\pi_{\ell'}$ and the historical imagined value functions $\{\hat{V}_{\ell', h} \}_{h=0}^{H+1}$ are $\sigma(\ohist{\ell'}, \uhist{\ell'}, U_{\ell'})$-measurable by definition and thus $\sigma(\ohist{\ell}, \uhist{\ell}, U_{\ell})$-measurable.
Recall the difference between of $\ohist{\ell}$ and $\hist{\ell}$ is exactly the historical policies and historical imagined value functions. Therefore, by \cref{lem:aug-posterior-equiv}, the posterior distribution of true MDP $M$ conditioned on the observational history is equivalent to the one conditioned on history:
$\prob( M \in \cdot \given \ohist{\ell} ) = \prob( M \in \cdot \given \hist{\ell}).$
By posterior sampling, we have $\prob ( \hat{M}_{\ell} \in \cdot \given{ \ohist{\ell}} ) = \prob \left( M \in \cdot \given \ohist{\ell} \right)$. Therefore, for any $\ohist{\ell}$-measurable function $g$, we conclude $\prob (g(M) \in \cdot \given \ohist{\ell}) = \prob(g(M^\ell) \in \cdot \given \ohist{\ell})$.
This is noted by \citep{osband2013more,russo2014learning} and has an immediate consequence on the pessimism term:
\[
  \E{ \barV[M]{\pi^*} - \barV[\hat{M}_{\ell}]{\pi_\ell} \given[\Bigg] \hist{\ell} } = \E{ \barV[M]{\pi^*} - \barV[\hat{M}_{\ell}]{\pi_\ell} \given[\Bigg] \ohist{\ell} } = 0.
\]
Due to the posterior equivalence on MDP $M$ between $\ohist{\ell}$-conditioning and $\hist{\ell}$-conditioning and to promote the value-targeted perspective of model learning, we use $\hist{\ell}$-conditioning later in the presentation.
Also, for the purpose of simplicity, we sometimes use the following short notations
$\prob_{\ell, h}( \cdot), \E[\ell, h]{ \cdot}$ and $\var[\ell, h]{ \cdot}$ to donote conditional probability $\prob( \cdot \given \hist{\ell, h})$, conditional expectation $ \E{ \cdot \given \hist{\ell, h} } $ and conditional variance $\var{\cdot \given \hist{\ell, h}}$ respectively.

\subsection{Posterior variance reduction}
\label{sec:psrl-linear-value-pers}
In linear mixture MDPs, to mathematically quantify the reduction of uncertainty of the unknown environment when new information gathered in, we also relate the posterior variance of $\rmGamma_{\ell,h}$ in \cref{def:cov_model} to value targeted perspective.
First, we notice by \cref{lem:aug-posterior-equiv}
\begin{align*}
  \rmGamma_{\ell, h} = \var{ \theta^*_h \given \ohist{\ell} } = \var{ \theta^*_h \given \hist{\ell} }.
\end{align*}
Recall the definition of function $f_{\trans}$ in \cref{sec:value-targeted-model}.
Specifically, for the class of linear mixture MDPs, i.e. $\trans(x) = \langle \theta^*_h, \phi(\cdot \given x) \rangle$ for all $x=(h, s, a) \in \Xc$,
\begin{align}
  \label{eq:trans-linear-mixture}
  f_{\trans}(x, V)
   & = \trans V(x)
  = \sum_{s'_x \in \states} \left\langle \theta^*_h, \phi(s'_x \given x) \right\rangle V(s'_x)    \nonumber \\
   & = \langle \theta^*_h, \sum_{s'_x \in \states} \phi(s'_x \given x) V(s'_x) \rangle \nonumber            \\
   & = \langle \theta^*_h, \phi_V(x) \rangle .
\end{align}
Recall the relationship between the input $(x, V)$ and outcome $Y$ in \eqref{eq:value-predict-pers}, combined with \eqref{eq:trans-linear-mixture}, we have the following important observations,
\begin{align*}
  Y = \langle \theta^*_h, \phi_V(x) \rangle + Z
\end{align*}
is a noisy linear model with non-Gaussian prior over $\theta^*_h$ and non-Gaussian noise $Z$.
\begin{definition}
  \label{def:input_feature_history}
  Define the short notation $X_{\ell, h} := \phi_{\hat{V}_{\ell, h+1}}(x_{\ell, h}) = \phi_{\hat{V}_{\ell, h+1}}(h, s_{\ell, h}, a_{\ell, h})$.
\end{definition}
\begin{definition}
  \label{def:response_history}
  Define $Y_{\ell, h+1} := \hat{V}_{\ell, h+1}(s_{\ell, h+1})$.
\end{definition}
Recall the definition of value-augmented history $\hist{\ell}$ and $\hist{\ell, h}$ in \cref{sec:value_pers},
we can see $X_{\ell, h}$ is $\sigma(\hist{\ell, h})$-measurable since $X_{\ell, h}$ is a deterministic function of $(s_{\ell, h}, a_{\ell, h}, \hat{V}_{\ell, h+1})$ which is included in $\hist{\ell, h}$.
Similarly, since $Y_{\ell, h+1}$ is a deterministic function of $(\hat{V}_{\ell, h+1}, s_{\ell, h+1})$ which is included in $\hist{\ell, h+1}$, we conclude $Y_{\ell, h+1}$ is $\sigma(\hist{\ell, h+1})$-measurable.
Then with the fact that $s_{\ell, h+1} \sim \trans(x_{\ell, h})$, by \eqref{eq:trans-linear-mixture},
we can verify that,
\begin{align*}
  \E{ Y_{\ell, h+1} \given \Theta^*, \hist{\ell, h} } = \left\langle \theta^*_h, X_{\ell, h} \right\rangle
\end{align*}
and the variance of $Y_{\ell, h+1}$ conditioned on history $\hist{\ell, h}$ and true model $\Theta^*$ is defined as $\sigma_{\ell, h}^2$ which is
\begin{align*}
  \var{Y_{\ell, h+1} \mid \Theta^*, \hist{\ell, h} }
  = \var{\hat{V}_{\ell, h+1}(s_{\ell, h+1}) \given M, \hist{\ell, h}}
\end{align*}
By \cref{asmp:mutual-independence}, conditioned on history $\hist{\ell}$, the trajectory $(s_{\ell, 0}, a_{\ell, 0}, \ldots, s_{\ell, h}, a_{\ell, h})$ up to stage $h$ of episode $\ell$ is independent of true transition kernel $\trans(h, \cdot, \cdot)$ and thus independent of true model parameter $\theta^*_h$.
By definition, conditioned on history $\hist{\ell}$, the policy $\pi_{\ell}$ and imagined value functions $\{ \hat{V}_{\ell, h} \}_{h=1}^{H}$ are random only through its dependence on algorithmic randomness $U_{\ell}$, which is also independent of $\theta^*_h$. Therefore, we have the following important relationship on the posterior variance,
\begin{align*}
   & \var{\theta_h^* \given \hist{\ell, h}}                                                         = \var{\theta_h^* \given \hist{\ell}, \pi_{\ell}, \hat{O}_{\ell, h}} \\
   & = \var{\theta_h^* \given \hist{\ell}, \pi_{\ell}, s_{\ell, 0}, a_{\ell, 0}, \hat{V}_{\ell, 1}, \ldots, s_{\ell, h}, a_{\ell, h}, \hat{V}_{\ell, h+1}}               \\
   & = \var{\theta_h^* \given \hist{\ell} } =: \rmGamma_{\ell, h}
\end{align*}
Similarly, conditioned on history $\hist{\ell, h+1}$, the trajectory $(s_{\ell, h+2}, a_{\ell, h+2}, \ldots, s_{\ell, H-1}, a_{\ell, H-1}, s_{\ell, H})$ after step $h+1$ in episode $\ell$ is independent of $\theta^*_h$. Then, we have,
\begin{align*}
  \var{\theta_h^* \given \hist{\ell, h+1} }
   & = \var{\theta_h^* \given \hist{\ell}, \pi_{\ell}, \hat{O}_{\ell, H}} \\
   & = \var{\theta_h^* \given \hist{\ell + 1} } =: \rmGamma_{\ell+1, h}
\end{align*}
Now we are ready to apply \cref{lem:vr-posterior} (posterior variance reduction lemma for general prior and general likelihood in noisy linear model) to the above noise linear system and obtain the \cref{thm:vd}.
It shows that, in expectation, the posterior covariance of the true model parameters will be reduced.
The reduction depends on the variance in the direction of the value-correlated feature vector quantified with the prior distribution of true model parameters.
The reduction also depends on the variance of value due to the transitional noise.
Denote short notations $\sigma^2_{\ell, h}$ $ := \var[\ell, h]{ \hat{V}_{\ell, h+1}(s_{\ell, h+1}) \given M } $ and  $X_{\ell, h} := \phi_{\hat{V}_{\ell, h+1}}(x_{\ell, h})$.
\begin{blockquote}
  \begin{Theorem}[Posterior variance reduction]
    \label{thm:vd}
    \begin{align*}
      \E[\ell, h]{ \rmGamma_{\ell+1, h} } \preceq \rmGamma_{\ell, h} - \frac{ \rmGamma_{\ell, h} X_{\ell, h} X_{\ell, h}^{\top} \rmGamma_{\ell, h} }{ \E[\ell, h]{{\sigma}_{\ell, h}^2} + X_{\ell, h}^{\top} \rmGamma_{\ell, h} X_{\ell, h}  },
    \end{align*}
    where $A \preceq B$ for any two matrices $A$ and $B$ iff $B - A$ is a positive semi-definite matrix.
  \end{Theorem}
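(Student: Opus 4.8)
The plan is to recognize \cref{thm:vd} as the application of a general one-step second-moment bound for a Bayesian update in a noisy linear model, using only the first two conditional moments of the observation rather than any closed form of the (non-Gaussian) posterior. Throughout I condition on $\hist{\ell,h}$, under which $X := X_{\ell,h}$, $\Gamma := \rmGamma_{\ell,h}$, and $\mu := \E[\ell,h]{\theta^*_h}$ are measurable and hence treated as constants. I would invoke the facts already established above: $Y_{\ell,h+1} = \langle \theta^*_h, X\rangle + Z$ with $\E{Z \given \Theta^*, \hist{\ell,h}} = 0$ and $\E{Z^2 \given \Theta^*, \hist{\ell,h}} = \sigma_{\ell,h}^2$, together with the identity $\var{\theta^*_h \given \hist{\ell,h+1}} = \rmGamma_{\ell+1,h}$.

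First I would pass from full conditioning to conditioning on the single scalar observation $Y_{\ell,h+1}$. Since $\sigma(\hist{\ell,h}, Y_{\ell,h+1}) \subseteq \sigma(\hist{\ell,h+1})$, the law of total variance combined with the tower property yields the Loewner monotonicity $\E[\ell,h]{\rmGamma_{\ell+1,h}} = \E[\ell,h]{\var{\theta^*_h \given \hist{\ell,h+1}}} \preceq \E[\ell,h]{\var{\theta^*_h \given \hist{\ell,h}, Y_{\ell,h+1}}}$, because the omitted term $\E[\ell,h]{\var{\E{\theta^*_h \given \hist{\ell,h+1}} \given \hist{\ell,h}, Y_{\ell,h+1}}}$ is positive semidefinite; richer conditioning cannot increase the expected posterior covariance. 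It therefore suffices to bound the right-hand side.

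The core step is a variational characterization of conditional expectation. Writing $\hat\theta := \E{\theta^*_h \given \hist{\ell,h}, Y_{\ell,h+1}}$, the conditional expectation is the minimizer of mean-squared error in the Loewner order among all measurable functions of $(\hist{\ell,h}, Y_{\ell,h+1})$, the cross term vanishing by the tower property. Hence $\E[\ell,h]{\var{\theta^*_h \given \hist{\ell,h}, Y_{\ell,h+1}}} = \E[\ell,h]{(\theta^*_h - \hat\theta)(\theta^*_h - \hat\theta)^\top} \preceq \E[\ell,h]{(\theta^*_h - \tilde\theta)(\theta^*_h - \tilde\theta)^\top}$ for the affine estimator $\tilde\theta = \mu + c\,(Y_{\ell,h+1} - \langle \mu, X\rangle)$ with any $c \in \R^d$. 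I would then compute the two moments that govern the optimal $c$: using $\E{Z \given \Theta^*, \hist{\ell,h}} = 0$ gives $\operatorname{Cov}_{\ell,h}(\theta^*_h, Y_{\ell,h+1}) = \Gamma X$, and the law of total variance applied to $Y_{\ell,h+1}$ conditioned on $\Theta^*$ gives $\var[\ell,h]{Y_{\ell,h+1}} = X^\top \Gamma X + \E[\ell,h]{\sigma_{\ell,h}^2}$. Completing the square in $c$ shows the affine MSE is minimized at $c = \Gamma X / (X^\top\Gamma X + \E[\ell,h]{\sigma_{\ell,h}^2})$, with minimal value exactly $\rmGamma_{\ell,h} - \rmGamma_{\ell,h} X_{\ell,h} X_{\ell,h}^\top \rmGamma_{\ell,h} / (\E[\ell,h]{\sigma_{\ell,h}^2} + X_{\ell,h}^\top\rmGamma_{\ell,h}X_{\ell,h})$, which is the claimed bound.

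The main obstacle, and the reason the textbook Gaussian computation does not apply directly, is the non-Gaussian prior and non-Gaussian noise: the posterior mean need not be affine in $Y_{\ell,h+1}$ and the posterior covariance is genuinely random, so $\rmGamma_{\ell+1,h}$ admits no closed form. The argument circumvents this entirely by using only the Loewner-order optimality of the conditional expectation against a well-chosen affine comparator, so nothing beyond the first two conditional moments enters. The two points needing care are (i) verifying that conditional expectation is MSE-optimal in the matrix sense, not merely for scalar risk, and (ii) confirming $\operatorname{Cov}_{\ell,h}(\theta^*_h, Y_{\ell,h+1}) = \Gamma X$, whose correctness rests precisely on the conditional-mean-zero property of the noise $Z$ given $\Theta^*$.
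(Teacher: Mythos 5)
Your proposal is correct and follows essentially the same route as the paper's proof of its general variance-reduction lemma (\cref{lem:vr-posterior}): both arguments exploit the MSE-optimality of the conditional expectation against an affine-in-$Y_{\ell,h+1}$ comparator, compute $\operatorname{Cov}_{\ell,h}(\theta^*_h, Y_{\ell,h+1}) = \rmGamma_{\ell,h}X_{\ell,h}$ and $\var[\ell,h]{Y_{\ell,h+1}} = X_{\ell,h}^\top\rmGamma_{\ell,h}X_{\ell,h} + \E[\ell,h]{\sigma_{\ell,h}^2}$ from the first two conditional moments, and finish by completing the square. The only differences are cosmetic: you complete the square in a vector coefficient $c$ and work directly in the Loewner order, whereas the paper fixes a direction $v$, takes an infimum over scalar multiples $aY_{t+1}$ of the (recentered) observation, and recenters $\theta^*$ at the end.
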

\end{blockquote}

\begin{remark}
  \label{rem:sigmabar}
  For any $\bar{\sigma}_{\ell, h}$ such that $\bar{\sigma}^2_{\ell, h} \ge \E[\ell, h]{\sigma^2_{\ell, h}}$ almost surely, \cref{thm:vd} implies
  \begin{align*}
    \E[\ell, h]{ \rmGamma_{\ell+1, h} } \preceq \rmGamma_{\ell, h} - \frac{ \rmGamma_{\ell, h} X_{\ell, h} X_{\ell, h}^{\top} \rmGamma_{\ell, h} }{ \bar{\sigma}_{\ell, h}^2 + X_{\ell, h}^{\top} \rmGamma_{\ell, h} X_{\ell, h}  }.
  \end{align*}
  Further, by Sherman-Morrison formula, \cref{thm:vd} implies (assume invertible)
  \begin{align*}
    \E[\ell, h]{ \rmGamma_{\ell+1, h} }^{-1} \succeq \rmGamma_{\ell, h}^{-1} + \frac{1}{\bar{\sigma}^2_{\ell, h}}X_{\ell, h} X_{\ell, h}^\top.
  \end{align*}
  In the analysis, we choose $\bar{\sigma}_{\ell, h}$ s.t. $\bar{\sigma}^2_{\ell, h} := \E[\ell, h]{\sigma_{\ell, h}^2} \vee \sigma^2_{\min}$.
  We choose ${\sigma}_{\min} = H$.
  Note that the \cref{thm:vd} still holds if the parameter $\hat{\theta}_{\ell, h}$ does not induce a proper transition kernel.
\end{remark}

\subsection{Decouple regret to posterior variance}
\label{sec:regret-variance}
We are now ready to see how the regret is being translated to the posterior variance over models from the value-targeted perspective. All details of analysis can be found in \cref{sec:regret_decompose}.
Denote the estimation error in stage $h$ of episode $\ell$ shortly by $\Delta_{\ell, h}(s) = V^{\hat{M}_{\ell}}_{{\pi}_{\ell}, h}(s) - V^{M}_{{\pi}_{\ell}, h}(s)$ for any fixed state $s \in \states$.
The first step is to translated regret into inner product between model parameters and value-correlated features under linear mixture model assumptions.
\begin{lemma}[Regret to model posterior deviation over feature directions]
  \label{lem:linear-decomposition-0}
  \begin{align*}
    \E[\ell, h]{ \abs{ \Delta_{\ell, h}(s_{\ell, h}) } }
     & \le \E[\ell, h]{ \sum_{j=h}^{H-1} \E[\ell, j]{ \abs{\left\langle \hat{\theta}_{\ell, j} - \theta^*_j, X_{\ell, j} \right\rangle } } }.
  \end{align*}
\end{lemma}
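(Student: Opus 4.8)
The plan is to prove the bound by backward induction on the stage index $h$, descending from $h=H$ down to $h=0$ and introducing the absolute values one stage at a time. This stagewise strengthening is precisely what separates the claim from the signed identity in \cref{lem:simulation-history}: since neither $\abs{\sum_j(\cdot)}$ nor $\abs{\E{\cdot}}$ commutes with the absolute value, I cannot simply insert bars into that identity, and must instead re-derive a one-step recursion and apply the triangle and Jensen inequalities at every stage.

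First I would record the one-step Bellman decomposition of the estimation error. Since $\pi_\ell$ is evaluated under both $\hat M_\ell$ and $M$, the immediate rewards cancel; writing $a=\pi_{\ell,h}(s)$ and $x=(h,s,a)$,
\[
  \Delta_{\ell,h}(s) = \left(\hat\trans_\ell - \trans\right)\hat V_{\ell,h+1}(x) + \trans\,\Delta_{\ell,h+1}(x).
\]
Invoking the linear-mixture identity \eqref{eq:trans-linear-mixture} for both $\hat\trans_\ell$ and $\trans$ turns the first summand into $\left\langle \hat\theta_{\ell,h}-\theta^*_h,\ \phi_{\hat V_{\ell,h+1}}(x)\right\rangle$, so the triangle inequality followed by Jensen's inequality on the transition term yields
\[
  \abs{\Delta_{\ell,h}(s)} \le \abs{\left\langle \hat\theta_{\ell,h}-\theta^*_h,\ \phi_{\hat V_{\ell,h+1}}(x)\right\rangle} + \E[s'\sim\trans(x)]{\abs{\Delta_{\ell,h+1}(s')}}.
\]
Evaluating at $s=s_{\ell,h}$ turns $x$ into $x_{\ell,h}$ and $\phi_{\hat V_{\ell,h+1}}(x_{\ell,h})$ into $X_{\ell,h}$ by \cref{def:input_feature_history}.

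The inductive bookkeeping, which I expect to be the main obstacle, is to take $\E[\ell,h]{\cdot}$ of this pointwise recursion and match the result to the nested right-hand side. The base case $h=H$ is immediate, as $\Delta_{\ell,H}\equiv 0$ and the sum is empty. For the step I would apply the tower property twice: once to identify $\E[\ell,h]{\E[s'\sim\trans(x_{\ell,h})]{\abs{\Delta_{\ell,h+1}(s')}}}$ with $\E[\ell,h]{\abs{\Delta_{\ell,h+1}(s_{\ell,h+1})}}$, which is valid because, conditioned on $\hist{\ell,h}$, the realized next state $s_{\ell,h+1}$ is drawn from $\trans(x_{\ell,h})$ once the random true model $M$ (hence $\trans$) is fixed; and once more, using $\hist{\ell,h}\subseteq\hist{\ell,h+1}$, to collapse $\E[\ell,h]{\E[\ell,h+1]{\cdot}}$ into $\E[\ell,h]{\cdot}$ after inserting the induction hypothesis at stage $h+1$. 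The delicate point is that $\Delta_{\ell,h+1}$ depends on the random variables $M$ and $\hat M_\ell$ and is therefore not $\hist{\ell,h+1}$-measurable; each $\E[\ell,\cdot]{\cdot}$ must be read as integrating over the posterior of $(M,\hat M_\ell)$ together with the environmental randomness, so I would track carefully which randomness each expectation absorbs to keep the tower applications legitimate. Adjoining the $j=h$ term $\E[\ell,h]{\abs{\langle \hat\theta_{\ell,h}-\theta^*_h, X_{\ell,h}\rangle}}$ to the induction hypothesis for stage $h+1$ then reproduces exactly the claimed sum over $j=h,\dots,H-1$.
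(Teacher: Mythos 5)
Your proposal is correct and follows essentially the same route as the paper: the one-step Bellman decomposition $\Delta_{\ell,h}(s)=(\hat\trans_\ell-\trans)\hat V_{\ell,h+1}(x)+\trans\Delta_{\ell,h+1}(x)$ (the paper's Lemma~\ref{lem:decomposition}), the triangle/Jensen step to pass to absolute values, the martingale/tower identification of $\trans\abs{\Delta_{\ell,h+1}}(x_{\ell,h})$ with $\E[\ell,h]{\abs{\Delta_{\ell,h+1}(s_{\ell,h+1})}}$ (the paper's Lemma~\ref{lem:martingale-diff}), and the linear-mixture identity \eqref{eq:trans-linear-mixture} turning the model-error term into $\langle\hat\theta_{\ell,h}-\theta^*_h,X_{\ell,h}\rangle$. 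The only difference is organizational: the paper unrolls the recursion in Lemma~\ref{lem:simulation-history} and substitutes the inner-product form at the end, while you substitute it at each stage of an explicit backward induction.
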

The key to relate regret to posterior variance is the decoupling argument.
\begin{blockquote}
  \begin{lemma}[Decoupling lemma]
    \label{lem:linear-decoupling-0}
    Let $\theta^*$, $\hat{\theta}$ and $\phi$ be arbitrary random vectors in $\R^d$. If $\hat{\theta}$ is an i.i.d copy of $\theta^*$, then
    \begin{align*}
      \E{ \abs{ \left\langle \hat{\theta} - \theta^*, \phi \right\rangle } }^2
       & \le 2 d \E{ \phi^{\top} \var{\theta^*} \phi }.
    \end{align*}
    Note that the above inequality holds even when $\phi \in \R^d$ is dependent on $\theta^*$ or $\hat{\theta}$.
  \end{lemma}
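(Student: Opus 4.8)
The plan is to prove the bound via a \emph{whitened} Cauchy--Schwarz inequality that cleanly separates the parameter-deviation factor $\hat\theta - \theta^*$ from the feature factor $\phi$, so that the i.i.d.\ structure of $(\theta^*,\hat\theta)$ disposes of the former while the target quantity $\E{\phi^{\top}\var{\theta^*}\phi}$ is exactly the latter. The reason a naive route fails is instructive: one is tempted to use $\E{\abs{\langle \hat\theta-\theta^*,\phi\rangle}}^2 \le \E{\langle \hat\theta-\theta^*,\phi\rangle^2}$ and expand, but because $\phi$ is allowed to depend on $\theta^*$ and $\hat\theta$, the expansion produces cross terms $\E{(\hat\theta_i-\theta^*_i)(\hat\theta_j-\theta^*_j)\phi_i\phi_j}$ that do not factor, and the independence that would give the clean identity is lost. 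The whitening trick is precisely what decouples these contributions, at the cost of the factor $d$.

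Writing $\Sigma := \var{\theta^*}$ and $\Delta := \hat\theta - \theta^*$, and first assuming $\Sigma$ invertible, I would start from the pointwise identity $\langle \Delta,\phi\rangle = \langle \Sigma^{-1/2}\Delta,\, \Sigma^{1/2}\phi\rangle$ and apply Cauchy--Schwarz to get $\abs{\langle \Delta,\phi\rangle} \le \norm{\Sigma^{-1/2}\Delta}_2 \,\norm{\Sigma^{1/2}\phi}_2$ for every realization, valid no matter how $\phi$ depends on $\theta^*,\hat\theta$. Taking expectations and applying Cauchy--Schwarz in $L^2$ yields $\E{\abs{\langle \Delta,\phi\rangle}} \le \sqrt{\E{\norm{\Sigma^{-1/2}\Delta}_2^2}}\,\sqrt{\E{\norm{\Sigma^{1/2}\phi}_2^2}}$. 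The second factor is immediate since $\norm{\Sigma^{1/2}\phi}_2^2 = \phi^{\top}\Sigma\phi$, giving the right-hand side target. The decoupling happens in the first factor: $\phi$ has vanished, so I can compute $\E{\Delta^{\top}\Sigma^{-1}\Delta} = \Tr\!\big(\Sigma^{-1}\E{\Delta\Delta^{\top}}\big)$ using only the joint law of $(\theta^*,\hat\theta)$. Because $\hat\theta$ is an i.i.d.\ copy of $\theta^*$, expanding $\E{\Delta\Delta^{\top}}$ and using that the cross terms factor through equal means gives $\E{\Delta\Delta^{\top}} = 2\Sigma$, hence $\E{\Delta^{\top}\Sigma^{-1}\Delta} = 2\Tr(\mI) = 2d$. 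Combining the two factors and squaring delivers $\E{\abs{\langle \Delta,\phi\rangle}}^2 \le 2d\,\E{\phi^{\top}\var{\theta^*}\phi}$.

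The one genuinely delicate point I anticipate is that $\Sigma$ need not be invertible, so $\Sigma^{-1/2}$ is undefined. The remedy is to replace it by the Moore--Penrose pseudoinverse square root $\Sigma^{+/2}$ and to observe that $\Delta$ lies almost surely in the range of $\Sigma$: both $\theta^*-\E{\theta^*}$ and $\hat\theta-\E{\hat\theta}$ do, as centered vectors with covariance $\Sigma$, so their difference does as well. On this range $\Sigma^{1/2}\Sigma^{+/2}$ acts as the identity, so $\langle \Delta,\phi\rangle = \langle \Sigma^{+/2}\Delta,\,\Sigma^{1/2}\phi\rangle$ still holds and the pointwise Cauchy--Schwarz step carries over verbatim. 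The trace computation then becomes $\E{\Delta^{\top}\Sigma^{+}\Delta} = 2\Tr(\Sigma^{+}\Sigma) = 2\,\mathrm{rank}(\Sigma) \le 2d$, leaving the final bound unchanged. I expect this degenerate-covariance bookkeeping to be the only subtlety; everything else is routine linear algebra and the two Cauchy--Schwarz applications.
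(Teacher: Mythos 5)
Your proof is correct, but it takes a genuinely different route from the paper's. The paper first establishes a general decoupling inequality (its Lemma on $f_1,f_2$ and i.i.d.\ copies $X_1,X_2$) by expanding $f_1(X_1)^{\top}f_2(X_1)$ in an orthonormal eigenbasis of $\E{f_1(X_1)f_1(X_1)^{\top}}$, applying Jensen's inequality to the average over coordinates, and then Cauchy--Schwarz coordinate-by-coordinate before reassembling the trace; the present lemma is then the specialization $X=\hat{\theta}-\theta^*$, $Z=\phi$ together with the computation $\E{(\hat{\theta}-\theta^*)(\hat{\theta}-\theta^*)^{\top}}=2\var{\theta^*}$. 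You instead whiten once by $\var{\theta^*}^{1/2}$ and apply Cauchy--Schwarz twice (pointwise, then in $L^2$), which isolates $\E{\norm{\Sigma^{-1/2}\Delta}_2^2}=\Tr\left(\Sigma^{-1}\cdot 2\Sigma\right)=2d$ and leaves $\E{\phi^{\top}\var{\theta^*}\phi}$ exactly. Both arguments hinge on the same identity $\E{\Delta\Delta^{\top}}=2\var{\theta^*}$ from the i.i.d.\ hypothesis, and both genuinely tolerate arbitrary dependence of $\phi$ on $(\theta^*,\hat{\theta})$, which is the point of the lemma. Your version is shorter and arguably more transparent about where the factor $d$ comes from (it is $\Tr(\mI)$ after whitening), and your treatment of singular $\Sigma$ via the pseudoinverse and the observation that $\Delta$ lies a.s.\ in the range of $\Sigma$ is the right fix --- indeed it even sharpens $d$ to $\mathrm{rank}(\Sigma)$; the paper's eigenbasis proof sidesteps this issue since it never inverts anything. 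The one thing your route does not literally reproduce is the paper's more general two-function statement with the right-hand side $d\,\E{(f_1(X_1)^{\top}f_2(X_2))^2}$, though the same whitening with respect to $\E{f_1(X_1)f_1(X_1)^{\top}}$ recovers it, so nothing essential is lost.
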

\end{blockquote}
With \cref{lem:linear-decomposition-0,lem:linear-decoupling-0} in hand, we now give the general regret decomposition that is essential for prior-dependent Bayesian analysis of PSRL.
\begin{proposition}[General regret bound (Informal)]
  \label{prop:regret-decompose-0}
  For any episode $\ell$ and any stage $h$, $\E{ \sum_{\ell=1}^L \abs{ \Delta_{\ell, h}(s_{\ell, h}) } }$ is bounded by a term including the sum of posterior variance in the on-policy trajectory,
  \begin{align*}
    \sqrt{ d \E{ \sum_{\ell = 1}^{L} \sum_{h=0}^{H-1} \norm{ {X_{\ell, h}}}^2_{\rmGamma_{\ell, h} } } }.
  \end{align*}
  Notice that this relationship does not require any assumption additional to linear mixture MDPs.
\end{proposition}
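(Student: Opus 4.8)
The plan is to start from the value-targeted decomposition already supplied by \cref{lem:linear-decomposition-0}, which reduces the per-episode estimation error to a telescoped sum of scalar inner products. Taking total expectations and collapsing the nested conditional expectations with the tower property gives
\[
  \E{ \abs{ \Delta_{\ell, h}(s_{\ell, h}) } } \;\le\; \sum_{j=h}^{H-1} \E{ \abs{ \left\langle \hat\theta_{\ell, j} - \theta^*_j,\, X_{\ell, j} \right\rangle } },
\]
so the whole argument reduces to controlling a single term $\E{ \abs{ \langle \hat\theta_{\ell, j} - \theta^*_j, X_{\ell, j} \rangle } }$ by the decoupling lemma \cref{lem:linear-decoupling-0} and then aggregating by Jensen's inequality and Cauchy--Schwarz.

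The delicate point, and the step I expect to be the main obstacle, is choosing the filtration under which \cref{lem:linear-decoupling-0} is invoked. The lemma demands that $\hat\theta_{\ell, j}$ be an \emph{i.i.d.\ copy} of $\theta^*_j$, so I would condition on the \emph{between-episode} history $\hist{\ell}$ rather than on the finer within-episode history $\hist{\ell, j}$. Under $\hist{\ell}$, posterior sampling together with \cref{lem:aug-posterior-equiv} gives that $\hat\theta_{\ell, j}$ carries exactly the posterior law of $\theta^*_j$, while the fresh algorithmic randomness $U_\ell \independent \Theta^*$ makes $\hat\theta_{\ell, j}$ conditionally independent of $\theta^*_j$; hence the pair is i.i.d.\ with common covariance $\var{ \theta^*_j \given \hist{\ell} } = \rmGamma_{\ell, j}$. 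One cannot condition on $\hist{\ell, j}$ instead, because that history reveals $\hat V_{\ell, j+1}$, a function of the sample, which would distort the law of $\hat\theta_{\ell, j}$ and destroy the i.i.d.\ structure. The saving grace is that $X_{\ell, j} = \phi_{\hat V_{\ell, j+1}}(x_{\ell, j})$ may still depend on $\hat\theta_{\ell, j}$ (through the value function) and on $\theta^*_0, \ldots, \theta^*_{j-1}$ (through the rolled-out trajectory): \cref{lem:linear-decoupling-0} is stated precisely to tolerate an arbitrary, dependent $\phi$, so this dependence causes no difficulty.

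Granting this, applying \cref{lem:linear-decoupling-0} conditionally on $\hist{\ell}$ yields
\[
  \bigl( \E{ \abs{ \left\langle \hat\theta_{\ell, j} - \theta^*_j,\, X_{\ell, j} \right\rangle } \given \hist{\ell} } \bigr)^2 \;\le\; 2 d\, \E{ \norm{ X_{\ell, j} }^2_{\rmGamma_{\ell, j}} \given \hist{\ell} }.
\]
I would then take a square root, push it through the outer expectation with Jensen's inequality, and remove the conditioning by the tower property to obtain $\bigl( \E{ \abs{ \langle \hat\theta_{\ell, j} - \theta^*_j, X_{\ell, j} \rangle } } \bigr)^2 \le 2 d\, \E{ \norm{ X_{\ell, j} }^2_{\rmGamma_{\ell, j}} }$. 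Summing over $\ell$ and $j$ and applying Cauchy--Schwarz across the at most $LH = T$ index pairs (extending the inner stage-sum to all stages, which only increases the bound) converts the sum of square roots into a single square root,
\[
  \E{ \sum_{\ell=1}^{L} \abs{ \Delta_{\ell, h}(s_{\ell, h}) } } \;\le\; \sqrt{ 2 d T\, \E{ \sum_{\ell=1}^{L} \sum_{h=0}^{H-1} \norm{ X_{\ell, h} }^2_{\rmGamma_{\ell, h}} } },
\]
whose essential ingredient is exactly the displayed posterior-variance sum, the $\sqrt{2T}$ prefactor being folded into the informal phrase ``a term including.''

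Two remarks close the argument. First, only the linear-mixture structure and the posterior-sampling identity enter above; neither the feature-norm bound (\cref{asmp:feature}) nor the bounded-norm assumption (\cref{asmp:bounded-norm}) is used, matching the proposition's closing remark. Second, this proposition is the bridge to \cref{thm:inhomo-bayesian-regret-0}: the remaining task is to bound $\E{ \sum_{\ell, h} \norm{ X_{\ell, h} }^2_{\rmGamma_{\ell, h}} }$ through the variance-reduction machinery (\cref{thm:vd}) and an elliptical-potential estimate by $\sum_{h} \log\det( \mI + L \rmGamma_{1, h})$ up to the chosen variance floor, which combined with the $\sqrt{2dT}$ factor reproduces the stated regret bound.
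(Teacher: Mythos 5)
Your proposal is correct and takes essentially the same route as the paper: \cref{lem:linear-decomposition-0} plus the tower property, then the decoupling lemma \cref{lem:linear-decoupling-0} applied conditionally on $\hist{\ell}$ (where posterior sampling makes $\hat{\theta}_{\ell,j}$ an i.i.d.\ copy of $\theta^*_j$ while $X_{\ell,j}$ is allowed to be dependent), then Jensen and Cauchy--Schwarz to aggregate over $(\ell,j)$. The only difference is cosmetic relative to the formal version \cref{prop:regret-decompose}: there the Cauchy--Schwarz is weighted by $\bar{\sigma}_{\ell,h}^2$ and the potential is clipped at $1$, which replaces your $\sqrt{2T}$ prefactor by $\sqrt{2\,\E{\sum_{\ell,h}\bar{\sigma}_{\ell,h}^2}}$ --- a refinement needed later for the improved $H$-dependence but not for the informal statement as given.
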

With the general regret bound and the posterior variance reduction argument to track the learning process, we are ready to prove \cref{thm:inhomo-bayesian-regret-0}. The sketch and detail of the proof can be found in \cref{sec:sketch}.

\section{Discussions}
\label{sec:discuss}
We now discuss the \textbf{generality of the techniques} we developed and their possible applications for analyzing other problems:
\begin{itemize}
    \item The posterior variance reduction technique in \cref{sec:psrl-linear-value-pers} and \cref{sec:posterior-vr-general} is applicable to any prior-likelihood model as well as any learning algorithm.
          It measures the posterior variance reduction of model parameters only through features in the history (see the comparison between \cref{rem:posterior-vr,rem:gaussian}) which in some sense recover the property of Gaussian posterior.
          This key observation may also enable mis-specification analysis of Gaussian approximation of environment that is not Gaussian.
    \item The ability of posterior variance reduction lemma to characterize heterogeneous noise likelihood in a predictable way for the sequential decision problem may enable some rigorous understanding of the benefit of information directed sampling compared to thompson sampling.
    \item The decoupling technique~\cref{lem:linear-decoupling-0,lem:linear-decomposition} is possibly applicable to other \emph{randomized} algorithms when the imagined virtual model $\hat{\theta}$ is not exact identically distributed as the true model $\theta^*$.
\end{itemize}

\subsubsection*{Acknowledgements}
The work of Y. Li was supported by the presidential Ph.D. fellowship and SRIBD Ph.D. fellowship.
The work of Z.-Q. Luo was supported by the Guangdong Major Project of  Basic and Applied Basic Research (No.2023B0303000001), the Guangdong Provincial Key Laboratory of Big Data Computing, and the National Key Research and Development Project under grant 2022YFA1003900.

\bibliography{ref}

\section*{Checklist}

\begin{enumerate}

  \item For all models and algorithms presented, check if you include:
        \begin{enumerate}
          \item A clear description of the mathematical setting, assumptions, algorithm, and/or model. [Yes]
          \item An analysis of the properties and complexity (time, space, sample size) of any algorithm. [Yes]
          \item (Optional) Anonymized source code, with specification of all dependencies, including external libraries. [Not Applicable]
        \end{enumerate}

  \item For any theoretical claim, check if you include:
        \begin{enumerate}
          \item Statements of the full set of assumptions of all theoretical results. [Yes]
          \item Complete proofs of all theoretical results. [Yes]
          \item Clear explanations of any assumptions. [Yes]
        \end{enumerate}

  \item For all figures and tables that present empirical results, check if you include:
        \begin{enumerate}
          \item The code, data, and instructions needed to reproduce the main experimental results (either in the supplemental material or as a URL). [Not Applicable]
          \item All the training details (e.g., data splits, hyperparameters, how they were chosen). [Yes/No/Not Applicable]
          \item A clear definition of the specific measure or statistics and error bars (e.g., with respect to the random seed after running experiments multiple times). [Not Applicable]
          \item A description of the computing infrastructure used. (e.g., type of GPUs, internal cluster, or cloud provider). [Not Applicable]
        \end{enumerate}

  \item If you are using existing assets (e.g., code, data, models) or curating/releasing new assets, check if you include:
        \begin{enumerate}
          \item Citations of the creator If your work uses existing assets. [Not Applicable]
          \item The license information of the assets, if applicable. [Not Applicable]
          \item New assets either in the supplemental material or as a URL, if applicable. [Not Applicable]
          \item Information about consent from data providers/curators. [Not Applicable]
          \item Discussion of sensible content if applicable, e.g., personally identifiable information or offensive content. [Not Applicable]
        \end{enumerate}

  \item If you used crowdsourcing or conducted research with human subjects, check if you include:
        \begin{enumerate}
          \item The full text of instructions given to participants and screenshots. [Not Applicable]
          \item Descriptions of potential participant risks, with links to Institutional Review Board (IRB) approvals if applicable. [Not Applicable]
          \item The estimated hourly wage paid to participants and the total amount spent on participant compensation. [Not Applicable]
        \end{enumerate}

\end{enumerate}

\newpage
\appendix
\onecolumn

\section{Notations}
\label{sec:notation}
\label{sec:symbols}
In this section, we collect all appeared symbols in the \cref{tab:MainNotation}.
\newcommand{\defeq}{\vcentcolon=}
{
\renewcommand{\arraystretch}{1.5}
\begin{longtable}{l l}
\caption{Symbols}\\
\hline
\hline
Symbol & Meaning \\
\hline
$M$ &  Underlying true MDP for the environment. \\
$\Theta^*$ & The set of model parameters for underlying linear mixture MDP $\Theta^* = (\theta^*_0, \ldots, \theta^*_{H-1})$. \\
$\hat{M}_{\ell}$ & Virtual MDP constructed by the agent in episode $\ell$. \\
$\hat{\Theta}_{\ell} $ & The set of model parameters $\hat{\Theta}_{\ell} = (\hat{\theta}_{\ell, 0}, \ldots, \hat{\theta}_{\ell, H-1}) $ in episode $\ell$ sampled by PSRL. \\
$ s_{\ell, h} $ & state encountered in stage $h$ of episode $\ell$ by the agent. \\
$ a_{\ell, h} $ & action encountered in stage $h$ of episode $\ell$ by the agent. \\
$ x_{\ell, h} $ & short notation for $(h, s_{\ell, h}, a_{\ell, h})$. \\
$\hat{V}_{\ell, h}$ & the virtual value function $V_{\pi_{\ell}, h}^{\hat{M}_{\ell}}$ at stage $h$ computed in episode $\ell$. \\
$X_{\ell, h}$ & future-virtual-value-correlated feature $\phi_{\hat{V}_{\ell, h+1}}( x_{\ell, h} ) = \sum_{s'} \phi( s' \given x_{\ell, h}) \hat{V}_{\ell, h+1}(s')$. \\
$Y_{\ell, h}$ & virtual value  $\hat{V}_{\ell, h}(s_{\ell, h})$ at the state encountered in stage $h$ and episode $\ell$. \\
${\Delta}_{\ell, h}(s)$ & Estimation error of policy $\pi_{\ell}$ using virtual MDP $\hat{M}_{\ell}$, i.e. $V^{\hat{M}_{\ell}}_{{\pi}_{\ell}, h}(s) - V^{M}_{{\pi}_{\ell}, h}(s)$. \\
\hline
$O_{\ell, h}$ & Observations $(s_{\ell, 0}, a_{\ell, 0}, \ldots, s_{\ell, h}, a_{\ell, h})$. \\
$\ohist{\ell}$ & Observational history $\ohist{\ell} = ( O_{1, H}, \ldots, O_{\ell, H} )$ up to the beginning of episode $\ell$. \\
$\ohist{\ell, h}$ & Observational history $\ohist{\ell} = (\ohist{\ell}, O_{\ell, h} )$ up to the stage $h$ of episode $\ell$. \\
$\hat{O}_{\ell, h}$ & Value-augmented observations $(s_{\ell, 0}, a_{\ell, 0}, \hat{V}_{\ell, 1}, \ldots, s_{\ell, h}, a_{\ell, h}, \hat{V}_{\ell, h+1})$. \\
$\hist{\ell}$ & History $(\pi_1, \hat{O}_{1, H}, \ldots, \pi_{\ell-1}, \hat{O}_{\ell-1, H})$ up to the beginning of episode $\ell$. \\
$\hist{\ell, h}$ & History $(\hist{\ell}, \pi_{\ell}, \hat{O}_{\ell, h})$ up to the stage $h$ of episode $\ell$. Note $\hist{\ell, H}$ is exactly $\hist{\ell+1}$ \\
$\prob_{\ell}( \cdot)$ & short notation for conditional probability $ \prob( \cdot \given \hist{\ell})$. \\
$\prob_{\ell, h}( \cdot)$ & short notation for conditional probability $\prob( \cdot \given \hist{\ell, h})$. \\
$\E[\ell]{ \cdot}$ & short notation for conditional expectation $ \E{ \cdot \given \hist{\ell} } $. \\
$\E[\ell, h]{ \cdot}$ & short notation for conditional expectation $ \E{ \cdot \given \hist{\ell, h} } $. \\
$\var[\ell]{ \cdot}$ & short notation for conditional variance (covariance) $ \var{ \cdot \given \hist{\ell} } $. \\
$\var[\ell, h]{ \cdot}$ & short notation for conditional variance (covariance) $ \var{ \cdot \given \hist{\ell, h} } $. \\ 
$ \E{\cdot}^{p}$ & The $p$-th power to the expectation $\left( \E{\cdot} \right)^{p}$. \\
$\rmGamma_{\ell, h}$ & the posterior covariance matrix $\var[\ell]{\theta^*_h}$ of the true parameter $\theta^*_h$. \\
$\sigma^2_{\ell, h}$ & virtual-value variance only due to transitional noise $ \var[\ell,h]{ \hat{V}_{\ell, h+1}(s_{\ell, h+1}) \given M } $.  \\
$ \sigma_{\min}$ & a deterministic constant for the convenient of analysis. \\ 
$\bar{\sigma}_{\ell, h}$ & random variable $ \sqrt{\E[\ell, h]{\sigma_{\ell, h}^2} } \vee \sigma_{\min} $. \\
\label{tab:MainNotation}
\end{longtable}
}

\section{Regret decomposition}
\label{sec:regret_decompose}
Define the estimation error in stage $h$ of episode $\ell$ by $\Delta_{\ell, h}(s) = V^{\hat{M}_{\ell}}_{{\pi}_{\ell}, h}(s) - V^{M}_{{\pi}_{\ell}, h}(s)$ for any fixed state $s \in \states$.
Recall the definition of $X_{\ell, h}$ in \cref{def:input_feature_history} and $\bar{\sigma}_{\ell, h}$ in \cref{rem:sigmabar}, we have the regret decomposition
\begin{proposition}[Regret decomposition]
  \label{prop:regret-decompose}
  For any episode $\ell$ and any stage $h$,
  \begin{align*}
    \E{ \sum_{\ell=1}^L \abs{ \Delta_{\ell, h}(s_{\ell, h}) } }
    \le \sqrt{ 2 d \E{ \sum_{\ell = 1}^{L} \sum_{h=0}^{H-1} \bar{\sigma}_{\ell, h}^2 } \E{ \sum_{\ell = 1}^{L} \sum_{h=0}^{H-1} 1 \wedge \norm{ \frac{X_{\ell, h}}{\bar{\sigma}_{\ell, h}}}^2_{\rmGamma_{\ell, h} } } }
  \end{align*}
  Furthermore, if \cref{asmp:mutual-independence} holds,
  \begin{align*}
    \E{ \sum_{\ell=1}^L \barV[\hat{M}_{\ell}]{\pi_{\ell}} - \barV[M]{\pi_{\ell}} } \le \sqrt{d \E{ \sum_{\ell = 1}^{L} \sum_{h=0}^{H-1} \bar{\sigma}_{\ell, h}^2 } \E{ \sum_{\ell = 1}^{L} \sum_{h=0}^{H-1} 1 \wedge \norm{ \frac{X_{\ell, h}}{\bar{\sigma}_{\ell, h}}}^2_{\rmGamma_{\ell, h} } } }
  \end{align*}
\end{proposition}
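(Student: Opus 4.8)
The plan is to prove both inequalities by reducing the estimation error to a sum of \emph{absolute value-targeted errors} $W_{\ell,j} := \abs{\left\langle \hat\theta_{\ell,j} - \theta^*_j, X_{\ell,j}\right\rangle}$, normalizing each term by $\bar\sigma_{\ell,j}$, splitting with Cauchy--Schwarz, and finally invoking the decoupling argument to trade the deviation for the posterior-variance quadratic form. First I would start from \cref{lem:linear-decomposition-0}, take total expectation, sum over $\ell = 1,\dots,L$, and enlarge the inner stage range from $\{h,\dots,H-1\}$ to $\{0,\dots,H-1\}$; this is legitimate because every summand is an absolute value, hence non-negative. After relabelling the stage index this gives
\begin{align*}
  \E{\sum_{\ell=1}^{L}\abs{\Delta_{\ell,h}(s_{\ell,h})}} \le \E{\sum_{\ell=1}^{L}\sum_{j=0}^{H-1} W_{\ell,j}}.
\end{align*}

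Next I would normalize and apply Cauchy--Schwarz. Because $\hat V_{\ell,j+1}\in[0,H]$ and both $\hat\trans_{\ell},\trans$ are probability kernels, the value-targeted error satisfies $W_{\ell,j} = \abs{(\hat\trans_{\ell}-\trans)\hat V_{\ell,j+1}(x_{\ell,j})}\le H = \sigma_{\min}\le\bar\sigma_{\ell,j}$, so that $W_{\ell,j}/\bar\sigma_{\ell,j}\le 1$ and hence $W_{\ell,j}/\bar\sigma_{\ell,j} = 1\wedge(W_{\ell,j}/\bar\sigma_{\ell,j})$. Factoring $W_{\ell,j} = \bar\sigma_{\ell,j}\cdot\bigl(1\wedge W_{\ell,j}/\bar\sigma_{\ell,j}\bigr)$ and applying Cauchy--Schwarz twice (pathwise over the double index $(\ell,j)$, then over the outer expectation), together with the identity $(1\wedge a)^2 = 1\wedge a^2$, yields
\begin{align*}
  \E{\sum_{\ell,j} W_{\ell,j}} \le \sqrt{\E{\sum_{\ell,j}\bar\sigma_{\ell,j}^2}}\cdot\sqrt{\E{\sum_{\ell,j} 1\wedge \frac{W_{\ell,j}^2}{\bar\sigma_{\ell,j}^2}}}.
\end{align*}

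It then remains to bound the second factor by $2d\,\E{\sum_{\ell,j} 1\wedge \norm{X_{\ell,j}/\bar\sigma_{\ell,j}}^2_{\rmGamma_{\ell,j}}}$, which is where the decoupling enters. Conditioning on $\hist{\ell}$, posterior sampling together with \cref{lem:aug-posterior-equiv} makes $\hat\theta_{\ell,j}$ an i.i.d.\ copy of $\theta^*_j$ with common covariance $\var[\ell]{\theta^*_j} = \rmGamma_{\ell,j}$; moreover, by \cref{asmp:mutual-independence} the (random) normalized feature $X_{\ell,j}/\bar\sigma_{\ell,j}$ does not involve $\theta^*_j$, since the stage-$j$ transition has not yet been consumed by the trajectory up to stage $j$. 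Applying the strengthened decoupling lemma (\cref{lem:linear-decoupling-0}, in the truncated form \cref{lem:linear-decoupling}) with $\phi = X_{\ell,j}/\bar\sigma_{\ell,j}$ gives, per term,
\begin{align*}
  \E[\ell]{1\wedge \left\langle \hat\theta_{\ell,j}-\theta^*_j,\, X_{\ell,j}/\bar\sigma_{\ell,j}\right\rangle^2} \le 2d\,\E[\ell]{1\wedge \norm{X_{\ell,j}/\bar\sigma_{\ell,j}}^2_{\rmGamma_{\ell,j}}},
\end{align*}
and summing with the tower property closes the first inequality. For the second inequality I would use the signed telescoping identity $\barV[\hat{M}_{\ell}]{\pi_{\ell}} - \barV[M]{\pi_{\ell}} = \E[\ell]{\sum_{j=0}^{H-1}\left\langle \hat\theta_{\ell,j}-\theta^*_j, X_{\ell,j}\right\rangle}$ from \cref{lem:simulation-history} at $h=0$: under \cref{asmp:mutual-independence} the $\theta^*_j$-fluctuation is mean-zero and independent of $X_{\ell,j}$, so only the $\hat\theta_{\ell,j}$-side survives the decoupling, removing the triangle-inequality doubling and improving the constant from $2d$ to $d$; the rest of the argument is identical.

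The hard part will be the decoupling step: the lemma must control the \emph{expectation of the squared} normalized error (rather than the square of its expectation), carry the truncation $1\wedge(\cdot)$, and — crucially — permit the feature $X_{\ell,j}/\bar\sigma_{\ell,j}$ to depend on the sampled parameter $\hat\theta_{\ell,j}$, all while reconciling the $\hist{\ell}$-level i.i.d.\ posterior-sampling property with the fact that the variance proxy $\bar\sigma_{\ell,j}$ is only $\hist{\ell,j}$-measurable. This is precisely the strengthening of the bandit decoupling inequality that the paper isolates, and it is what yields the value-variance weighting $\bar\sigma_{\ell,j}^2$ and the capped posterior-variance sum needed for the final prior-dependent bound.
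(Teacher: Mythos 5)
Your overall architecture is close to the paper's, but you have reversed the order in which Cauchy--Schwarz and the decoupling lemma are applied, and this reversal creates a genuine gap at the decoupling step. By normalizing and applying Cauchy--Schwarz first, you arrive at the quantity $\E{\sum_{\ell,j} 1\wedge W_{\ell,j}^2/\bar{\sigma}_{\ell,j}^2}$, the expectation of the \emph{square} of the coupled error, and you then assert the per-term bound $\E[\ell]{1\wedge\left\langle\hat{\theta}_{\ell,j}-\theta^*_j, X_{\ell,j}/\bar{\sigma}_{\ell,j}\right\rangle^2}\le 2d\,\E[\ell]{1\wedge\norm{X_{\ell,j}/\bar{\sigma}_{\ell,j}}^2_{\rmGamma_{\ell,j}}}$. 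No lemma in the paper provides this. \cref{lem:linear-decoupling} and \cref{cor:linear-decoupling} bound $\E{\abs{\langle\hat{\theta}-\theta^*,\phi\rangle}}^2$, the square of the first absolute moment; the strengthening the paper isolates is moving the absolute value \emph{inside} the expectation, not upgrading the left-hand side to a second moment. Since $X_{\ell,j}$ depends on $\hat{V}_{\ell,j+1}$ and hence on the sampled parameters $\hat{\Theta}_{\ell}$, the coupled second moment can genuinely exceed $2d$ times the decoupled quadratic form: take $d=1$, let $\theta^*\sim\mathcal{N}(0,1)$, let $\hat{\theta}$ be an i.i.d.\ copy, and set $\phi=c(\hat{\theta}-\theta^*)$ with $c$ small enough that the truncations are inactive; then $\E{\langle\hat{\theta}-\theta^*,\phi\rangle^2}=c^2\,\E{(\hat{\theta}-\theta^*)^4}=12c^2$ while $2d\,\E{\phi^{\top}\var{\theta^*}\phi}=4c^2$. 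So the inequality your pipeline needs is false in general, and the $1\wedge(\cdot)$ truncation does not rescue it.

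The repair is to follow the paper's ordering: first apply \cref{cor:linear-decoupling} conditionally on $\hist{\ell}$ to the \emph{first} moment, obtaining $\E[\ell]{\abs{\langle\hat{\theta}_{\ell,h}-\theta^*_h,X_{\ell,h}\rangle}}\le\sqrt{2d}\,\E[\ell]{X_{\ell,h}^{\top}\rmGamma_{\ell,h}X_{\ell,h}}^{1/2}$; then insert the clipping using the almost-sure bound $\abs{\langle\hat{\theta}_{\ell,h}-\theta^*_h,X_{\ell,h}\rangle}\le H\le\sqrt{d}\,\sigma_{\min}$, which guarantees the clipped right-hand side $\sqrt{2d}\,\E[\ell]{\bar{\sigma}_{\ell,h}^2\cdot 1\wedge\norm{X_{\ell,h}/\bar{\sigma}_{\ell,h}}^2_{\rmGamma_{\ell,h}}}^{1/2}$ still dominates; and only then apply Cauchy--Schwarz across the $(\ell,h)$ sum and the outer expectation to split off $\E{\sum_{\ell,h}\bar{\sigma}_{\ell,h}^2}$. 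Your reduction to the absolute value-targeted errors via \cref{lem:linear-decomposition} and the enlargement of the stage range are fine, and your treatment of the second inequality (replacing $\theta^*_j$ by $\E[\ell]{\hat{\theta}_{\ell,j}}$ under \cref{asmp:mutual-independence} to drop the factor $2$, as in \cref{cor:linear-decompose}) is the right idea, but it inherits the same first-moment-versus-second-moment defect from your ordering.
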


\subsection{Regret decomposition}
\label{sec:appendix-decompose}
Recall the short notation of estimation error in stage $h$ of episode $\ell$, $\Delta_{\ell, h}(s) = V^{\hat{M}_{\ell}}_{{\pi}_{\ell}, h}(s) - V^{M}_{{\pi}_{\ell}, h}(s)$. We have the following lemmas prepared for \cref{prop:regret-decompose}.
\begin{lemma}[Regret to model posterior deviation over feature directions]
  \label{lem:linear-decomposition}
  \begin{align*}
    \E[\ell, h]{ \Delta_{\ell, h}(s_{\ell, h}) }
     & = \E[\ell, h]{ \sum_{j=h}^{H-1} \E[\ell, j]{ \left\langle \hat{\theta}_{\ell, j} - \theta^*_j, X_{\ell, j} \right\rangle } }          \\
    \E[\ell, h]{ \abs{ \Delta_{\ell, h}(s_{\ell, h}) } }
     & \le \E[\ell, h]{ \sum_{j=h}^{H-1} \E[\ell, j]{ \abs{\left\langle \hat{\theta}_{\ell, j} - \theta^*_j, X_{\ell, j} \right\rangle } } }
  \end{align*}
\end{lemma}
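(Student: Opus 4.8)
The plan is to obtain both displays from a single backward (dynamic-programming) recursion on the per-stage estimation error $\Delta_{\ell, h}$, combined with the linear-mixture identity \eqref{eq:trans-linear-mixture}. The first (equality) line is in fact immediate from \cref{lem:simulation-history}, which already gives
\[
  \E[\ell, h]{ \Delta_{\ell, h}(s_{\ell, h}) } = \E[\ell, h]{ \sum_{j=h}^{H-1} \E[\ell, j]{ \left( \hat{\trans}_{\ell} - \trans \right) \hat{V}_{\ell, j+1}(x_{\ell, j}) } }.
\]
Since $\hat{\trans}_{\ell} \hat{V}_{\ell, j+1}(x_{\ell, j}) = \left\langle \hat{\theta}_{\ell, j}, X_{\ell, j} \right\rangle$ and $\trans \hat{V}_{\ell, j+1}(x_{\ell, j}) = \left\langle \theta^*_j, X_{\ell, j} \right\rangle$ by \eqref{eq:trans-linear-mixture} and \cref{def:input_feature_history}, substituting $\left( \hat{\trans}_{\ell} - \trans \right)\hat{V}_{\ell, j+1}(x_{\ell, j}) = \left\langle \hat{\theta}_{\ell, j} - \theta^*_j, X_{\ell, j} \right\rangle$ yields the first claim verbatim.

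The second line needs an argument that commutes the absolute value through the recursion, so I would first isolate the one-step recursion underlying \cref{lem:simulation-history}. Because the reward is known and both $\hat{V}_{\ell, h} = V^{\hat{M}_{\ell}}_{\pi_{\ell}, h}$ and $V^M_{\pi_{\ell}, h}$ evaluate the same action $\pi_{\ell, h}(s)$, the reward terms cancel in the Bellman backup, and writing $x = (h, s, \pi_{\ell, h}(s))$ one gets
\[
  \Delta_{\ell, h}(s) = \left\langle \hat{\theta}_{\ell, h} - \theta^*_h, \phi_{\hat{V}_{\ell, h+1}}(x) \right\rangle + \left( \trans \Delta_{\ell, h+1} \right)(x),
\]
where $\left( \trans \Delta_{\ell, h+1} \right)(x) = \E[s' \sim \trans(x)]{ \Delta_{\ell, h+1}(s') }$ is the expectation under the \emph{true} transition, and where $\phi_{\hat{V}_{\ell, h+1}}(x_{\ell, h}) = X_{\ell, h}$ along the on-policy trajectory.

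I would then prove the inequality by backward induction on $h$, with the trivial base case $\Delta_{\ell, H} \equiv 0$. The delicate point is that both $\Delta_{\ell, h+1}$ and the transition $\trans$ depend on the random MDP $M$, so the contraction $\abs{\E{\cdot}} \le \E{\abs{\cdot}}$ must be applied conditionally on $M$ before averaging: given $M$ and $\hist{\ell, h}$ one has $\abs{\left( \trans \Delta_{\ell, h+1} \right)(x_{\ell, h})} = \abs{\E{ \Delta_{\ell, h+1}(s_{\ell, h+1}) \given M, \hist{\ell, h} }} \le \E{ \abs{\Delta_{\ell, h+1}(s_{\ell, h+1})} \given M, \hist{\ell, h}}$. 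Taking absolute values in the recursion at $s = s_{\ell, h}$, applying the triangle inequality, then $\E[\ell, h]{\cdot}$, and using the tower property with the nested filtration $\sigma(\hist{\ell, h}) \subseteq \sigma(\hist{\ell, h+1})$ (where $s_{\ell, h+1}$ is $\sigma(\hist{\ell, h+1})$-measurable) to pass from the $M$-conditional expectation to $\E[\ell, j]{\cdot}$-indexed terms, reduces the $(h{+}1)$-stage tail to the induction hypothesis while producing the $j = h$ summand $\E[\ell, h]{\abs{\left\langle \hat{\theta}_{\ell, h} - \theta^*_h, X_{\ell, h} \right\rangle}}$. Collecting the $j = h$ term with the inductive tail gives the stated bound. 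The main obstacle I anticipate is precisely this bookkeeping of nested conditioning: invoking the triangle/Jensen step at the $M$-conditional level so that no sign cancellation is silently exploited when $M$ is later integrated out, which is exactly what separates the honest $\abs{\Delta_{\ell, h}}$ bound from the weaker equality for $\Delta_{\ell, h}$.
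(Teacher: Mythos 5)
Your proposal is correct and follows essentially the same route as the paper: the equality is obtained by substituting the linear-mixture identity $(\hat{\trans}_{\ell}-\trans)\hat{V}_{\ell,j+1}(x_{\ell,j}) = \langle \hat{\theta}_{\ell,j}-\theta^*_j, X_{\ell,j}\rangle$ into \cref{lem:simulation-history}, and the absolute-value bound comes from the same one-step Bellman recursion with the triangle inequality applied under $M$-conditioning (this is exactly the content of \cref{lem:decomposition} and the absolute-value display proved inside \cref{lem:simulation-history}). Your explicit bookkeeping of the nested conditioning matches the paper's use of \cref{lem:martingale-diff} and the tower property.
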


\begin{corollary}
  \label{cor:linear-decompose}
  If \cref{asmp:mutual-independence} (mutual-independence of unknown model parameters) holds, a corollary of \cref{lem:linear-decomposition} is
  \begin{align*}
    \E[\ell, h]{ \Delta_{\ell, h}(s_{\ell, h}) }
    = \E[\ell, h]{ \sum_{j=h}^{H-1} \E[\ell, j]{ \left\langle \hat{\theta}_{\ell, j} - \E[\ell]{\hat{\theta}_{\ell, j}}, X_{\ell, j} \right\rangle } }
  \end{align*}
\end{corollary}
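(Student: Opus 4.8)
The plan is to derive the corollary directly from the first identity in \cref{lem:linear-decomposition} by showing that, term by term inside the sum over $j$, the true parameter $\theta^*_j$ may be replaced by the posterior mean without altering the inner conditional expectation. First I would record the posterior-sampling identity: since $\hat{\theta}_{\ell, j}$ is, conditionally on $\hist{\ell}$, an i.i.d.\ copy of $\theta^*_j$ (line~1 of \cref{algorithm:linear-psrl} together with \cref{lem:aug-posterior-equiv}), the posterior mean satisfies $\E[\ell]{\hat{\theta}_{\ell, j}} = \E[\ell]{\theta^*_j} =: \bar{\theta}_{\ell, j}$, which is an $\hist{\ell}$-measurable (hence $\hist{\ell, j}$-measurable) vector. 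Comparing \cref{lem:linear-decomposition} with the claimed identity, it then suffices to prove that for every $j \ge h$,
\begin{align*}
  \E[\ell, j]{ \left\langle \bar{\theta}_{\ell, j} - \theta^*_j, X_{\ell, j} \right\rangle } = 0 .
\end{align*}

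The key point is that both factors in this inner product collapse at the level of $\hist{\ell, j}$. On one hand, $X_{\ell, j} = \phi_{\hat{V}_{\ell, j+1}}(x_{\ell, j})$ is $\sigma(\hist{\ell, j})$-measurable, as already observed in \cref{sec:psrl-linear-value-pers} (it is a deterministic function of $s_{\ell, j}, a_{\ell, j}, \hat{V}_{\ell, j+1}$, all contained in $\hat{O}_{\ell, j}$), and $\bar{\theta}_{\ell, j}$ is likewise $\hist{\ell, j}$-measurable; so they both pull out of $\E[\ell, j]{\cdot}$. On the other hand, the conditional mean of $\theta^*_j$ is invariant under the extra conditioning, i.e.\ $\E[\ell, j]{\theta^*_j} = \E[\ell]{\theta^*_j} = \bar{\theta}_{\ell, j}$. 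This is exactly the mean-analogue of the posterior-variance identity $\var{\theta^*_j \given \hist{\ell, j}} = \rmGamma_{\ell, j}$ established in \cref{sec:psrl-linear-value-pers}: conditioned on $\hist{\ell}$, the information added in $\hist{\ell, j}$ relative to $\hist{\ell}$ — the within-episode trajectory and virtual values up to $(s_{\ell, j}, a_{\ell, j}, \hat{V}_{\ell, j+1})$ — depends only on $\theta^*_0, \ldots, \theta^*_{j-1}$, the sampled parameters $\hat{\Theta}_{\ell}$, and the algorithmic randomness $U_{\ell}$, all of which are independent of $\theta^*_j$ under \cref{asmp:mutual-independence}; the first observation that carries information about $\theta^*_j$ is $s_{\ell, j+1}$, which enters only in $\hist{\ell, j+1}$. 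Combining the two facts gives
\begin{align*}
  \E[\ell, j]{ \left\langle \bar{\theta}_{\ell, j} - \theta^*_j, X_{\ell, j} \right\rangle }
  &= \left\langle \bar{\theta}_{\ell, j} - \E[\ell, j]{\theta^*_j},\, X_{\ell, j} \right\rangle \\
  &= \left\langle \bar{\theta}_{\ell, j} - \bar{\theta}_{\ell, j},\, X_{\ell, j} \right\rangle = 0 ,
\end{align*}
and substituting this term by term into the first identity of \cref{lem:linear-decomposition} yields the corollary.

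I expect the main obstacle to be the rigorous justification of the mean-invariance $\E[\ell, j]{\theta^*_j} = \E[\ell]{\theta^*_j}$: one must track precisely which model parameters each component of the augmented observation $\hat{O}_{\ell, j}$ depends on and invoke \cref{asmp:mutual-independence} to conclude that conditioning on $\hist{\ell, j}$ adds no information about $\theta^*_j$ beyond $\hist{\ell}$. This is the same delicate measurability-and-independence bookkeeping that underlies the posterior-variance identities in \cref{sec:psrl-linear-value-pers}, and it is the only place where mutual independence is genuinely used — consistent with the corollary being stated under \cref{asmp:mutual-independence} while \cref{lem:linear-decomposition} is not.
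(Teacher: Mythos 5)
Your proposal is correct and follows essentially the same route as the paper's proof: pull $X_{\ell,j}$ (and the posterior mean) out of $\E[\ell,j]{\cdot}$ by $\sigma(\hist{\ell,j})$-measurability, use \cref{asmp:mutual-independence} to get the mean-invariance $\E[\ell,j]{\theta^*_j}=\E[\ell]{\theta^*_j}$, and identify this with $\E[\ell]{\hat{\theta}_{\ell,j}}$ via posterior sampling. The only cosmetic difference is that you phrase the argument as showing the correction term $\E[\ell,j]{\langle \E[\ell]{\hat{\theta}_{\ell,j}}-\theta^*_j, X_{\ell,j}\rangle}=0$, whereas the paper rewrites $\E[\ell,j]{\langle\hat{\theta}_{\ell,j}-\theta^*_j,X_{\ell,j}\rangle}$ directly; these are the same computation.
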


\subsection{Decoupling}
In order to bound the RHS of \cref{lem:linear-decomposition}, we utilize the idea in \citep{russo2016information, kalkanli2020improved} for linear bandit.
The \cref{lem:linear-decoupling} shown here is slightly stronger than previous literature. This stronger statement is essential to bound the absolute value-targeted error in \cref{prop:regret-decompose} which is required for proving a better $H$ dependence by bounding the absolute estimation error $\abs{\Delta_{\ell, h}}$ in \cref{prop:sum-virtual-value-variance}.
Extending previous statement in bandit literature to this stronger statement is not difficult but new. The proof of \cref{lem:linear-decoupling} can be found in \cref{sec:tech-proof-linear-decoupling}.
\begin{lemma}
  \label{lem:linear-decoupling}
  Let $X_{1}$ and $X_{2}$ be arbitrary i.i.d., $\mathbb{R}^{m}$ valued random variables and $f_{1}, f_{2}$ measurable maps such that $f_{1}, f_{2}$ :
  $\mathbb{R}^{m} \rightarrow \mathbb{R}^{d}$ with $\E{ \norm{ f_{1}\left(X_{1}\right) }_{2}^{2} }, \E{ \norm{ f_{2}\left(X_{1}\right) }_{2}^{2} } <\infty,$ then
  \begin{align*}
    \E{ \abs{ f_{1}\left(X_{1}\right)^{\top} f_{2}\left(X_{1}\right) } }^2
    \le d \E{  \left(f_{1}\left(X_{1}\right)^{\top} f_{2}\left(X_{2}\right)\right)^{2} }.
  \end{align*}
  Specifically, let $X_1 = (X, Z)$, $f_1(x, z) = x, f_2(x, z) = z$, and $X_2$ be a i.i.d copy of $X_1$,
  \begin{align*}
    \E{ \abs{X^{\top} Z } }^{2}
    \leq d
    \Tr\left( { \E{ X X^{\top} } } { \E{ Z Z^{\top} } } \right)
  \end{align*}
\end{lemma}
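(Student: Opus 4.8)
The plan is to work directly with the pair of $\R^d$-valued random vectors $W := f_1(X_1)$ and $V := f_2(X_1)$, whose joint law carries all the relevant information, and to reduce both sides of the inequality to traces of the second-moment matrices $\Sigma_W := \E{WW^\top}$ and $\Sigma_V := \E{VV^\top}$. First I would simplify the right-hand side: since $X_1$ and $X_2$ are independent, $W = f_1(X_1)$ is independent of $f_2(X_2)$, which is identically distributed to $V$ and hence has the same second-moment matrix $\Sigma_V$; conditioning on $W$ then gives $\E{(f_1(X_1)^\top f_2(X_2))^2} = \E{W^\top \Sigma_V W} = \Tr(\Sigma_W \Sigma_V)$. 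Thus both the general claim and its stated specialization reduce to establishing $\left(\E{\abs{W^\top V}}\right)^2 \le d\,\Tr(\Sigma_W \Sigma_V)$.

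The core step is a weighted Cauchy--Schwarz inequality with the \emph{whitening} matrix $\Sigma_W^{-1}$. Assuming for the moment that $\Sigma_W \succ 0$, I would bound, for every realization of $(W,V)$,
\[
  \abs{W^\top V} = \abs{(\Sigma_W^{-1/2}W)^\top(\Sigma_W^{1/2}V)} \le \sqrt{W^\top \Sigma_W^{-1} W}\,\sqrt{V^\top \Sigma_W V},
\]
and then apply the scalar Cauchy--Schwarz inequality in expectation to these two nonnegative factors, giving $\E{\abs{W^\top V}} \le \sqrt{\E{W^\top\Sigma_W^{-1}W}}\,\sqrt{\E{V^\top\Sigma_W V}}$. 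The two expectations are exactly traces, $\E{W^\top \Sigma_W^{-1} W} = \Tr(\Sigma_W^{-1}\Sigma_W) = d$ and $\E{V^\top \Sigma_W V} = \Tr(\Sigma_W \Sigma_V)$, whose product is the claimed bound $d\,\Tr(\Sigma_W\Sigma_V)$. The conceptual point --- and the reason the modulus causes no difficulty, unlike the weaker bandit-literature bound that controls only $(\E{W^\top V})^2$ through the cross-covariance $\Tr(\E{VW^\top}\E{WV^\top})$ --- is that the whitening is applied pointwise, before any expectation is taken, so the absolute value is absorbed at no cost.

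The one technical wrinkle, which I expect to be the only place requiring care, is the degenerate case where $\Sigma_W$ is singular and $\Sigma_W^{-1}$ is undefined. The cleanest fix is to run the argument with $\Sigma_W + \varepsilon\mI$ in place of $\Sigma_W$ for $\varepsilon > 0$: the pointwise bound remains valid since $\Sigma_W + \varepsilon\mI \succ 0$, the first factor becomes $\Tr((\Sigma_W+\varepsilon\mI)^{-1}\Sigma_W) \le d$ (its eigenvalues are $\lambda_i/(\lambda_i+\varepsilon) < 1$), and the second factor $\Tr((\Sigma_W+\varepsilon\mI)\Sigma_V) \to \Tr(\Sigma_W\Sigma_V)$ as $\varepsilon \downarrow 0$; since the left-hand side is independent of $\varepsilon$, the inequality passes to the limit. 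Equivalently, one can diagonalize $\Sigma_W = \sum_i \lambda_i u_i u_i^\top$ and perform the weighted Cauchy--Schwarz in this eigenbasis, where the coordinates $W^\top u_i$ with $\lambda_i = 0$ vanish almost surely and are simply dropped, again yielding a multiplicative factor at most $d$. Finally, the stated specialization follows by setting $W = X$ and $V = Z$, for which $\Tr(\Sigma_W\Sigma_V) = \Tr(\E{XX^\top}\E{ZZ^\top})$.
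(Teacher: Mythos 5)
Your proof is correct, and it takes a genuinely different route from the paper's. The paper (following the Kalkanli--Ozgur line) expands both sides in an orthonormal eigenbasis $\{e_n\}$ of $\Sigma_W = \E{f_1(X_1)f_1(X_1)^\top}$, uses the eigenbasis to kill the cross terms on the right-hand side, and then applies Jensen's inequality to $\bigl(\tfrac{1}{d}\sum_n a_n\bigr)^2 \le \tfrac{1}{d}\sum_n a_n^2$ followed by Cauchy--Schwarz coordinate-by-coordinate; no matrix inverse ever appears, so the singular case needs no special treatment there. You instead first collapse the right-hand side to $\Tr(\Sigma_W\Sigma_V)$ via independence of $X_1$ and $X_2$, and then prove the whole inequality in one stroke with a pointwise whitened Cauchy--Schwarz $\abs{W^\top V}\le\sqrt{W^\top\Sigma_W^{-1}W}\,\sqrt{V^\top\Sigma_W V}$ followed by Cauchy--Schwarz in expectation; the factor $d$ arises as $\Tr(\Sigma_W^{-1}\Sigma_W)$. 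Your argument is arguably more transparent about where the dimension enters and why the absolute value is free (the whitening happens before any expectation), and as you note it actually yields the slightly sharper constant $\operatorname{rank}(\Sigma_W)$ in place of $d$; the price is the extra $\varepsilon$-regularization (or eigenbasis truncation) step to handle singular $\Sigma_W$, which the paper's inverse-free expansion avoids entirely. Both proofs deliver exactly the stated bound, and your specialization to $W=X$, $V=Z$ is handled correctly.
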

\begin{corollary}
  \label{cor:linear-decoupling}
  Let $\theta^*$, $\hat{\theta}$ and $\phi$ be arbitrary random vectors in $\R^d$. If $\hat{\theta}$ is an i.i.d copy of $\theta^*$, then
  \begin{align*}
    \E{ \abs{ \left\langle \hat{\theta} - \E{ \hat{\theta} }, \phi \right\rangle } }^2
     & \le d \E{ \phi^{\top} \var{ \theta^*} \phi }   \\
    \E{ \abs{ \left\langle \hat{\theta} - \theta^*, \phi \right\rangle } }^2
     & \le 2 d \E{ \phi^{\top} \var{\theta^*} \phi }.
  \end{align*}
  Note that the above inequality holds even when $\phi \in \R^d$ is dependent on $\theta^*$ or $\hat{\theta}$.
\end{corollary}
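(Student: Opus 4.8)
The plan is to derive both displays as one-line instantiations of the trace form of \cref{lem:linear-decoupling},
\[
  \E{ \abs{X^{\top} Z } }^{2} \le d\,\Tr\left( \E{ X X^{\top} }\, \E{ Z Z^{\top} } \right),
\]
so that the only genuine content is (i) a correct choice of the pair $(X, Z)$ and (ii) the evaluation of the second-moment matrix $\E{ZZ^\top}$. In both cases I would set $X = \phi$, whence $\E{XX^\top} = \E{\phi\phi^\top}$, and use the elementary identity $\Tr(\E{\phi\phi^\top}\,\var{\theta^*}) = \E{\phi^\top \var{\theta^*}\phi}$, which holds because $\var{\theta^*}$ is a deterministic matrix and the trace is linear and cyclic.

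For the first inequality I would take $Z = \hat\theta - \E{\hat\theta}$. Since $\E{Z} = 0$, we have $\E{ZZ^\top} = \var{\hat\theta}$, and because $\hat\theta$ is an i.i.d.\ copy of $\theta^*$ this equals $\var{\theta^*}$. The lemma then yields $\E{\abs{\langle \hat\theta - \E{\hat\theta}, \phi\rangle}}^2 \le d\,\Tr(\E{\phi\phi^\top}\var{\theta^*}) = d\,\E{\phi^\top\var{\theta^*}\phi}$. The point to stress here is that \cref{lem:linear-decoupling} never requires $X$ and $Z$ to be independent: it couples them through the common argument $X_1 = (X,Z)$ and performs the decoupling internally by introducing an independent copy $X_2$. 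This is precisely why $\phi$ may depend arbitrarily on $\theta^*$ or $\hat\theta$, as asserted in the closing remark of the corollary.

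For the second inequality I would resist the tempting route of splitting $\hat\theta - \theta^* = (\hat\theta - \E{\hat\theta}) - (\theta^* - \E{\theta^*})$ and applying the first bound to each piece with the triangle inequality, since that loses a factor and yields only $4d$. Instead I would apply the lemma directly with $Z = \hat\theta - \theta^*$. Again $\E{Z} = \E{\hat\theta} - \E{\theta^*} = 0$, so $\E{ZZ^\top} = \var{\hat\theta - \theta^*}$; the key computation is that, since $\hat\theta$ and $\theta^*$ are independent and identically distributed, the cross term vanishes and $\var{\hat\theta - \theta^*} = \var{\hat\theta} + \var{\theta^*} = 2\var{\theta^*}$. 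Feeding this into the lemma gives $\E{\abs{\langle \hat\theta - \theta^*, \phi\rangle}}^2 \le 2d\,\E{\phi^\top\var{\theta^*}\phi}$. I expect the only place demanding care — rather than a real obstacle — to be exactly this constant: the factor $2$ (hence $2d$ versus $d$) is produced solely by the cross-term cancellation under independence, so one must verify that $\hat\theta \independent \theta^*$ with equal means, which the i.i.d.\ hypothesis guarantees.
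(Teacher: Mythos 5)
Your proposal is correct and follows essentially the same route as the paper: both proofs instantiate the trace form of \cref{lem:linear-decoupling} (your choice $X=\phi$, $Z=\hat\theta-\E{\hat\theta}$ or $Z=\hat\theta-\theta^*$ is just the paper's assignment with the roles of $X$ and $Z$ swapped, which is immaterial by symmetry of the trace), and both obtain the factor $2$ from $\E{(\hat\theta-\theta^*)(\hat\theta-\theta^*)^\top}=2\var{\theta^*}$ via the vanishing cross term under the i.i.d.\ hypothesis. Your explicit remark that the lemma does not require independence of $X$ and $Z$ matches the paper's own justification for why $\phi$ may depend on $\theta^*$ or $\hat\theta$.
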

\begin{proof}
  Notice the \cref{lem:linear-decoupling} do not require the independence between $X$ and $Z$. Therefore, we can apply \cref{lem:linear-decoupling} with setting $X = \hat{\theta} - \E{\hat{\theta}}$ and $Z = \phi$ to yield
  \begin{align*}
    \E{ \abs{ \left\langle \hat{\theta} - \E{ \hat{\theta}}, \phi \right\rangle } }^2
    \le d \Tr\left( \var{ \hat{\theta} } \E{ \phi \phi^{\top}} \right)
    \stackrel{(b)}{=} d \E{ \phi^{\top} \var{ {\theta}^*} \phi}
  \end{align*}
  where $(b)$ is from $\var{\theta^*} = \var{\hat{\theta}}$ by the i.i.d condition and the linearity of expectation and trace operator.

  Similarly, to upper bound the RHS of the second inequality in the corollary, we can apply \cref{lem:linear-decoupling} with setting $X = \hat{\theta} - \theta^*$ and $Z = \phi$.
  Notice that by i.i.d condition between $\theta^*$ and $\hat{\theta}$,
  \begin{align*}
    \E{ \left( \theta^* - \hat{\theta} \right) \left( \theta^* - \hat{\theta} \right)^\top }
     & = \E{ \left( \theta^*  - \E{\theta^*} + \E{\hat{\theta}} - \hat{\theta} \right)  \left( \theta^*  - \E{\theta^*} + \E{\hat{\theta}} - \hat{\theta} \right)^\top } \\
     & = \var{\theta^* - \E{\theta^*}} + \var{\hat{\theta} - \E{\hat{\theta}} }
    = 2 \var{\theta^*}
  \end{align*}
  Then we derive,
  $
    \E{ \abs{ \left\langle \hat{\theta} - \theta^*, \phi \right\rangle } }^2
    \le 2 d \E{ \phi^{\top} \var{\theta^*} \phi}.
  $
\end{proof}

\subsection{Proof of \cref{prop:regret-decompose}}
\begin{proof}
  Recall the definition of $\hist{\ell, h} = (\hist{\ell}, \pi_{\ell}, \hat{O}_{\ell, h})$, we have
  \begin{align}
    \label{eq:abs-estimation-error-conditional-expect}
    \E{ \abs{ \Delta_{\ell, h}(s_{\ell, h}) } \given \hist{\ell} }
     & = \E{ \E{ \abs{ \Delta_{\ell, h}(s_{\ell, h}) }  \given \hist{\ell, h}} \given \hist{\ell} } = \E[\ell]{ \E[\ell, h]{ \abs{ \Delta_{\ell, h}(s_{\ell, h}) } } }
  \end{align}
  With application of \cref{lem:linear-decomposition}, we have
  \begin{align}
    \E[\ell]{ \E[\ell, h]{ \abs{ \Delta_{\ell, h}(s_{\ell, h}) } } }
     & \le \E[\ell]{ \E[\ell, h]{ \sum_{j=h}^{H-1} \E[\ell, j]{ \abs{\left\langle \hat{\theta}_{\ell, j} - \theta^*_j, X_{\ell, j} \right\rangle } } } }  = \sum_{j=h}^{H-1} \E[\ell]{ \abs{\left\langle \hat{\theta}_{\ell, j} - \theta^*_j, X_{\ell, j} \right\rangle } }.
    \label{eq:abs-decompose-0}
  \end{align}
  Combining \eqref{eq:abs-estimation-error-conditional-expect} and \eqref{eq:abs-decompose-0} yields,
  \begin{align}
    \label{eq:abs-decompose-1}
    \E[\ell]{ \abs{ \Delta_{\ell, h}(s_{\ell, h}) } } \le \sum_{j=h}^{H-1} \E[\ell]{ \abs{\left\langle \hat{\theta}_{\ell, j} - \theta^*_j, X_{\ell, j} \right\rangle } }
  \end{align}
  Recall the definition $\Delta_{\ell, 0}(s) = V^{\hat{M}_{\ell}}_{\pi_{\ell}, h}(s) - V^{M}_{\pi_{\ell}, h}(s)$. Since the initial state $s_{\ell, 0} \sim \rho$, we have
  $\barV[\hat{M}_{\ell}]{\pi_{\ell}} - \barV[M]{\pi_{\ell}} = \E{ \Delta_{\ell, 0}(s_{\ell, 0}) \given \hat{M}_{\ell}, M, \pi_{\ell} }$.
  It follows that,
  \begin{align}
    \label{eq:initial-state-expectation}
    \E[\ell]{ \barV[\hat{M}_{\ell}]{\pi_{\ell}} - \barV[M]{\pi_{\ell}} }
     & = \E[\ell]{ \Delta_{\ell, 0}(s_{\ell, 0}) }
    = \E[\ell]{ \E[\ell, 0]{ \Delta_{\ell, 0}(s_{\ell, 0}) } }
  \end{align}
  By applying \cref{lem:linear-decomposition},
  \begin{align}
    \E[\ell]{ \E[\ell, 0]{ \Delta_{\ell, 0}(s_{\ell, 0}) } } & = \E[\ell]{ \E[\ell, 0]{ \sum_{j=0}^{H-1} \E[\ell, j]{ \left\langle \hat{\theta}_{\ell, j} - \theta^*_j, X_{\ell, j} \right\rangle } } }
    = \sum_{h=0}^{H-1} \E[\ell]{  \left\langle \hat{\theta}_{\ell, h} - \theta^*_h, X_{\ell, h} \right\rangle }
    \label{eq:decompose-0}
  \end{align}
  If \cref{asmp:mutual-independence} holds, by Corollary~\ref{cor:linear-decompose}, we can further derive,
  \begin{align}
    \E[\ell]{ \E[\ell, 0]{ \Delta_{\ell, 0}(s_{\ell, 0}) } } & = \E[\ell]{ \E[\ell, 0]{ \sum_{j=0}^{H-1} \E[\ell, j]{ \left\langle \hat{\theta}_{\ell, j} - \E[\ell]{\hat{\theta}_{\ell, j}}, X_{\ell, j} \right\rangle } } }
    = \sum_{h=0}^{H-1} \E[\ell]{  \left\langle \hat{\theta}_{\ell, h} - \E[\ell]{\hat{\theta}_{\ell, h}}, X_{\ell, h} \right\rangle }
    \label{eq:decompose-1}
  \end{align}
  Combining \eqref{eq:initial-state-expectation} and \eqref{eq:decompose-1} yields,
  \begin{align}
    \E[\ell]{ \barV[\hat{M}_{\ell}]{\pi_{\ell}} - \barV[M]{\pi_{\ell}} }
    = \sum_{h=0}^{H-1} \E[\ell]{  \left\langle \hat{\theta}_{\ell, h} - \E[\ell]{\hat{\theta}_{\ell, h}}, X_{\ell, h} \right\rangle }
    \label{eq:decompose-2}
  \end{align}
  Notice that $\hat{\theta}_{\ell, h}$ is an i.i.d copy of $\theta^*_h$ conditioned on history $\hist{\ell}$ by the algorithm of posterior sampling.
  Recall the definition of $\rmGamma_{\ell, h} = \var[\ell]{\theta^*_h}$, with Corollary~\ref{cor:linear-decoupling},
  we have
  \begin{align}
    \label{eq:decouple-0}
    \E[\ell]{ \abs{ \left \langle \hat{\theta}_{\ell, h} - \E[\ell]{ \hat{\theta}_{\ell, h} }, X_{\ell, h} \right \rangle } }^2
     & \le d \E[\ell]{ X_{\ell, h}^{\top} \rmGamma_{\ell, h} X_{\ell, h}}    \\
    \label{eq:decouple-1}
    \E[\ell]{ \abs{ \left\langle \hat{\theta}_{\ell, h} - \theta^*_h, X_{\ell, h} \right\rangle } }^2
     & \le 2 d \E[\ell]{ X_{\ell, h}^{\top} \rmGamma_{\ell, h} X_{\ell, h}}.
  \end{align}
  The RHS of the above inequality is possibly larger than the valid range of LHS. Therefore, we try to tighten the upper bound by clipping.
  First, note that for any $x= (h, s, a)\in \mathcal{X}$, any MDP $\hat{M}$ and any policy $\pi$, we have the following equality holds
  \begin{align*}
    \abs{ (\hat{\trans} - \trans ) V^{\hat{M}}_{\pi, h+1}(x) } = \abs{ \hat{\trans} V^{\hat{M}}_{\pi,h+1}(x) -\trans V^{\hat{M}}_{\pi,h+1}(x) } \le H \quad a.s.
  \end{align*}
  Recall that $(\hat{\theta}_{\ell, h} - \theta^*_h)^\top X_{\ell, h} = ( \hat{\trans} - \trans)\hat{V}_{\ell, h}(x_{\ell, h})\le H$. Therefore, for any constant $\sigma_{\min} \ge H/\sqrt{d}$ and any random variable $\bar{\sigma}_{\ell, h} \ge \sigma_{\min}$ almost surely,
  \begin{align}
    \label{eq:one-episode-upper-bound}
    \E[\ell]{ \abs{ \left\langle \hat{\theta}_{\ell, h} - \E[\ell]{ \hat{\theta}_{\ell, h}}, X_{\ell, h} \right\rangle } }
     & \le \sqrt{d} \E[\ell]{ \bar{\sigma}_{\ell, h}^2 \cdot 1 \wedge (X_{\ell, h}/\bar{\sigma}_{\ell, h} )^\top {\rmGamma_{\ell, h}} (X_{\ell, h}/\bar{\sigma}_{\ell, h} ) }^{1/2}  \\
    \label{eq:one-episode-upper-bound-abs}
    \E[\ell]{ \abs{ \left\langle \hat{\theta}_{\ell, h} - \theta^*_h, X_{\ell, h} \right\rangle } }
     & \le \sqrt{2d} \E[\ell]{ \bar{\sigma}_{\ell, h}^2 \cdot 1 \wedge (X_{\ell, h}/\bar{\sigma}_{\ell, h} )^\top {\rmGamma_{\ell, h}} (X_{\ell, h}/\bar{\sigma}_{\ell, h} ) }^{1/2}
  \end{align}
  Recall we choose $\bar{\sigma}^2_{\ell, h} := \E[\ell, h]{\sigma_{\ell, h}^2} \vee \sigma^2_{\min}$ for analysis and thus \cref{eq:one-episode-upper-bound} and \cref{eq:one-episode-upper-bound-abs} are valid.
  Finally, by plugging \eqref{eq:one-episode-upper-bound} into \eqref{eq:decompose-2},
  \begin{align*}
    \E{ \sum_{\ell=1}^{L} \barV[\hat{M}_\ell]{\pi_{\ell}} - \barV[M]{\pi_{\ell}} }
     & = \E{ \sum_{\ell=1}^{L} \E[\ell]{ \barV[\hat{M}_\ell]{\pi_{\ell}} - \barV[M]{\pi_{\ell}} } }                                                                                                                                                         \\
     & \le \sqrt{d} \E{ \sum_{\ell=1}^{L} \sum_{h = 0 }^{H-1} \E[\ell]{ \bar{\sigma}_{\ell, h}^2 \cdot 1 \wedge (X_{\ell, h}/\bar{\sigma}_{\ell, h} )^\top {\rmGamma_{\ell, h}} (X_{\ell, h}/\bar{\sigma}_{\ell, h} ) }^{1/2} }                             \\
     & \le \sqrt{d} \sum_{\ell=1}^{L} \sum_{h = 0 }^{H-1} \E{ \bar{\sigma}_{\ell, h}^2 \cdot 1 \wedge (X_{\ell, h}/\bar{\sigma}_{\ell, h} )^\top {\rmGamma_{\ell, h}} (X_{\ell, h}/\bar{\sigma}_{\ell, h} ) }^{1/2}                                         \\
     & \stackrel{(\star)}{\le} \sqrt{d} \sum_{\ell=1}^{L} \sum_{h = 0 }^{H-1} \E{ \bar{\sigma}_{\ell, h}^2 }^{1/2} \cdot \E{ 1 \wedge (X_{\ell, h}/\bar{\sigma}_{\ell, h} )^\top {\rmGamma_{\ell, h}} (X_{\ell, h}/\bar{\sigma}_{\ell, h} ) }^{1/2}         \\
     & \le \sqrt{d \E{ \sum_{\ell=1}^{L} \sum_{h = 0 }^{H-1} \bar{\sigma}_{\ell, h}^2 } \cdot \E{ \sum_{\ell=1}^{L} \sum_{h = 0 }^{H-1}  1 \wedge (X_{\ell, h}/\bar{\sigma}_{\ell, h} )^\top {\rmGamma_{\ell, h}} (X_{\ell, h}/\bar{\sigma}_{\ell, h} ) } }
  \end{align*}
  Here, $\sigma_{\min} = H$.

\end{proof}

\section{Proof sketch of \cref{thm:inhomo-bayesian-regret-0}}
\label{sec:sketch}
\subsection{Quantify information gain of the environment via posterior variance reduction}

The first step is to mathematically quantify the reduction of uncertainty of the unknown environment when new information gathered in.
We use the quantity \emph{posterior variance} of the unknown model parameters $\rmGamma_{\ell, h} := \var{\theta^*_h \given \hist{\ell}}$ defined in \cref{def:value-cor-feat} as a uncertainty measurement.
From the perspective of value-targeted model learning stated in \cref{sec:value_pers}, with the input as $X_{\ell, h} := \phi_{\hat{V}_{\ell, h+1}}(x_{\ell, h}) = \sum_{s' \in \mathcal{S}} \phi\left(s^{\prime} \mid x_{\ell, h} \right) \hat{V}_{\ell, h+1}\left(s' \right)$ and response as $Y_{\ell, h+1} := \hat{V}_{\ell, h+1}(s_{\ell, h+1})$, we establish \cref{thm:vd} on posterior variance reduction on the unknown model parameters as following:
\begin{align*}
    \E{ \rmGamma_{\ell+1, h} \given \hist{\ell, h} }^{-1} \succeq \rmGamma_{\ell, h}^{-1} + \frac{1}{\E{{\sigma}^2_{\ell, h} \given \hist{\ell, h}}}X_{\ell, h} X_{\ell, h}^\top,
\end{align*}
where $\sigma^2_{\ell, h} = \var{\hat{V}_{\ell, h+1}(s_{\ell, h+1}) \given M, \hist{\ell, h}}$ is the variance of value due to the transitional noise for sampling $s_{\ell, h+1} \sim \trans(x_{\ell, h})$ given the true MDP $M$ and history $\hist{\ell, h} = (\hist{\ell}, \pi_\ell, x_{\ell, 0}, \hat{V}_{\ell, 1}, \ldots, x_{\ell, h}, \hat{V}_{\ell, h+1})$.
This is the key enabling technique with which we can measure the reduction involving with the information of value variance under posterior probability measure without using any concentration bounds.

\subsection{Regret decomposition to cumulative posterior value variance and cumulative potential}

The second step is to decompose the regret into the the so-called cumulative variance and cumulative potentials which will be explained later.
\begin{itemize}
    \item To facilitate the analysis, we define a bounded-below sequence $\bar{\sigma}_{\ell, h}$ such that
          $$\bar{\sigma}^2_{\ell, h}:= \E{ \sigma^2_{\ell, h} \given \hist{\ell, h} } \vee \sigma^2_{\min}. $$
          In the analysis, we have twp different choice of $\sigma_{\min}$, leading to two different bounds.
          \begin{itemize}
              \item Choice 1: choose $\sigma_{\min} := H$
          \end{itemize}
    \item Since the regret is equal to the estimation error in expectation. By \cref{prop:regret-decompose}, we bound the estimation error by
          \begin{align*}
              \E{ \underbrace{\sum_{\ell=1}^L \barV[\hat{M}_{\ell}]{\pi_{\ell}} - \barV[M]{\pi_{\ell}} }_{\textbf{Estimation error}} } \le \sqrt{d \underbrace{ \E{ \sum_{\ell = 1}^{L} \sum_{h=0}^{H-1} \bar{\sigma}_{\ell, h}^2 } }_{\textbf{Cumulative variance}} \underbrace{ \E{ \sum_{\ell = 1}^{L} \sum_{h=0}^{H-1} 1 \wedge \norm{ \frac{X_{\ell, h}}{\bar{\sigma}_{\ell, h}}}^2_{\rmGamma_{\ell, h} } } }_{\textbf{Cumulative potential}} }
          \end{align*}
    \item The cumulative variance term is approximately the expected summation of the value variance  $\var{\hat{V}_{\ell, h+1}(s_{\ell, h+1}) \given M, \hist{\ell, h}}$ due to the transitional noise.
    \item The cumulative potential is the expected summation of the estimation error of $\theta^*_h$ in the direction of the variance-normalized feature $X_{\ell, h}/\bar{\sigma}_{\min}$ conditioned on the historical data available in the beginning of episode $\ell$, i.e. $\norm{ \frac{X_{\ell, h}}{\bar{\sigma}_{\ell, h}}}^2_{\rmGamma_{\ell, h} }$.
\end{itemize}

\subsection{Bounding cumulative variance}

\begin{proposition}[Cumulative variance of value under virtual model]
    \label{prop:sum-virtual-value-variance}
    Recall the definition $\bar{\sigma}_{\ell, h}^2 = \E[\ell, h]{ \sigma_{\ell, h}^2 } \vee \sigma_{\min}^2$
    where $\sigma_{\ell, h}^2 = \var{ \hat{V}_{\ell, h+1}(s_{\ell, h+1}) \given M, \hist{\ell, h} }$.
    \begin{itemize}
        \item[1.] If $\sigma_{\min} = H$, we have
              \begin{align*}
                  \E{ \sum_{\ell = 1}^{L} \sum_{h=0}^{H-1} \bar{\sigma}_{\ell, h}^2 }
                  \le LH^3
              \end{align*}
    \end{itemize}

\end{proposition}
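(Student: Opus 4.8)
The plan is to bound the entire cumulative variance term by controlling each summand $\bar{\sigma}_{\ell,h}^2$ uniformly, so that for this particular choice $\sigma_{\min}=H$ no concentration argument or recursion is needed at all. The whole content reduces to the elementary boundedness of the virtual value functions.

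First I would observe that every virtual value function is bounded in $[0,H]$. Since $\hat{V}_{\ell, h+1} = \V{\hat{M}_{\ell}}{\pi_{\ell}, h+1}$ is the expected sum of the rewards $r_{j+1}$ over stages $j=h+1,\ldots,H-1$ under the sampled model $\hat{M}_\ell$, and each mean reward lies in $[0,1]$, we get $\hat{V}_{\ell, h+1}(s)\in[0,H]$ for every state $s$ and every realization of the sampled model. Next, for any realization of the true MDP $M$ and any history $\hist{\ell, h}$, the quantity $\sigma_{\ell,h}^2 = \var{\hat{V}_{\ell, h+1}(s_{\ell, h+1}) \given M, \hist{\ell, h}}$ is the variance of a $[0,H]$-valued random variable, hence $\sigma_{\ell,h}^2 \le \E{\hat{V}_{\ell, h+1}(s_{\ell, h+1})^2 \given M, \hist{\ell, h}} \le H^2$ almost surely (Popoviciu's inequality even gives the sharper $H^2/4$, but $H^2$ is all we need).

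Taking the conditional expectation preserves this bound, so $\E[\ell,h]{\sigma_{\ell,h}^2}\le H^2$. With $\sigma_{\min}=H$ we have $\sigma_{\min}^2=H^2$, and therefore $\bar{\sigma}_{\ell,h}^2 = \E[\ell,h]{\sigma_{\ell,h}^2}\vee H^2 \le H^2$ almost surely. Summing over the $LH$ index pairs (there are $L$ episodes and $H$ stages $h\in\{0,\ldots,H-1\}$ per episode) gives
\[
  \E{ \sum_{\ell=1}^{L}\sum_{h=0}^{H-1} \bar{\sigma}_{\ell,h}^2 } \le LH\cdot H^2 = LH^3 ,
\]
which is exactly the claim. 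Note that this uses neither \cref{asmp:feature} nor \cref{asmp:mutual-independence}; it is purely a boundedness statement.

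There is essentially no obstacle for this choice of $\sigma_{\min}$, precisely because setting $\sigma_{\min}=H$ forces $\bar{\sigma}_{\ell,h}^2$ to be constant, which discards all the heteroscedastic structure of the per-stage value variances. The genuinely hard case is the complementary regime with a much smaller $\sigma_{\min}$: there one cannot bound each $\sigma_{\ell,h}^2$ crudely by $H^2$ and must instead invoke the law of total variance, exploiting that the sum of the per-stage conditional value variances along a trajectory telescopes into the variance of the total return and is therefore bounded by $H^2$ rather than $H\cdot H^2$, shaving a factor of $H$. That refinement is a separate argument; for Choice 1 as stated, the bound above is immediate.
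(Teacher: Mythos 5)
Your proof is correct and is essentially identical to the paper's argument for this case: since $\hat{V}_{\ell,h+1}\in[0,H]$ almost surely, $\E[\ell,h]{\sigma_{\ell,h}^2}\le H^2=\sigma_{\min}^2$, hence $\bar{\sigma}_{\ell,h}^2=H^2$ and the sum over the $LH$ index pairs is at most $LH^3$. Your closing remark about the complementary small-$\sigma_{\min}$ regime requiring the law of total variance also matches the paper's treatment of its second case.
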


For choice 2 of $\sigma_{\min} = H/\sqrt{d}$: We bound the cumulative variance with $\mathcal{O}(LH^2)$ by \cref{prop:sum-virtual-value-variance} provided the condition $d \gtrsim H$. The proof is to relate the variance of virtual value functions $\var{ \hat{V}_{\ell, h}(s_{\ell, h+1}) \given M, \hist{\ell, h}}$ to the variance of true value functions $\var{\V{M}{\pi_{\ell}, h}(s_{\ell, h+1}) \given M, \hist{\ell, h}}$. The proof sketch is as follows
\begin{itemize}
    \item By \cref{lem:diff-virtual-true}, we upper bound the variance difference
          $$\var{\V{M}{\pi_{\ell}, h}(s_{\ell, h+1}) \given M, \hist{\ell, h}} - \var{ \hat{V}_{\ell, h}(s_{\ell, h+1}) \given M, \hist{\ell, h}},$$ which is a dominated term.
    \item (\textbf{Dominant term.}) Then, by \cref{lem:sum-true-value-variance}, we use a conditioned version of law of total variance to relate the expected summation of variance of value to the variance of summation of rewards:
          \begin{align}
              \label{eq:sketch-ltw}
              \E{ \sum_{h=0}^{H-1} \var{ \V{M}{\pi_\ell, h+1}(s_{\ell, h+1}) \given M, \hist{\ell, h} }  } = \E{ \var{ \sum_{j=0}^{H-1} R( x_{\ell, j} ) \given[\big] M, \hist{\ell, 0} }} \le H^2.
          \end{align}
          A naive upper bound the summation of the value variance will be $H^3$. Therefore, with the time-varying variance information in hand, \eqref{eq:sketch-ltw} has a direct consequence on the $\sqrt{H}$ factor improvement in the final regret bound.
\end{itemize}

\subsection{Bounding cumulative potential}
\begin{proposition}[Cumulative potential]
    \label{prop:cumulative-potential}
    Under the above assumptions and definitions,
    for any $L \in \mathbb{Z}_+$ and any $h \in [H]$,
    \begin{align*}
        \E{\sum_{\ell = 1}^{L} 1\wedge \norm{ \frac{X_{\ell, h}}{\bar{\sigma}_{\ell, h}}}^2_{\rmGamma_{\ell, h} } }
        \le 2 \log \det \left( \mI + \frac{LH^2}{\sigma_{\min}^2} \rmGamma_{1, h} \right)
    \end{align*}
    where $\rmGamma_{1, h}$ is the prior covariance of $\theta^*_h$.
\end{proposition}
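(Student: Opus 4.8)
The plan is to run the standard elliptical-potential (self-normalized) argument, but with the posterior covariance $\rmGamma_{\ell,h}$ playing the role of the inverse design matrix and with the posterior variance reduction of \cref{thm:vd} supplying the one-step update. First I would clip each summand with the elementary inequality $1 \wedge t \le 2\log(1+t)$, valid for $t \ge 0$, applied to $t = \norm{X_{\ell,h}/\bar{\sigma}_{\ell,h}}^2_{\rmGamma_{\ell,h}} = \frac{1}{\bar{\sigma}_{\ell,h}^2} X_{\ell,h}^\top \rmGamma_{\ell,h} X_{\ell,h}$. The matrix determinant lemma then turns the logarithm into a log-det increment of the posterior precision,
\begin{align*}
  \log\left( 1 + \frac{1}{\bar{\sigma}_{\ell,h}^2} X_{\ell,h}^\top \rmGamma_{\ell,h} X_{\ell,h} \right) = \log\det\left( \rmGamma_{\ell,h}^{-1} + \frac{1}{\bar{\sigma}_{\ell,h}^2} X_{\ell,h} X_{\ell,h}^\top \right) - \log\det \rmGamma_{\ell,h}^{-1}.
\end{align*}

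Next I would invoke the precision form of the posterior variance reduction in \cref{rem:sigmabar}, namely $\rmGamma_{\ell,h}^{-1} + \frac{1}{\bar{\sigma}_{\ell,h}^2} X_{\ell,h} X_{\ell,h}^\top \preceq \E[\ell,h]{\rmGamma_{\ell+1,h}}^{-1}$, and use monotonicity of $\log\det$ to replace the Gaussian one-step increment by the inverse of the expected posterior covariance; concavity of $\log\det$ (operator Jensen) then gives $\log\det \E[\ell,h]{\rmGamma_{\ell+1,h}}^{-1} \le \E[\ell,h]{\log\det \rmGamma_{\ell+1,h}^{-1}}$. Since the clipped summand is $\sigma(\hist{\ell,h})$-measurable, taking the full expectation and applying the tower property makes the right-hand side telescope over $\ell = 1, \ldots, L$, leaving $2\big(\E{\log\det \rmGamma_{L+1,h}^{-1}} - \log\det \rmGamma_{1,h}^{-1}\big)$.

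To close I would control the accumulated information. By \cref{asmp:feature} applied to $\hat{V}_{\ell,h+1}/H$ (which is $[0,1]$-valued, so $\norm{X_{\ell,h}}_2 \le H$ by linearity of $\phi_{V}$ in $V$) together with $\bar{\sigma}_{\ell,h}^2 \ge \sigma_{\min}^2$, each rank-one term satisfies $\frac{1}{\bar{\sigma}_{\ell,h}^2} X_{\ell,h} X_{\ell,h}^\top \preceq \frac{H^2}{\sigma_{\min}^2}\mI$, hence the total information added over the $L$ episodes is $\preceq \frac{LH^2}{\sigma_{\min}^2}\mI$ in Loewner order. Combined with the identity $\log\det\big(\mI + \frac{LH^2}{\sigma_{\min}^2}\rmGamma_{1,h}\big) = \log\det\big(\rmGamma_{1,h}^{-1} + \frac{LH^2}{\sigma_{\min}^2}\mI\big) - \log\det\rmGamma_{1,h}^{-1}$, this accumulation bound is exactly what converts the telescoped endpoint into the claimed $2\log\det(\mI + \frac{LH^2}{\sigma_{\min}^2}\rmGamma_{1,h})$.

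The hard part is precisely the endpoint term $\E{\log\det \rmGamma_{L+1,h}^{-1}}$. In the exactly-Gaussian case the recursion $\rmGamma_{\ell+1,h}^{-1} = \rmGamma_{\ell,h}^{-1} + \frac{1}{\bar{\sigma}_{\ell,h}^2}X_{\ell,h}X_{\ell,h}^\top$ holds pathwise, so the final precision is dominated by the deterministic accumulation $\rmGamma_{1,h}^{-1} + \frac{LH^2}{\sigma_{\min}^2}\mI$ on every path and the argument is immediate; but \cref{thm:vd} supplies the update only in conditional expectation, so a single realization of the posterior precision need not obey this domination. The route I would pursue to make the telescoping rigorous is to keep $\E[\ell,h]{\rmGamma_{\ell+1,h}}$ intact and track the deterministic sequence of expected covariances, using operator monotonicity and concavity of the parallel-sum update map $\Sigma \mapsto (\Sigma^{-1} + \frac{1}{\bar{\sigma}_{\ell,h}^2}X_{\ell,h}X_{\ell,h}^\top)^{-1}$ so that the terminal $\log\det$ is controlled by the deterministic information cap $\frac{LH^2}{\sigma_{\min}^2}\mI$ rather than by any realization-dependent collapse of the posterior.
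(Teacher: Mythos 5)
Your first three steps (clipping by $1\wedge t\le 2\log(1+t)$, the determinant-lemma rewrite, the precision form of \cref{thm:vd} from \cref{rem:sigmabar}, and Jensen via concavity of $\log\det$) are all sound, and the formal telescoping they produce is correct. The genuine gap is exactly the endpoint you flag: after telescoping you are left with $2\bigl(\E{\log\det \rmGamma_{L+1,h}^{-1}} - \log\det\rmGamma_{1,h}^{-1}\bigr)$, and this quantity is \emph{not} controlled by the deterministic cap $\tfrac{LH^2}{\sigma_{\min}^2}\mI$. The pathwise domination $\rmGamma_{L+1,h}^{-1}\preceq \rmGamma_{1,h}^{-1}+\sum_{\ell}\bar{\sigma}_{\ell,h}^{-2}X_{\ell,h}X_{\ell,h}^{\top}$ is exactly what \cref{thm:vd} does \emph{not} give — the update holds only in conditional expectation — and for non-Gaussian priors the posterior covariance can become singular with positive probability (e.g.\ a discrete prior whose posterior collapses to an atom), in which case $\E{\log\det\rmGamma_{L+1,h}^{-1}}=+\infty$ and the telescoped bound is vacuous. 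Your proposed repair ("track the deterministic sequence of expected covariances") does not exist as stated: the features $X_{\ell,h}$ are adaptive, so there is no deterministic accumulated-information matrix to compare against, and chaining $\E[\ell,h]{\rmGamma_{\ell+1,h}}$ across episodes is precisely the step that needs an argument.

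The paper closes this by running the recursion \emph{backwards} and never leaving the family $f(\mSigma,x)=\log\det(\mI+x\mSigma)$, which stays finite even when $\mSigma$ is singular. Concretely, it proves by induction on the number of remaining episodes that
\begin{align*}
  \E[\ell,h]{ \sum_{k=\ell}^{L} 1\wedge \norm{\bar{X}_{k,h}}^2_{\rmGamma_{k,h}} } \le 2 f\bigl(\rmGamma_{\ell,h}, (L-\ell+1)H^2/\sigma_{\min}^2\bigr),
\end{align*}
using at each step (i) Jensen on the concave map $\mSigma\mapsto\log\det(\mI+x\mSigma)$ to pull $\E[\ell,h]{\cdot}$ inside, (ii) \cref{thm:vd} plus monotonicity to replace $\E[\ell,h]{\rmGamma_{\ell+1,h}}$ by the rank-one-deflated $\rmGamma_{\ell,h}'$, and (iii) the potential lemma (\cref{lem:potential}) $\log(1+V^{\top}\mSigma V)+f(\mSigma',x)\le f(\mSigma,x+V^{\top}V)$ with $\norm{\bar{X}_{\ell,h}}^2\le H^2/\sigma_{\min}^2$ to absorb the current term and increment the budget $x$. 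This is the backward-induction analogue of your "parallel-sum" idea; you would need to carry out something equivalent to make your forward route rigorous, so as written the proof is incomplete.
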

If \cref{asmp:bounded-norm} holds, we can further upper bound the RHS by the \cref{fact:bounded}
\begin{align}
    \log \det (\mI + (LH^2/ \sigma^2_{\min}) \rmGamma_{1, h}) \le d \log ( 1 + LH^2B^2/\sigma^2_{\min} )
\end{align}
\begin{itemize}
    \item[1.] $\sigma_{\min} = H$, the upper bound becomes $\mathcal{O} ( d \log L )$
\end{itemize}

\subsubsection{Put everything together}
Putting everything together, by \cref{prop:regret-decompose}
\begin{itemize}
    \item[1.] $\sigma_{\min} = H$,
          we obtained the upper bound $\mathcal{O}(d H^{3/2} \sqrt{T\log T})$ in \cref{thm:inhomo-bayesian-regret-0,rem:prior-free-0}.
\end{itemize}

\section{Technical details for regret decomposition}
\label{sec:est_err_decompose}
\subsection{Generic estimation error decomposition}
\begin{lemma}[Estimation error decomposition]
  \label{lem:decomposition}
  For any MDP $\hat{M}$ and policy $\pi$,
  let $\hat{\trans}$ be the transition model under $\hat{M}$ and $s_1, \ldots, s_{H-1}$ is rolled out under the policy $\pi$ over the MDP $M$.
  Define $\Delta_{h}(s) = \V{\hat{M}}{\pi, h}(s) - \V{M}{\pi, h}(s)$ and we have,
  \begin{align*}
    \Delta_{h}(s_h)
    & = \left(\hat{\trans} - \trans \right) V^{\hat{M}}_{\pi, h+1}( h, s_{h}, \pi(s_h) ) 
    + \Delta_{h+1}(s_{h+1}) + d_{h}(\Delta_{h+1}) \\
    \abs{\Delta_{h}(s_h) }
    & \le \abs{\left(\hat{\trans} - \trans \right) V^{\hat{M}}_{\pi, h+1}( h, s_{h}, \pi(s_h) ) }
    + \abs{ \Delta_{h+1}(s_{h+1}) } + d_{h}(\abs{ \Delta_{h+1}} ).
  \end{align*}
  where $d_{h}(\Delta) = [ \trans \Delta ](h, s_h, \pi(s_h)) - \Delta (s_{h+1})$.
\end{lemma}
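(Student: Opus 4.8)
The plan is to apply the Bellman equation to each MDP at stage $h$ and subtract. Because the reward is known and therefore shared between the virtual model $\hat{M}$ and the true model $M$, the reward terms cancel, leaving
\[
  \Delta_h(s) = \hat{\trans}\V{\hat{M}}{\pi, h+1}(h, s, \pi(s)) - \trans \V{M}{\pi, h+1}(h, s, \pi(s)).
\]
This matches the setup of the simulation lemma (\cref{lem:simulation}), which likewise isolates the transition mismatch.

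First I would insert the telescoping term $[\trans\V{\hat{M}}{\pi, h+1}](h, s, \pi(s))$ by adding and subtracting it. The difference $\hat{\trans}\V{\hat{M}}{\pi,h+1} - \trans \V{\hat{M}}{\pi,h+1}$ is exactly the value-targeted model error $(\hat{\trans} - \trans)\V{\hat{M}}{\pi, h+1}(h, s, \pi(s))$, while $\trans \V{\hat{M}}{\pi, h+1} - \trans \V{M}{\pi, h+1} = [\trans \Delta_{h+1}](h, s, \pi(s))$ by linearity of the expectation operator $\trans$. This produces the one-step identity $\Delta_h(s) = (\hat{\trans} - \trans)\V{\hat{M}}{\pi, h+1}(h, s, \pi(s)) + [\trans \Delta_{h+1}](h, s, \pi(s))$.

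Second I would evaluate at the rolled-out state $s = s_h$ and convert the expected next-stage error $[\trans \Delta_{h+1}](h, s_h, \pi(s_h))$ into the realized error at the sampled next state. By the definition $d_h(\Delta) := [\trans \Delta](h, s_h, \pi(s_h)) - \Delta(s_{h+1})$, we have $[\trans \Delta_{h+1}](h, s_h, \pi(s_h)) = \Delta_{h+1}(s_{h+1}) + d_h(\Delta_{h+1})$, and substituting this yields the first claimed equation. Here $s_{h+1}$ is drawn from $\trans(h, s_h, \pi(s_h))$, so $d_h(\cdot)$ is a mean-zero increment, although this martingale property is not needed for the identity itself.

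Finally, for the absolute-value inequality I would start from the one-step identity, apply the triangle inequality to peel off $\abs{(\hat{\trans}-\trans)\V{\hat{M}}{\pi,h+1}(h,s_h,\pi(s_h))}$, and bound the remaining term by Jensen's inequality for the conditional expectation, $\abs{[\trans \Delta_{h+1}](h,s_h,\pi(s_h))} \le [\trans \abs{\Delta_{h+1}}](h,s_h,\pi(s_h))$. Rewriting the right-hand side through the same definition applied to $\abs{\Delta_{h+1}}$, namely $[\trans \abs{\Delta_{h+1}}](h,s_h,\pi(s_h)) = \abs{\Delta_{h+1}(s_{h+1})} + d_h(\abs{\Delta_{h+1}})$, gives the second inequality. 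The computation is routine; the only point requiring care is this Jensen step moving the absolute value outside the expectation operator $\trans$, which is precisely where the equality of the first display degrades into the inequality of the second.
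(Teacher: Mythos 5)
Your proposal is correct and follows essentially the same route as the paper's proof: cancel the shared reward via the Bellman equation, telescope by adding and subtracting $\trans V^{\hat{M}}_{\pi,h+1}$ to isolate the value-targeted model error plus $[\trans\Delta_{h+1}]$, rewrite the latter through the definition of $d_h$, and for the absolute-value version apply the triangle inequality together with $\abs{\trans\Delta_{h+1}} \le \trans\abs{\Delta_{h+1}}$. No gaps.
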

\begin{proof}
  Now we derive the decomposition of the value gap,
  \begin{align*}
    \Delta_h(s_h) & = V^{\hat{M}}_{\pi, h}(s_h) -  V^{{M}}_{\pi, h}(s_h) \\
    & = \hat{\trans} V^{\hat{M}}_{\pi, h+1}(h, s_h, \pi(s_h)) - \trans V^{M}_{\pi, h+1}(h, s_h, \pi(s_h)) \\
    & = \left(\hat{\trans} - \trans \right) V^{\hat{M}}_{\pi, h+1}(h, s_h, \pi(s_h)) 
    + \trans \left( V^{\hat{M}}_{\pi, h+1} - V^{M}_{\pi, h+1} \right) (h, s_h, \pi(s_h)) \\
    & = \left(\hat{\trans} - \trans \right) V^{\hat{M}}_{\pi, h+1}(h, s_h, \pi(s_h)) 
    + \trans \Delta_{h+1} (h, s_h, \pi(s_h)) \\
    & =  \left(\hat{\trans} - \trans \right) V^{\hat{M}}_{\pi, h+1}(h, s_h, \pi(s_h)) 
    + \Delta_{h+1} (s_{h+1})  + d_{h}(\Delta_{h+1}).
  \end{align*}
  And similarly the decomposition for absolute value gap,
  \begin{align*}
    \abs{\Delta_h(s_h)} 
    & \le \abs{\left(\hat{\trans} - \trans \right) V^{\hat{M}}_{\pi, h+1}(h, s_h, \pi(s_h)) }
    + \abs{ \trans \left( V^{\hat{M}}_{\pi, h+1} - V^{M}_{\pi, h+1} \right) (h, s_h, \pi(s_h)) } \\
    & \le \abs{\left(\hat{\trans} - \trans \right) V^{\hat{M}}_{\pi, h+1}(h, s_h, \pi(s_h)) }
    +  \trans \abs{\left( V^{\hat{M}}_{\pi, h+1} - V^{M}_{\pi, h+1} \right)} (h, s_h, \pi(s_h)) \\
    & = \abs{\left(\hat{\trans} - \trans \right) V^{\hat{M}}_{\pi, h+1}(h, s_h, \pi(s_h)) }
    + \trans \abs{ \Delta_{h+1} } (h, s_h, \pi(s_h)) \\
    & = \abs{ \left(\hat{\trans} - \trans \right) V^{\hat{M}}_{\pi, h+1}(h, s_h, \pi(s_h)) }
    + \abs{\Delta_{h+1} (s_{h+1}) } + d_{h}(\abs{ \Delta_{h+1} } ).
  \end{align*}
  where $\abs{\Delta}(s) = \abs{ \Delta(s) }$.
\end{proof}

\begin{lemma}
  \label{lem:martingale-diff}
  If a function $\Delta$ is measurable w.r.t the sigma-algebra generated by random variables $Z$ and $\trans(h, \cdot, \cdot) $,
  then, for any $x = (h, s, a) \in \mathcal{X}_h$ and $s'_x \sim \trans(x)$, the conditional expectation
  \begin{align*}
    \E{ \trans \Delta (x) - \Delta (s'_x) \given Z, \trans(h, \cdot, \cdot)} = 0,
  \end{align*}
  where $\trans \Delta (x) := \E[s'_{x} \sim \trans(x)] {\Delta(x)}$.
\end{lemma}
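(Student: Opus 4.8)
The plan is to exploit the fact that conditioning on the $\sigma$-algebra $\sigma(Z, \trans(h,\cdot,\cdot))$ simultaneously freezes both the function $\Delta$ and the one-step kernel $\trans(x)$ at $x = (h,s,a)$, leaving only the transition noise that generates $s'_x$ as the active source of randomness. First I would invoke the measurability hypothesis: since $\Delta$ is $\sigma(Z, \trans(h,\cdot,\cdot))$-measurable, it is a fixed (deterministic) function once we condition on $Z$ and $\trans(h,\cdot,\cdot)$, and $\trans(x)=\trans(h,s,a)$ is likewise a fixed probability measure under this conditioning. It follows that $\trans\Delta(x) = \E[s'_x \sim \trans(x)]{\Delta(s'_x)}$ is itself $\sigma(Z, \trans(h,\cdot,\cdot))$-measurable, hence a constant with respect to the outer conditional expectation, so it can be pulled out unchanged.

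The second step is to identify $\E{\Delta(s'_x) \given Z, \trans(h,\cdot,\cdot)}$ with $\trans\Delta(x)$. By construction the next state is drawn as $s'_x \sim \trans(x)$, so conditionally on $\sigma(Z, \trans(h,\cdot,\cdot))$ its law is exactly $\trans(x)$; integrating the now-fixed function $\Delta$ against this conditional law reproduces, by the very definition of the operator $\trans$, the quantity $\trans\Delta(x)$. Combining the two observations and using linearity of conditional expectation then gives $\E{\trans\Delta(x) - \Delta(s'_x) \given Z, \trans(h,\cdot,\cdot)} = \trans\Delta(x) - \trans\Delta(x) = 0$, which is the assertion.

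The one point that requires genuine care — and the main obstacle — is justifying that the conditional distribution of $s'_x$ given $\sigma(Z, \trans(h,\cdot,\cdot))$ is precisely $\trans(x)$ and is not perturbed by the extra information carried in $Z$. This amounts to checking that the transition noise used to sample $s'_x$ is conditionally independent of $Z$ given the kernel $\trans(h,\cdot,\cdot)$, which holds in the intended applications because $Z$ collects only information available strictly before the draw of $s'_x$ (in this paper's usage $Z$ will instantiate to the history together with the virtual value function, all of which are measurable prior to the next-state transition). Once this conditional-independence bookkeeping is in place, the conclusion follows immediately from the definition of $\trans\Delta$, so I expect the proof itself to be very short.
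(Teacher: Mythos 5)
Your proof is correct: the paper states \cref{lem:martingale-diff} without any proof, and your argument --- pulling the $\sigma(Z,\trans(h,\cdot,\cdot))$-measurable quantity $\trans\Delta(x)$ out of the conditional expectation and identifying $\E{\Delta(s'_x)\given Z,\trans(h,\cdot,\cdot)}$ with $\trans\Delta(x)$ because the conditional law of $s'_x$ given that $\sigma$-algebra is exactly $\trans(x)$ --- is precisely the tower-property computation the paper implicitly relies on. Your further observation that the statement tacitly requires the transition noise generating $s'_x$ to be conditionally independent of $Z$ given $\trans(h,\cdot,\cdot)$ is the right caveat to flag: this hypothesis is not spelled out in the lemma but is satisfied everywhere it is invoked (e.g.\ in the proofs of \cref{lem:simulation} and \cref{lem:simulation-history}, where $Z$ collects the policy, the history, and the virtual value functions, all of which are determined before the draw of $s_{h+1}$).
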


\subsection{Proof of \cref{lem:simulation} in \cref{sec:generic-regret-decompose}}
\label{sec:proof-simulation}

\begin{proof}
  Note $\barV[\hat{M}]{\pi} - \barV[M]{\pi} = \E{ \Delta_0(s_{0}) \given \hat{M}, M, \pi }$. We can verify that $\Delta_h = \V{\hat{M}}{\pi, h} - \V{M}{\pi, h}$ is measurable w.r.t the sigma-algebra generated by $\pi, \trans([h: H), \cdot, \cdot)$ and $\hat{\trans}([h:H), \cdot, \cdot)$.
  Also note that conditioned on $\pi$, the action is $\pi(s_h) = a_h$.
  Therefore, together with \cref{lem:martingale-diff}, the conditional expectation is
  \begin{align*}
    \E{ [\trans \Delta_{h+1}](h, s_h, \pi(s_h)) ) \given \hat{M}, M, \pi} & = \E{ [\trans \Delta_{h+1}](x_h) \given \hat{M}, M, \pi}  = \E{ \Delta_{h+1}(s_{h+1}) \given \hat{M}, M, \pi}.
  \end{align*}
  By recursive application of \cref{lem:decomposition} form $h=0$ to $h= H-1$ and noticing $$\E{d_{h}(\Delta_{h+1}) \given \hat{M}, M, \pi} = 0,$$ we conclude the result.
\end{proof}

\subsection{Proof of \cref{lem:simulation-history} in \cref{sec:value-targeted-model}}
\begin{proof}
  Recall the definition of value-augmented observations $\hat{O}_{\ell, h}$ and the definition of history $\hist{\ell}$ and $\hist{\ell, h}$ in \cref{sec:value_pers}.
  Also recall the short notation $\E[\ell, h]{ \cdot } := \E{\cdot \given \hist{\ell, h}}$ and $\E[\ell]{\cdot} := \E{ \cdot \given \hist{\ell} } $,
\begin{align}
  \E[\ell, h]{ \Delta_{\ell, h}(s_{\ell, h}) } = \E{ \Delta_{\ell, h}(s_{\ell, h}) \given \hist{\ell, h} } = \E{\Delta_{\ell, h}(s_{\ell, h}) \given \hist{\ell}, \pi_{\ell}, \hat{O}_{\ell, h} }
\end{align}
An application of \cref{lem:decomposition} yields,
\begin{align}
  \E[\ell, h]{ \Delta_{\ell, h}(s_{\ell, h}) } 
  & = 
  \E[\ell]{ \left( \hat{\trans}_{\ell} - \trans \right) \hat{V}_{\ell, h+1}(h, s_{\ell, h}, \pi_{\ell}(s_{\ell, h}) ) 
  + \Delta_{\ell, h+1}(s_{\ell, {h+1}}) + d_{\ell, h}(\Delta_{\ell, h+1})\given \pi_{\ell}, \hat{O}_{\ell, h} } \nonumber \\
  & = 
  \E[\ell]{ \left( \hat{\trans}_{\ell} - \trans \right) \hat{V}_{\ell, h+1}(x_{\ell, h} ) 
  + \Delta_{\ell, h+1}(s_{\ell, {h+1}}) + d_{\ell, h}(\Delta_{\ell, h+1})\given \pi_{\ell}, \hat{O}_{\ell, h} } \nonumber \\
  & \stackrel{(a)}{=} 
  \E[\ell]{ \left( \hat{\trans}_{\ell} - \trans \right) \hat{V}_{\ell, h+1}(x_{\ell, h} ) 
  + \Delta_{\ell, h+1}(s_{\ell, {h+1}}) \given \pi_{\ell}, \hat{O}_{\ell, h} } \nonumber \\
  & = 
  \E[\ell, h]{ \left( \hat{\trans}_{\ell} - \trans \right) \hat{V}_{\ell, h+1}(x_{\ell, h} ) 
  + \E[\ell, h+1]{ \Delta_{\ell, h+1}(s_{\ell, {h+1}}) } }
  \label{eq:conditional-recursion}
\end{align}
where the second equality is due to $\pi_{\ell}$-conditioning with short notation $x_{\ell, h} = (h, s_{\ell, h}, a_{\ell, h})$.
The equality $(a)$ holds by \cref{lem:martingale-diff} with the fact that the function $\Delta_{\ell, h+1} = \V{\hat{M}}{\pi_\ell, h+1} - \V{{M}}{\pi_\ell, h+1} = \hat{V}_{\ell, h+1} - \V{{M}}{\pi_\ell, h+1}$ is measurable w.r.t the sigma-algebra generated by $M, \pi_{\ell}, \hat{V}_{\ell, h+1}$ and the fact $M$ contains h-stage transition kernel $\trans(h, \cdot, \cdot)$,
\begin{align*}
  \E[\ell]{ d_{\ell, h}(\Delta_{\ell, h+1}) \given M, \pi_{\ell}, O_{\ell, h} }
  & = \E[\ell]{ \trans \Delta_{\ell, h+1} (x_{\ell, h}) - \Delta_{\ell, h+1}(s_{\ell, h+1}) \given M, \pi_{\ell}, O_{\ell, h-1}, s_{\ell, h}, a_{\ell, h}, \hat{V}_{\ell, h+1} } \\
  & = \E[\ell]{ \trans \Delta_{\ell, h+1} (x_{\ell, h}) - \Delta_{\ell, h+1}(s_{\ell, h+1}) \given M, \pi_{\ell}, \hat{V}_{\ell, h+1}, O_{\ell, h-1}, x_{\ell, h} } \\
  & = 0.
\end{align*}
Further expand the recursion of \cref{eq:conditional-recursion}, we obtain
\begin{align*}
  \E[\ell, h]{ \Delta_{\ell, h}(s_{\ell, h}) } 
  = \E[\ell, h]{ \sum_{j=h}^{H-1} \E[\ell, j]{ \left( \hat{\trans}_{\ell} - \trans \right) \hat{V}_{\ell, j+1}(x_{\ell, j} ) } }.
\end{align*}
Similarly, by applying \cref{lem:decomposition}, we have
\begin{align*}
  \E[\ell, h]{ \abs{ \Delta_{\ell, h}(s_{\ell, h}) } } 
  \le \E[\ell, h]{ \sum_{j=h}^{H-1} \E[\ell, j]{ \abs{ \left( \hat{\trans}_{\ell} - \trans \right) \hat{V}_{\ell, j+1}(x_{\ell, j} ) } } }.
\end{align*}
\end{proof}

\subsection{Proof of \cref{lem:linear-decomposition}: Error decomposition in linear mixture MDPs from value-targeted perspective}

Recall the short notation of estimation error in stage $h$ of episode $\ell$, $\Delta_{\ell, h}(s) = V^{\hat{M}_{\ell}}_{{\pi}_{\ell}, h}(s) - V^{M}_{{\pi}_{\ell}, h}(s)$. We have the following lemmas prepared for \cref{prop:regret-decompose}.

\begin{proof}
For any $x = (h, s, a) \in \mathcal{X}$, recall that $\phi_V(x) = \sum_{s'} \phi(s' \given x) V(s') $.
For any MDP $\hat{M}$ in the class of linear mixture MDPs, the transition kernel can be represented as $\hat{\trans}(x) = \langle \hat{\theta}_h, \phi(\cdot \given x) \rangle$.
Then, we have the following property 
\begin{align*}
  \hat{\trans} V(x) = \sum_{s'\in \states} \left\langle \hat{\theta}_h, \phi(s' \given x)  \right\rangle V(s') = \left\langle \hat{\theta}_h, \sum_{s'\in \states} \phi(s' \given x) V(s') \right\rangle = \left\langle \hat{\theta}_h, \phi_{V}(x) \right\rangle.  
\end{align*}
With the short notation $X_{\ell, h} = \phi_{\hat{V}_{\ell, h+1}}(x_{\ell, h})$, we have $ \left( \hat{\trans}_{\ell} - \trans \right) \hat{V}_{\ell, h+1}(x_{\ell, h} ) = \left\langle \hat{\theta}_{\ell, h} - \theta^*_h, X_{\ell, h} \right\rangle$.
As a corollary of \cref{lem:simulation-history}, 
\begin{align}
  \label{eq:linear-decompose-0}
  \E[\ell, h]{ \Delta_{\ell, h}(s_{\ell, h}) } 
  = \E[\ell, h]{ \sum_{j=h}^{H-1} \E[\ell, j]{ \left( \hat{\trans}_{\ell} - \trans \right) \hat{V}_{\ell, j+1}(x_{\ell, j} ) } } 
  = \E[\ell, h]{ \sum_{j=h}^{H-1} \E[\ell, j]{ \left\langle \hat{\theta}_{\ell, j} - \theta^*_j, X_{\ell, j} \right\rangle } } 
\end{align}
and similarly from \cref{lem:simulation-history},
\begin{align*}
  \E[\ell, h]{ \abs{ \Delta_{\ell, h}(s_{\ell, h}) } }
  \le \E[\ell, h]{ \sum_{j=h}^{H-1} \E[\ell, j]{ \abs{\left\langle \hat{\theta}_{\ell, j} - \theta^*_j, X_{\ell, j} \right\rangle } } } 
\end{align*}

\end{proof}

\subsection{Proof of \cref{cor:linear-decompose}}

\begin{proof}
Since $X_{\ell, h} = \phi_{\hat{V}_{\ell, h+1}}(x_{\ell, h})$ is a deterministic function of $x_{\ell, h}$ and $\hat{V}_{\ell, h+1}$, by recalling the definition of $\hist{\ell, h}$, we conclude $X_{\ell, h}$ is $\sigma(\hist{\ell, h})$-measurable, resulting the following equation,
\begin{align}
  \E[\ell, h]{ \left\langle \theta^*_h, X_{\ell, h} \right\rangle } 
  & = \left\langle \E[\ell, h]{\theta^*_h}, {X_{\ell, h}} \right\rangle
\end{align}
Notice $x_{\ell, h}$ is rolled out from initial stage to the stage $h$ by policy $\pi_{\ell}$ under $M$.
Therefore, conditioned on $\hist{\ell}$, the state-action pair $x_{\ell, h} = (h, s_{\ell, h}, a_{\ell, h})$ is dependent on $ ( \theta^*_0, \ldots, \theta^*_{h-1} )$ and $\hat{\Theta}_{\ell}$ but is independent of $\theta^*_h$ by \cref{asmp:mutual-independence}.
Also notice the fact that conditioned on $\hist{\ell}$, the imagined value function $\hat{V}_{\ell, h+1}$ is dependent on $\hat{\Theta}_{\ell}$ but is independent of $\Theta^*$.
We conclude that $X_{\ell, h} = \phi_{\hat{V}_{\ell, h+1}}(x_{\ell, h})$ is independent of $\theta^*_h$ conditioned on $\hist{\ell}$.

Similarly, conditioned on $\hist{\ell}$, the random variables $x_{\ell, 0}, \hat{V}_{\ell, 1}, \ldots, x_{\ell, h-1}, \hat{V}_{\ell, h}, x_{\ell, h}, \hat{V}_{\ell, h+1}$ and $\pi_{\ell}$ are all independent of $\theta^*_h$.
Thus, we also have
\begin{align*}
    \E[\ell, h]{\theta^*_h} = \E{\theta^*_h \given \hist{\ell}, \pi_{\ell}, \hat{O}_{\ell, h}} = \E{ \theta^*_h \given \hist{\ell} } = \E[\ell]{\theta^*_h}
\end{align*}
By posterior sampling, we have 
$$\E[\ell]{\theta^*_h} = \E[\ell]{\hat{\theta}_{\ell, h}}.$$
It follows that
\begin{align*}
    \E[\ell, h]{ \langle \hat{\theta}_{\ell, h} - \theta^*_h, X_{\ell, h} \rangle }
    = \E[\ell, h]{ \langle \hat{\theta}_{\ell, h} - \E[\ell]{\hat{\theta}_{\ell, h}}, X_{\ell, h} }.
\end{align*}
Then, one can derive from \eqref{eq:linear-decompose-0},
\begin{align*}
  \E[\ell, h]{ \Delta_{\ell, h}(s_{\ell, h}) } 
  = \E[\ell, h]{ \sum_{j=h}^{H-1} \E[\ell, j]{ \left\langle \hat{\theta}_{\ell, j} - \E[\ell]{\hat{\theta}_{\ell, j}}, X_{\ell, j} \right\rangle } } 
\end{align*}
\end{proof}

\section{Proofs of propositions in \cref{sec:sketch}}
\subsection{Proof of \cref{prop:sum-virtual-value-variance}:
Bounding cumulative value variance}
\label{sec:bounding-variance-potential}
\begin{proof}
For the first case $\sigma_{\min} = H$, since $\hat{V}_{\ell, h+1} \le H$ almost surely, we have $\bar{\sigma}_{\ell, h}^2 = H^2$ almost surely, then the cumulative variance is trivially bounded.

For the second case, the Lemma and Corollary used in the proof can be found in \cref{sec:variance-relation}.
  To upper bound the cumulative variance of value under virtual model, we need to first bound the variance of true value by \cref{cor:sum-true-value-variance} and then relate the variance of virtual value to the variance of the true value by \cref{lem:diff-virtual-true} and \cref{prop:regret-decompose}.

  The first step is to upper bound the LHS to the summation of posterior variance of virtual variance. Since $\bar{\sigma}^2_{\ell, h} := \E[\ell, h]{\sigma_{\ell, h}^2} \vee \sigma^2_{\min}$,
  \begin{align}
    \E{ \sum_{\ell = 1}^{L} \sum_{h=0}^{H-1} \bar{\sigma}_{\ell, h}^2 }
    & \le \E{ \sum_{\ell = 1}^{L} \sum_{h=0}^{H-1} \sigma_{\ell, h}^2 + \sigma_{\min}^2 } \nonumber \\
    & = \E{ \sum_{\ell = 1}^{L} \sum_{h=0}^{H-1} \var[\ell,h]{ \hat{V}_{\ell, h+1}(s_{\ell, h+1}) \given M } } + LH \sigma_{\min}^2
    \label{eq:sigma-bar-upper-bound}
  \end{align}
  By \cref{cor:sum-true-value-variance}, there is an sharp upper bound on the cumulative variance of true value,
  \begin{align}
  \label{eq:ltw}
    \E{ \sum_{\ell = 1}^{L} \sum_{h=0}^{H-1} \var[\ell,h]{ \V{M}{\pi_{\ell}, h+1}(s_{\ell, h+1}) \given M } } \le LH^2.
  \end{align}
  Note that an naive upper bound on LHS of \cref{eq:ltw} would be $LH^3$. The sharp $LH^2$ upper is indeed the source of improvement on the dependence of horizon factor $H$ in the final regret bound. 
  Therefore, the second step is to relate the variance of $ \hat{V}_{\ell, h} = \V{\hat{M}^{\ell}}{\pi_{\ell}, h}$ to the variance of $\V{M}{\pi_{\ell}, h}$, i.e.
  \begin{align*}
    \var[\ell,h]{ \V{\hat{M}^{\ell}}{\pi_{\ell}, h+1}(s_{\ell, h+1}) \given M } 
    & =
    \var[\ell,h]{ \V{M}{\pi_{\ell}, h+1}(s_{\ell, h+1}) \given M } + \delta_{\ell, h},
  \end{align*}
  where $\delta_{\ell, h} := \var[\ell,h]{ \V{\hat{M}^{\ell}}{\pi_{\ell}, h+1}(s_{\ell, h+1} \given M } 
  - \var[\ell,h]{ \V{M}{\pi_{\ell}, h+1}(s_{\ell, h+1}) \given M }$ is bounded by \cref{lem:diff-virtual-true},
  \begin{align*}
    \E{ \sum_{\ell = 1}^{L} \sum_{h=0}^{H-1} \delta_{\ell, h} }
    \le 2H \E{ \sum_{\ell=0}^{L-1} \sum_{h=0}^{H-1}  \abs{ \Delta_{\ell, h+1} (s_{\ell, h+1}) } }.
  \end{align*}
  By \cref{prop:regret-decompose},
  \begin{align*}
    \E{ \sum_{\ell=0}^{L-1} \sum_{h=0}^{H-1}  \abs{ \Delta_{\ell, h+1} (s_{\ell, h+1}) } }
    \le H \sqrt{2d \E{ \sum_{\ell = 1}^{L} \sum_{h=0}^{H-1} \bar{\sigma}_{\ell, h}^2 } \E{ \sum_{\ell = 1}^{L} \sum_{h=0}^{H-1} 1 \wedge \norm{ \frac{X_{\ell, h}}{\bar{\sigma}_{\ell, h}}}^2_{\rmGamma_{\ell, h} } } }
  \end{align*}
  Combine the above results and \eqref{eq:sigma-bar-upper-bound}, we have
  \begin{align*}
    \E{ \sum_{\ell = 1}^{L} \sum_{h=0}^{H-1} \bar{\sigma}_{\ell, h}^2 }
    \le LH \sigma_{\min}^2 + LH^2 
    + 2 H^2 \sqrt{2 d \E{ \sum_{\ell = 1}^{L} \sum_{h=0}^{H-1} \bar{\sigma}_{\ell, h}^2 } 
    \underbrace{ \E{ \sum_{\ell = 1}^{L} \sum_{h=0}^{H-1} 1\wedge \norm{ \frac{X_{\ell, h}}{\bar{\sigma}_{\ell, h}}}^2_{\rmGamma_{\ell, h} } } }_{S} }
  \end{align*}
  Since $x \le a \sqrt{x} + b $ implies $x \le \frac{3}{2}( a^2 + b)$, finally we derive
  \begin{align*}
    \E{ \sum_{\ell = 1}^{L} \sum_{h=0}^{H-1} \bar{\sigma}_{\ell, h}^2 }
    \le \frac{3}{2} \left( HL \sigma_{\min}^2 + H^2L + 8 d H^4 S \right),
  \end{align*}
  where $ S = \E{ \sum_{\ell = 1}^{L} \sum_{h=0}^{H-1} 1\wedge \norm{ \frac{X_{\ell, h}}{\bar{\sigma}_{\ell, h}}}^2_{\rmGamma_{\ell, h} } } $.
\end{proof}

\subsection{Proof of \cref{prop:cumulative-potential}: Bounding cumulative potential}
\label{sec:appendix-potential}
The proof of \cref{prop:cumulative-potential} is by the recursive application of \cref{thm:vd} and the following \cref{lem:potential}. Note that various version of the potential lemma \citep{auer2003using,dani2008stochastic,shipra2013thompson,li2019nearly,hamidi2021randomized} is used for analyzing linear bandit and reinforcement learning with function approximation. We apply the following version of potential lemma in the proof of \cref{prop:cumulative-potential}. The proof of \cref{lem:potential} can be found in \cref{sec:potential_lemma}.
\begin{lemma}[Potential lemma~\citep{hamidi2021randomized}]
  \label{lem:potential}
  For $x>0$ and positive semi-definite matrix $\mSigma$, define $f({\mSigma}, x)=\log \det(\mI+x {\mSigma})$. Then, for any vector $V \in \R^{d}$, we have
  \[
  \log \left(1+V^{\top} {\mSigma} V\right)+f\left({\mSigma}^{\prime}, x\right) \leq f\left({\mSigma}, x+V^{\top} V\right)
  \]
  where 
  \[
    \mSigma^{\prime}:=\mSigma-\frac{\mSigma V V^{\top} \mSigma}{1+V^{\top} \mSigma V}=\mSigma^{\frac{1}{2}}\left(\mI-\frac{\mSigma^{\frac{1}{2} } V V^{\top} \mSigma^{\frac{1}{2}}}{1+V^{\top} \mSigma V}\right) \mSigma^{\frac{1}{2}}
  \]
  and equivalently $\mSigma^{\prime-1}=\mSigma^{-1}+V V^{\top},$ using Sherman-Morrison formula.
\end{lemma}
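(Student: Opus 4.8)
The plan is to exponentiate the claimed inequality and reduce it, via standard determinant identities, to a clean comparison between a rank-one update and a scalar (full-rank) update of a fixed positive matrix. Writing $c := V^\top V$, the target is equivalent to
\[
(1+V^\top \mSigma V)\,\det(\mI+x\mSigma') \le \det\bigl(\mI+(x+c)\mSigma\bigr).
\]
First I would assume $\mSigma \succ 0$, so that all inverses below exist; the positive semi-definite case then follows at the end by continuity, since each quantity depends continuously on $\mSigma \succeq 0$ and the denominator $1+V^\top \mSigma V \ge 1$ stays bounded away from zero.

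The first step is to rewrite each determinant by pulling out $\det(\mSigma)$. Using $\mI+x\mSigma' = \mSigma'(\mSigma'^{-1}+x\mI)$ together with the given identity $\mSigma'^{-1}=\mSigma^{-1}+VV^\top$ and the matrix determinant lemma $\det(\mSigma^{-1}+VV^\top)=\det(\mSigma^{-1})(1+V^\top \mSigma V)$, I obtain $\det(\mSigma')=\det(\mSigma)/(1+V^\top \mSigma V)$ and hence
\[
(1+V^\top \mSigma V)\,\det(\mI+x\mSigma') = \det(\mSigma)\,\det(\mSigma^{-1}+x\mI+VV^\top).
\]
On the other side, $\det(\mI+(x+c)\mSigma)=\det(\mSigma)\,\det(\mSigma^{-1}+x\mI+c\mI)$. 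The prefactor $\log(1+V^\top \mSigma V)$ has now been absorbed exactly, and after cancelling the common positive factor $\det(\mSigma)$ the claim reduces, with $A:=\mSigma^{-1}+x\mI\succ0$, to
\[
\det(A+VV^\top)\le\det(A+\norm{V}^2\mI).
\]

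The second step is to prove this reduced inequality in the eigenbasis of $A$. By the matrix determinant lemma the left side equals $\det(A)(1+V^\top A^{-1}V)$, while $\det(A+\norm{V}^2\mI)=\det(A)\det(\mI+\norm{V}^2A^{-1})$; cancelling $\det(A)$ leaves $1+V^\top A^{-1}V \le \det(\mI+\norm{V}^2 A^{-1})$. Diagonalizing $A^{-1}$ with eigenvalues $\mu_i>0$ and writing $V=\sum_i\beta_i e_i$ in the corresponding orthonormal basis, this becomes the scalar inequality $1+\sum_i \mu_i\beta_i^2 \le \prod_i(1+\norm{V}^2\mu_i)$ with $\norm{V}^2=\sum_j\beta_j^2$. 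Expanding the product and discarding the higher-order nonnegative terms gives $\prod_i(1+\norm{V}^2\mu_i)\ge 1+\norm{V}^2\sum_i\mu_i$, and since $\norm{V}^2=\sum_j\beta_j^2\ge\beta_i^2$ for every $i$ we get $\norm{V}^2\sum_i\mu_i\ge\sum_i\beta_i^2\mu_i=V^\top A^{-1}V$, which closes the chain.

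I expect the only genuinely delicate point to be the bookkeeping in the first step, namely tracking which instance of the matrix determinant lemma applies where so that the $1+V^\top \mSigma V$ prefactor cancels \emph{exactly} rather than leaving residual slack, together with the mild technical care needed to pass from $\mSigma\succ0$ to general $\mSigma\succeq0$ by the limiting argument. The reduced inequality in the second step is the conceptual heart, as it is precisely where the gap between $V^\top V$ and $V^\top \mSigma V$ is spent, but its proof is elementary once the problem is placed in diagonal form.
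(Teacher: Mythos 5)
Your proof is correct and follows essentially the same route as the paper's: both arguments reduce the claim to comparing the rank-one update $VV^{\top}$ against the scalar update $V^{\top}V\,\mI$ inside a determinant, absorb the $\log\left(1+V^{\top}\mSigma V\right)$ prefactor exactly via the identity $\det(\mI+ZZ^{\top})=1+Z^{\top}Z$ (equivalently, your matrix determinant lemma computation of $\det(\mSigma')$), and pass from invertible to general positive semi-definite $\mSigma$ by a continuity/perturbation argument. The only difference is cosmetic: the paper invokes $VV^{\top}\preceq V^{\top}V\,\mI$ together with monotonicity of $\log\det$ under the Loewner order, whereas you verify the same reduced inequality $\det(A+VV^{\top})\le\det(A+\norm{V}^2\mI)$ by hand in the eigenbasis of $A^{-1}$.
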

  By the assumption of the feature vector and the fact value is bounded by [0,H], for each episode $\ell$ and each stage
  \begin{align*}
    \norm{ \phi_{V_{M^{\ell}, h+1}^{\pi^{\ell}}}(s^{\ell}_h, a^{\ell}_h) }^2 & = \norm{ \sum_{s' \in \states} \phi( s' \given s^{\ell}_h, a^{\ell}_h ) V_{M^{\ell}, h+1}^{\pi^{\ell}}(s') }^2 \le H^2
  \end{align*}
  Since $\bar{\sigma}^2_{\ell, h} = \max( \sigma^2_{\min}, \E[\ell, h]{\sigma^2_{\ell, h}} ) \ge \sigma^2_{\min} $, we have
  \begin{align*}
    \norm{ \frac{X_{\ell, h} }{\bar{\sigma}_{\ell, h}} } \le \frac{H}{\sigma_{\min}}
  \end{align*}
  For any $x\in (-1, +\infty)$, we have $1 \wedge x \le 2 \log_2(1+x)$ and therefore
  \begin{align*}
    1 \wedge \norm{ \frac{X_{\ell, h} }{\bar{\sigma}_{\ell, h}}}^2_{\rmGamma_{\ell, h} } \le 2 \log \left( 1 + \norm{ \frac{X_{\ell, h} }{\bar{\sigma}_{\ell, h}}}^2_{\rmGamma_{\ell, h} } \right)
  \end{align*}
  Define the short notation
  $\bar{X}_{\ell, h} = X_{\ell, h}/{\bar{\sigma}_{\ell, h}}$, we will prove the proposition by {mathematical induction}:
  For base case $L= 1$,
  \begin{align*}
    \log ( 1 + \norm{\bar{X}_{1, h}}^2_{\rmGamma_{1, h}}) 
    & = \log \det \left( \mI + \rmGamma_{1, h}^{1/2} \bar{X}_{1, h} \bar{X}_{1, h}^\top {\rmGamma_{1, h}^{1/2} } \right) \\
    & \le \log \det \left( \mI +  \frac{H^2}{\sigma^2_{\min}} \rmGamma_{1, h}^{1/2} \mI \rmGamma_{1, h}^{1/2} \right) \\
    &
    = \log \det \left( \mI +  \frac{H^2}{\sigma^2_{\min}} \rmGamma_{1, h} \right)
  \end{align*}
  by the fact $(1 + X^\top X) = \det( \mI + X X^\top )$ and $A A^\top \preceq \norm{A}^2 I$.

  We use the {induction hypothesis for $L-\ell$} with prior $\prob( \cdot \given \hist{\ell + 1})$, that is
  \begin{align*}
    \E[\ell + 1]{ \sum_{k = \ell + 1}^{L} 1 \wedge \norm{ \bar{X}_{k, h}}^2_{\rmGamma_{k, h}} } 
    \le 2 \log \det \left( \mI + \frac{(L - \ell ) H^2}{\sigma_{\min}^2} \rmGamma_{\ell + 1, h} \right)
  \end{align*}
  Define function $f(\mSigma, x) = \log \det(\mI + x\mSigma)$
  and denote $$\rmGamma_{\ell, h}' := \rmGamma_{\ell, h} - \frac{ \rmGamma_{\ell, h} \bar{X}_{\ell, h} \bar{X}_{\ell, h}^\top \rmGamma_{\ell, h} }{ 1 + \bar{X}_{\ell, h}^\top \rmGamma_{\ell, h} \bar{X}_{\ell, h} } .$$
  We have
  \begin{align*}
    \E[\ell, h]{ \sum_{k = \ell + 1}^{L} 1 \wedge \norm{ \bar{X}_{k, h}}^2_{\rmGamma_{k, h}} }  
    &= \E[\ell, h]{ \E[\ell+1]{ \sum_{k = \ell + 1}^{L} 1 \wedge \norm{ \bar{X}_{k, h}}^2_{\rmGamma_{k, h}} } }  \\
    & \le 2 \log \det \left( \mI + \frac{(L-\ell)H^2}{\sigma_{\min}^2} \E[\ell, h]{ \rmGamma_{\ell+1, h} } \right) \\
    & \stackrel{(a)}{\le} 2 \log \det \left( \mI + \frac{(L-\ell)H^2}{\sigma_{\min}^2} \left( \rmGamma_{\ell, h} - \frac{ \rmGamma_{\ell, h} \bar{X}_{\ell, h} \bar{X}_{\ell, h}^\top \rmGamma_{\ell, h} }{ 1 + \bar{X}_{\ell, h}^\top \rmGamma_{\ell, h} \bar{X}_{\ell, h} } \right) \right) \\
    & = 2 f(\rmGamma_{\ell, h}', (L-\ell) H^2/\sigma_{\min}^2)
  \end{align*}
  where (a) is by the condition $\bar{\sigma}_{\ell, h} \ge \E[\ell, h]{\sigma_{\ell, h}}$ and \cref{thm:vd}.

  Now we use potential lemma (\cref{lem:potential}) to step forward,
  \begin{align*}
    \E[\ell, h]{ \sum_{k = \ell}^{L} 1 \wedge \norm{ \bar{X}_{k, h}}^2_{\rmGamma_{k, h}} }
    & = 1 \wedge \norm{ \bar{X}_{\ell, h}}^2_{\rmGamma_{\ell, h}} + \E[\ell, h]{ \sum_{k = \ell + 1}^{L} 1 \wedge \norm{ \bar{X}_{k, h}}^2_{\rmGamma_{k, h}} } \\
    & \le 2 \left( \log( 1 + \bar{X}_{\ell, h}^\top \rmGamma_{\ell, h} \bar{X}_{\ell, h} ) + f(\rmGamma_{\ell, h}', (L-\ell) H^2/\sigma_{\min}^2) \right) \\
    & \stackrel{(b)}{\le} 2 f(\rmGamma_{\ell, h}, (L-\ell+1) H^2/\sigma_{\min}^2)
  \end{align*}
  where (b) is by \cref{lem:potential}.
  Notice the fact that RHS is $\sigma(\hist{\ell})$-measurable,
  by taking $\E[\ell]{\cdot}$ on both sides of the inequality,
  we conclude that the statement holds for $L - \ell + 1$ with prior $\prob(\cdot \given \hist{\ell})$.

\section{Technical lemmas and facts}
\subsection{Posterior variance reduction for general prior and noise distributions}
\label{sec:posterior-vr-general}
Posterior variance reflects how uncertain we are about a random variable based on existed information.
The progress of learning can be measured by how much variance is reduced when new information comes in.
This section provides a tool to track the variance reduction when prior and noise distribution is \emph{not} Gaussian.
For the purpose of simplicity and generality, we state the lemma in a different set of short notations.

Let $(\Omega, \mathcal{F}, \mathbb{P})$ be a probability space and $\mathcal{F}_{0} \subseteq \mathcal{F}_{1} \subseteq$ $\cdots \subseteq \mathcal{F}$ be an increasing sequence of $\sigma$-algebras that are meant to encode the information available up to time $t$.
Let $\theta^{\star}: \Omega \rightarrow \mathbb{R}^{d}$ be the true parameters vector.
Let $X_{t}$ be a random variable that is possibly dependent on the history and some external source of randomness. 
\begin{assumption}
  \label{asmp:Xt-measurable}
  Assume $X_t$ is $\mathcal{F}_{t}$-measurable.
\end{assumption}
More information about $\theta^{\star}$ is then made available sequentially through a sequence of inputs $X_0, X_1, \cdots: \Omega \rightarrow \R^d$ and a sequence of observations $Y_{1}, Y_{2}, \cdots: \Omega \rightarrow \mathbb{R}$.
\begin{assumption}
  \label{asmp:noise-linear-model}
  Assume $Y_{t+1}$ is $\mathcal{F}_{t+1}$-measurable.
  Assume that for all $t \geq 0$,
  \[
  \E[t]{Y_{t+1} \mid \theta^{\star}}
  =\left\langle \theta^{\star}, X_{t} \right\rangle
  \]
  and there exists some random variable $\sigma_t: \Omega \rightarrow \R$,
  \[
    \var[t]{Y_{t+1} \mid  \theta^{\star} } = \sigma_t^{2} \quad a.s.
  \]
\end{assumption}
One possible way to generate the filtration satisfying the above assumptions is as following,
\[
\Fc_t = \sigma \left(X_{0}, Y_{1}, X_1, \ldots, X_{t-1}, Y_{t}, X_t \right).
\]
\begin{blockquote}
  \begin{lemma}[Variance reduction]
    \label{lem:vr-posterior}
    Define $\rmGamma_t := \var[t]{\theta^*}$ as the posterior covariance of $\theta^*$.
    When \cref{asmp:Xt-measurable} and \cref{asmp:noise-linear-model} hold, for all $t \geq 0$
    \begin{align*}
      \E{{\rmGamma}_{t+1} \given \mathcal{F}_{t} } 
      \preceq {\rmGamma}_{t} - \frac{\rmGamma_{t}^{\top} X_{t} X_{t}^{\top} {\rmGamma}_{t}}{ \E{ \sigma_t^{2} \given \Fc_t } + X_{t}^{\top} \rmGamma_{t} X_{t}} \quad a.s.
    \end{align*}
  \end{lemma}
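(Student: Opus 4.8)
The plan is to derive the operator inequality from two applications of the matrix law of total variance, combined with the fact that the exact posterior mean is the $L^2$-optimal predictor and hence dominates any affine predictor; this is precisely what lets us avoid any Gaussianity assumption. Throughout I condition on $\Fc_t$, on which $X_t$ is a deterministic vector by \cref{asmp:Xt-measurable}, and I write $\mu_t := \E{\theta^* \given \Fc_t}$.

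First I would decompose the one-step refinement $\Fc_t \subseteq \Fc_{t+1}$ via the law of total variance for $\theta^*$:
\begin{align*}
\rmGamma_t = \var{\theta^* \given \Fc_t} = \E{ \var{\theta^* \given \Fc_{t+1}} \given \Fc_t } + \var{ \E{\theta^* \given \Fc_{t+1}} \given \Fc_t },
\end{align*}
so that $\E{\rmGamma_{t+1}\given\Fc_t} = \rmGamma_t - \var{\E{\theta^*\given\Fc_{t+1}}\given\Fc_t}$, and the task reduces to lower-bounding the covariance of the posterior mean by the claimed rank-one matrix. Since $Y_{t+1}$ is $\Fc_{t+1}$-measurable (\cref{asmp:noise-linear-model}) and the filtration is increasing, $\mathcal{G} := \sigma(\Fc_t, Y_{t+1}) \subseteq \Fc_{t+1}$; a second application of the law of total variance to $\E{\theta^*\given\Fc_{t+1}}$ shows $\var{\E{\theta^*\given\Fc_{t+1}}\given\Fc_t} \succeq \var{\E{\theta^*\given\mathcal{G}}\given\Fc_t}$. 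Hence any extra information in $\Fc_{t+1}$ beyond the single observation $Y_{t+1}$ only helps, and it suffices to establish the bound for the one-step update $\mathcal{G}$.

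The crux is a scalar bound. Fix $u\in\R^d$ and set $\psi := \langle u, \theta^*\rangle$; by the law of total variance across $\Fc_t \subseteq \mathcal{G}$, lower-bounding $\var{\E{\psi\given\mathcal{G}}\given\Fc_t}$ is equivalent to upper-bounding $\E{\var{\psi\given\mathcal{G}}\given\Fc_t}$. The conditional posterior variance $\var{\psi\given\mathcal{G}}$ is the minimum mean-squared error of predicting $\psi$ from $Y_{t+1}$ given $\Fc_t$, so it is dominated by the error of the best affine predictor $a + bY_{t+1}$ with $\Fc_t$-measurable coefficients. Using \cref{asmp:noise-linear-model} and the tower property I compute the $\Fc_t$-conditional moments $\var{Y_{t+1}\given\Fc_t} = \E{\sigma_t^2\given\Fc_t} + X_t^\top \rmGamma_t X_t$ and $\mathrm{Cov}(\psi, Y_{t+1}\given\Fc_t) = u^\top \rmGamma_t X_t$, whence the best-linear-predictor formula gives
\begin{align*}
\E{\var{\psi\given\mathcal{G}}\given\Fc_t} \le u^\top\rmGamma_t u - \frac{\left(u^\top \rmGamma_t X_t\right)^2}{\E{\sigma_t^2\given\Fc_t} + X_t^\top\rmGamma_t X_t}.
\end{align*}
Subtracting this from $\var{\psi\given\Fc_t} = u^\top\rmGamma_t u$ and writing $(u^\top\rmGamma_t X_t)^2 = u^\top \rmGamma_t X_t X_t^\top \rmGamma_t u$ yields $u^\top \var{\E{\theta^*\given\mathcal{G}}\given\Fc_t} u \ge u^\top \left[\rmGamma_t X_t X_t^\top \rmGamma_t / (\E{\sigma_t^2\given\Fc_t} + X_t^\top \rmGamma_t X_t)\right] u$ for all $u$, i.e. the desired positive-semidefinite bound; chaining this with the two reductions above completes the proof.

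The main obstacle is this last step: one must recognize that the variance reduction achieved by the exact, possibly non-Gaussian, posterior is at least that of the optimal linear (Gaussian) update, realized here by comparing the Bayes MMSE against the best affine estimator. A minor degenerate case is $\E{\sigma_t^2\given\Fc_t} + X_t^\top\rmGamma_t X_t = 0$: then $X_t^\top\rmGamma_t X_t = 0$ forces $\rmGamma_t X_t = 0$, so the rank-one correction is zero and the claim collapses to the trivial $\E{\rmGamma_{t+1}\given\Fc_t}\preceq\rmGamma_t$.
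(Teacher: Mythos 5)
Your proposal is correct and rests on the same key idea as the paper's proof: dominating the exact posterior variance of $\langle u,\theta^*\rangle$ by the mean-squared error of the best linear predictor based on $Y_{t+1}$, computed from the $\Fc_t$-conditional moments $\var{Y_{t+1}\given \Fc_t}=\E{\sigma_t^2\given\Fc_t}+X_t^\top\rmGamma_t X_t$ and $u^\top\rmGamma_t X_t$. The only differences are cosmetic — you route through the law of total variance and an affine predictor with intercept, while the paper recenters $\theta^*$ and takes an infimum over scalar multiples $aY_{t+1}$ directly — and your explicit treatment of the degenerate denominator is a small bonus.
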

\end{blockquote}

\begin{remark}
    Note that \citep{hamidi2021randomized} give a similar lemma as \cref{lem:vr-posterior}. But they consider a fixed constant upper bound for noise variance instead of the predictable heterogeneous noise variance $\E{ \sigma_t^{2} \given \Fc_t }$, which is essential for the purpose of this work.
\end{remark}

\begin{proof}[Proof of \cref{lem:vr-posterior}]
Denote the short notation for conditional expectation $\E[t]{\cdot } = \E{\cdot \given \Fc_t}$ and conditional variance $\var[t]{\cdot} = \var{\cdot \given \Fc_t }$.
To make things clear, we first prove the claim for $\theta^{\star}$ assuming $\mathbb{E}_{t}\left[\theta^{\star}\right]=0 .$
It suffices to prove that for any $d$-dim real vector $v \in \R^d$,
\begin{align*}
  v^{\top} \mathbb{E}_{t}\left[{\rmGamma}_{t+1} \right] v
  & =\mathbb{E}_{t}\left[\operatorname{Var}_{t+1}\left(\left\langle\theta^{\star}, v\right\rangle\right) \right] \\
  & \leq v^{\top} {\rmGamma}_{t} v
  - \frac{\left(X_{t}^{\top} {\rmGamma}_{t} v \right)^{2}}{ \E[t]{\sigma_t^{2}} + X_{t}^{\top} {\rmGamma}_{t} X_{t}}
\end{align*}
Denoting by $\mathcal{F}_{t}^{A}$ the set of $\mathcal{F}_{t}$-measurable random variables. For any fixed $v \in \mathbb{R}^{d}$,
\begin{align*} 
  \mathbb{E}_{t}\left[\operatorname{Var}_{t+1 }\left(\left\langle\theta^{\star}, v\right\rangle \right) \right] 
  &=\mathbb{E}_{t}\left[\inf _{\xi \in \mathcal{F}_{t+1}^{A}} \mathbb{E}_{t+1} \left[\left(\left\langle\theta^{\star}, v \right\rangle - \xi \right)^{2} \right] \right] \\ 
  & \leq \mathbb{E}_{t}\left[\inf _{a \in \mathbb{R}} \mathbb{E}_{t+1}\left[\left(\left\langle\theta^{\star}, v \right\rangle-a Y_{t+1}\right)^{2} \right] \right] \\ 
  & \leq \inf _{a \in \mathbb{R}} \mathbb{E}_{t}\left[\mathbb{E}_{t+1}\left[\left(\left\langle\theta^{\star}, v \right\rangle-a Y_{t+1}\right)^{2} \right] \right] \\
  & = \inf _{a \in \mathbb{R}} \mathbb{E}_{t}\left[\left(\left\langle\theta^{\star}, v \right\rangle-a Y_{t+1}\right)^{2} \right] 
\end{align*}
By completing the squares, we have the following important equality
\[
  \inf _{a \in \mathbb{R}}\left(\mathbb{E}_{t}\left[\left(\left\langle\theta^{\star}, v \right\rangle-a Y_{t+1}\right)^{2} \right]\right) =
  { \underbrace{\mathbb{E}_{t}\left[\left\langle\theta^{\star}, v \right\rangle^{2} \right]}_{I} -
  \frac{ \overbrace{ \mathbb{E}_{t}\left[\left\langle\theta^{\star}, v \right\rangle Y_{t+1} \right]^{2} }^{II} }
  {\underbrace{\mathbb{E}_{t}\left[Y_{t+1}^{2} \right]}_{III} }}
\]
For the first term $I$, we have
\[
  \mathbb{E}_{t}\left[\left\langle\theta^{\star}, v \right\rangle^{2} \right]
  =
  \mathbb{E}_{t}\left[v^{\top} \theta^{\star} \theta^{\star \top} v\right]
  = 
  v^{\top} \rmGamma_{t} v
\]
For the second expectation in term $II$, the numerator can also be computed as
\begin{align*}
  \mathbb{E}_{t}\left[\left\langle\theta^{\star}, v \right\rangle Y_{t+1} \right]
  &=\mathbb{E}_{t}\left[\mathbb{E}_{t}\left[\left\langle\theta^{\star}, v \right\rangle Y_{t+1} \given \theta^{\star}\right] \right] \\
  &=\mathbb{E}_{t}\left[\left\langle\theta^{\star}, v \right\rangle \cdot \mathbb{E}_{t}\left[Y_{t+1} \given \theta^{\star}\right] \right] \\
  &=\mathbb{E}_{t}\left[\left\langle\theta^{\star}, v \right\rangle\left\langle\theta^{\star}, X_{t}\right\rangle \right] \\
  &=\mathbb{E}_{t}\left[X_{t}^{\top} \theta^{\star} \theta^{\star \top} v  \right] \\
  & \stackrel{(a)}{=} X_{t}^{\top} \mathbb{E}_{t}\left[ \theta^{\star} \theta^{\star \top} \right] v \\
  & = X_{t}^{\top} {\rmGamma}_{t} v,
\end{align*}
where the equality $(a)$ is by \cref{asmp:Xt-measurable} that $X_t$ is $\Fc_{t}$-measurable.
Finally, for the denominator $III$,
\begin{align*}
  \mathbb{E}_{t}\left[Y_{t+1}^{2} \right] 
  & = \mathbb{E}_{t}\left[\operatorname{Var}_{t}\left(Y_{t+1} \given \theta^{\star}\right)
  + \mathbb{E}_{t}\left[Y_{t+1} \given \theta^{\star}\right]^{2} \right] \\
  & = \E[t]{\sigma_t^{2}} + \mathbb{E}_{t}\left[\left\langle\theta^{\star}, X_{t}\right\rangle^{2}  \right] \\
  & = \E[t]{\sigma_t^{2}} + X_{t}^{\top} \rmGamma_{t} X_{t}
\end{align*}

For the case that $\mathbb{E}_{t}\left[\theta^{\star}\right] \neq 0$, we could recenter $\theta^*$ by its posterior mean.
Define $\mu^{\star}:=\theta^{\star}-\mathbb{E}_{t}\left[\theta^{\star} \right]$ and $Z_{t+1}:= Y_{t+1} - \E[t]{Y_{t+1}} = Y_{t+1}-\left\langle \E[t]{\theta^*}, X_{t}\right\rangle .$

Note that $\operatorname{Var}_{t}\left(\mu^{\star} \right) = \operatorname{Var}_{t} \left(\theta^{\star} \right) = \rmGamma_{t}$
and
\begin{align*}
  \mathbb{E}_t \left[Z_{t+1} \given \mu^{\star}\right] 
  & = \mathbb{E}_t \left[Z_{t+1} \given \theta^{\star}\right] 
  = \langle \theta^*, X_t \rangle - \left\langle \E[t]{\theta^*}, X_{t}\right\rangle = \left\langle\mu^{\star}, X_{t}\right\rangle ,\\
  \operatorname{Var}_t \left(Z_{t+1} \given \mu^{\star}\right)
  & = \operatorname{Var}_t \left(Z_{t+1} \given \theta^{\star}\right) 
  = \operatorname{Var}_t \left(Y_{t+1} - \E[t]{Y_t} \given \theta^{\star}\right) 
  = \operatorname{Var}_t \left(Y_{t+1} \given \theta^{\star}\right) .
\end{align*}   
We can apply the result (for the case $\mathbb{E}_{t}\left[\theta^{\star}\right]=0$ ) by replacing $\theta^*$ with $\mu^*$ and replacing $Y_{t+1}$ with $Z_{t+1}$ and then obtain directly
\[
  \mathbb{E}_{t}\left[ \rmGamma_{t+1} \right] = \mathbb{E}_{t}\left[\operatorname{Var}_{t+1}\left(\mu^{\star}\right) \right] \preceq \rmGamma_{t}-\frac{\rmGamma_{t}^{\top} X_{t} X_{t}^{\top} \rmGamma_{t}}{ \E[t]{\sigma_t^{2}} + X_{t}^{\top} \rmGamma_{t} X_{t}}
\]

\end{proof}

\begin{remark}
\label{rem:posterior-vr}
Let $\bar{\sigma}_t$ be any random variable s.t. $\bar{\sigma}^2_t \ge \E{\sigma^2_t \given \Fc_t}$ almost surely.
By sherman-morrison formula, $$\E{\rmGamma_{t+1} \given \Fc_{t} }^{-1} \succeq \rmGamma_t^{-1} + \frac{1}{\bar{\sigma}_t^2} X_t X_t^{\top}.$$
\end{remark}
\begin{remark}
\label{rem:gaussian}
Recall that in the Gaussian prior and Gaussian noise case, $\rmGamma_{t+1}^{-1} = \rmGamma_{t}^{-1} + \frac{1}{\sigma_t^2} X_t X_t^{\top}$.
\end{remark}
\begin{remark}
  \cref{lem:vr-posterior} demonstrates that the posterior covariance reduces in expectation. However, in general, the reduction does not necessarily hold almost surely. 
\end{remark}

\subsection{Proof of \cref{lem:linear-decoupling}}
\label{sec:tech-proof-linear-decoupling}
\begin{proof}
  Let $\left\{e_{i}\right\}_{i=1}^{d}$ be a set of eigenvectors of $\E{ f_{1}\left(X_{1}\right) f_{1}\left(X_{1}\right)^{\top}} $ and an orthonormal basis for $\R^d$, i.e. $\mI_{d \times d} =\sum_{i=1}^{d} e_{i} e_{i}^{\top}$. Then we have,
  \begin{align}
    \E{ \left(f_{1}\left(X_{1}\right)^{\top} f_{2}\left(X_{2}\right)\right)^{2} }
    & = \E{ \left(\sum_{n=1}^{d}\left(f_{1}\left(X_{1}\right)^{\top} e_{n}\right)\left(f_{2}\left(X_{2}\right)^{\top} e_{n}\right)\right)^{2} } \nonumber \\
    & = \sum_{m, n} \E{ \left(f_{1}\left(X_{1}\right)^{\top} e_{m}\right)\left(f_{2}\left(X_{2}\right)^{\top} e_{m}\right) \left(f_{1}\left(X_{1}\right)^{\top} e_{n}\right)\left(f_{2}\left(X_{2}\right)^{\top} e_{n}\right) } \nonumber \\
    & = \sum_{m, n}
    \E{ \left(f_{1}\left(X_{1}\right)^{\top} e_{m}\right)\left(f_{1}\left(X_{1}\right)^{\top} e_{n}\right) }
    \E{ \left(f_{2}\left(X_{2}\right)^{\top} e_{m}\right)\left(f_{2}\left(X_{2}\right)^{\top} e_{n}\right) } 
    \label{eq:prop2}
  \end{align}

    Since $\left\{e_{i}\right\}_{i=1}^{d}$ are orthogonal eigenvectors of $\E{ f_{1}\left(X_{1}\right) f_{1}\left(X_{1}\right)^{\top} },$ we have
    \begin{align*}
      \E{ \left(f_{1}\left(X_{1}\right)^{\top} e_{m}\right)\left(f_{1}\left(X_{1}\right)^{\top} e_{n}\right) } 
      = e_{m}^{\top} \E{ f_{1}\left(X_{1}\right) f_{1}\left(X_{1}\right)^{\top} } e_{n} = 0
    \end{align*}
    for any $m \neq n .$
    Then we can simplify \eqref{eq:prop2} to
    \begin{align*}
      \E{ \left(f_{1}\left(X_{1}\right)^{\top} f_{2}\left(X_{2}\right)\right)^{2} }
      =
      \sum_{n=1}^{d} \E{ \left(f_{1}\left(X_{1}\right)^{\top} e_{n}\right)^{2}}
      \E{ \left(f_{2}\left(X_{1}\right)^{\top} e_{n}\right)^{2} }
    \end{align*}

    Consequently, we have
    \begin{align*}
      \left(\frac{1}{d} \E{
        \abs{ f_{1}\left(X_{1}\right)^{\top} f_{2}\left(X_{1}\right) } } \right)^{2}
      & = \left(\frac{1}{d} \E{
        \abs{ \sum_{n=1}^{d} \left(f_{1}\left(X_{1}\right)^{\top} e_{n}\right)\left(f_{2}\left(X_{1}\right)^{\top} e_{n}\right) } } \right)^{2} \\
      & \le \left(\frac{1}{d} \sum_{n=1}^{d} \E{
        \abs{ \left(f_{1}\left(X_{1}\right)^{\top} e_{n}\right)\left(f_{2}\left(X_{1}\right)^{\top} e_{n}\right) } } \right)^{2} \\
      & \stackrel{(i)}{\le} \frac{1}{d} \sum_{n=1}^{d} \E{ \abs{\left(f_{1}\left(X_{1}\right)^{\top} e_{n}\right)\left(f_{2}\left(X_{1}\right)^{\top} e_{n}\right) } 
      }^{2} \\
      & = \frac{1}{d} \sum_{n=1}^{d} \E{
        \abs{f_{1}\left(X_{1}\right)^{\top} e_{n} } \abs{ f_{2}\left(X_{1}\right)^{\top} e_{n} } }^{2} \\
      & \stackrel{(ii)}{\le}\frac{1}{d} \sum_{n=1}^{d} \E{ \left(f_{1}\left(X_{1}\right)^{\top} e_{n}\right)^{2}}
      \E{ \left(f_{2}\left(X_{1}\right)^{\top} e_{n}\right)^{2} } \\
      & =\frac{1}{d} \E{ \left(f_{1}\left(X_{1}\right)^{\top} f_{2}\left(X_{2}\right)\right)^{2} }
    \end{align*}
The above inequality $(i)$ is by Jensen's inequality and above inequality $(ii)$ is by the Cauchy-Bunyakovsky-Schwarz inequality.
The main lemma is now proved.

Observe that the RHS can be further written as,
\begin{align*}
  \E { \left(f_{1}\left(X_{1}\right)^{\top} f_{2}\left(X_{2}\right)\right)^{2} }
  = \Tr{\left( {\E{ f_1(X_1) f_{1}(X_{1})^{\top}}} {\E{f_{2}(X_{2}) f_2(X_2)^\top} } \right)},
\end{align*}
Specifically, let $X_1 = (X, Z)$, $f_1(x, z) = x, f_2(x, z) = z$, and $X_2$ be a i.i.d copy of $X_1$, we have
  \begin{align*}
    {\mathbb{E}\left[ \abs{X^{\top} Z } \right]^{2} } 
    \leq d 
    \Tr\left( { \mathbb{E}\left[X X^{\top}\right] } { \mathbb{E}\left[Z Z^{\top}\right] } \right).
  \end{align*}
\end{proof}
\begin{remark}
  The idea of using orthonomal basis in the proof is adapted from \citep{kalkanli2020improved}. Our statement is stronger than previous literature since $\E{\abs{f_1(X_1) f_2(X_1) }} \ge \E{{f_1(X_1) f_2(X_1) }}$ and we provide the upper bound on $\E{\abs{f_1(X_1) f_2(X_1) }}$.
\end{remark}

\subsection{Variance relationships}
\label{sec:variance-relation}
\begin{lemma}[Cumulative variance of value under true model $V_{M^*}^{\pi^{\ell}}$]
  \label{lem:sum-true-value-variance}
  For any policy $\pi$, let the sequence $x_h = (h, s_{h}, a_{h})$ from $h=0$ to $h= H-1$ be generated by the interaction of policy $\pi$ and MDP $M$. Define the observation history $O_h = (x_0, \ldots, x_{h}) $, we have
  \begin{align*}
    \var{ \sum_{h=0}^{H-1} R( x_{h} ) \given[\bigg] M, \pi, O_0 }
    = \E{\sum_{h=0}^{H-1} \var{ \V{M}{\pi, h+1}(s_{h+1}) \given M, \pi, O_{h} } \given[\bigg] M, \pi, O_0 }
  \end{align*}
\end{lemma}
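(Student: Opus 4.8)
The plan is to read this identity as the variance analogue of the Bellman equation and to establish it by backward induction on the stage $h$, driven by the one-step law of total variance. I would introduce the tail return $G_h := \sum_{j=h}^{H-1} R(x_j)$ and prove the slightly more general statement that $\var{ G_h \given M, \pi, O_h } = \E{ \sum_{j=h}^{H-1} \var{ \V{M}{\pi, j+1}(s_{j+1}) \given M, \pi, O_j } \given M, \pi, O_h }$ holds for every $h$; taking $h = 0$ recovers the lemma since $G_0 = \sum_{h=0}^{H-1} R(x_h)$. Note that because the summands are the \emph{mean} rewards $R(x_j)$, all randomness in $G_h$ is transitional, which is precisely what makes the right-hand side a pure next-state-value variance.

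The base case $h = H$ is immediate, as $G_H = 0$ and the right-hand sum is empty. For the inductive step I would condition on the finer history $O_{h+1}$ and split $\var{ G_h \given O_h } = \E{ \var{ G_h \given O_{h+1} } \given O_h } + \var{ \E{ G_h \given O_{h+1} } \given O_h }$. Writing $G_h = R(x_h) + G_{h+1}$ and using that $R(x_h)$ is $\sigma(O_h)$-measurable (so it is constant given either history and contributes no variance), the first term collapses to $\E{ \var{ G_{h+1} \given O_{h+1} } \given O_h }$, which the induction hypothesis together with the tower property rewrites as $\E{ \sum_{j=h+1}^{H-1} \var{ \V{M}{\pi, j+1}(s_{j+1}) \given O_j } \given O_h }$. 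For the second term I would invoke the Markov property to identify $\E{ G_{h+1} \given M, \pi, O_{h+1} } = \V{M}{\pi, h+1}(s_{h+1})$, whence $\E{ G_h \given O_{h+1} } = R(x_h) + \V{M}{\pi, h+1}(s_{h+1})$ and, dropping the $O_h$-measurable reward once more, $\var{ \E{ G_h \given O_{h+1} } \given O_h } = \var{ \V{M}{\pi, h+1}(s_{h+1}) \given O_h }$. Summing the two contributions produces exactly the $j = h$ summand plus the inductive tail, closing the induction.

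An equivalent route is to exhibit the martingale $W_h := \sum_{j<h} R(x_j) + \V{M}{\pi, h}(s_h)$, which satisfies $W_0 = \V{M}{\pi, 0}(s_0)$, $W_H = G_0$, and whose increments are $W_{h+1} - W_h = \V{M}{\pi, h+1}(s_{h+1}) - \trans \V{M}{\pi, h+1}(x_h)$ by the Bellman equation; the orthogonality of martingale increments then delivers the sum of one-step variances in one stroke. I expect the only genuinely delicate point to be the conditioning bookkeeping: verifying through the Markov property that the expected tail return given $O_{h+1}$ collapses to the next-state value $\V{M}{\pi, h+1}(s_{h+1})$ (the trajectory before $s_{h+1}$ being irrelevant to the future), and checking that the reward term $R(x_h)$ injects no variance at stage $h$ because it is measurable with respect to the coarser history $O_h$. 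Everything else is a routine combination of the tower property with the one-step variance decomposition.
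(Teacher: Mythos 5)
Your proof is correct and follows essentially the same route as the paper: a backward recursion driven by the one-step law of total variance conditioning on $O_{h+1}$ versus $O_h$, with the reward term dropping out by measurability and the conditional expectation of the tail return identified as $\V{M}{\pi,h+1}(s_{h+1})$. Your explicit induction (and the martingale-increment remark) merely packages more formally what the paper states as a "recursive application."
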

\begin{proof}
  By the law of total variance, we can decompose $\var{ \sum_{j=h}^{H-1} R( x_{j} ) \given[\big] M, \pi, O_h } $ as
  \begin{align*}
   \underbrace{\E{ \var{\sum_{j=h}^{H-1} R( x_{j} ) \given[\bigg] M, \pi, O_{h+1} } \given[\bigg] M, \pi, O_h } }_{(a)}
   + \underbrace{\var{ \E{\sum_{j=h}^{H-1} R( x_{j} ) \given[\bigg] M, \pi, O_{h+1} } \given[\bigg] M, \pi, O_h }}_{(b)}
  \end{align*}
  Since $x_{h}$ is deterministic conditioned on $O_{h+1}$, we rewrite $(a)$ as
  \begin{align*}
    (a) & = \E{ \var{\sum_{j=h+1}^{H-1} R( x_{j} ) \given[\bigg] M, \pi, O_{h+1} } \given[\bigg] M, \pi, O_{h} }
  \end{align*}
  Also, we can further rewrite (b) by the definition of $\V{M}{\pi_{\ell}, h+1}$
  \begin{align*}
    (b) & = \var{ R(x_{\ell, h}) + \E{\sum_{j=h+1}^{H-1} R(x_{j}) \given[\bigg] M, \pi, O_{h+1} } \given[\bigg] M, \pi, O_h } \\
    & = \var{ \E{\sum_{j=h+1}^{H-1} R(h, s_{h}, a_{h}) \given[\bigg] M, \pi, O_{h+1} } \given[\bigg] M, \pi, O_h } \\
    & = \var{ \E{\sum_{j=h+1}^{H-1} R(h, s_{h}, \pi(s_{h}) ) \given[\bigg] M, \pi, O_{h+1} } \given[\bigg] M, \pi, O_h } \\
    & = \var{ \V{M}{\pi, h+1}(s_{h+1}) \given[\bigg] M, \pi, O_h }
  \end{align*}
  By recursive application of the above result, we obtain
  \begin{align*}
    \var{ \sum_{j=0}^{H-1} R( x_{j} ) \given[\big] M, \pi, O_0 }
    = \E{\sum_{h=0}^{H-1} \var{ \V{M}{\pi, h+1}(s_{h+1}) \given M, \pi, O_{h} } \given[\bigg] M, \pi, O_0 }
  \end{align*}
\end{proof}

\begin{corollary}
  \label{cor:sum-true-value-variance}
  Recall the definition of $\hist{\ell, h}$, we can apply \cref{lem:sum-true-value-variance} with setting $\pi = \pi_{\ell}$ and get,
  \begin{align*}
    \var[\ell, 0]{ \sum_{j=0}^{H-1} R( x_{\ell, j} ) \given[\big] M }
    = \E[\ell, 0]{\sum_{h=0}^{H-1} \var[\ell, h]{ \V{M}{\pi_\ell, h+1}(s_{\ell, h+1}) \given M } \given[\bigg] M }
  \end{align*}
  Beside, we could derive the following upper bound for summation of variance
  \[
    \E{ \sum_{\ell = 1}^{L} \sum_{h=0}^{H-1} \var[\ell, h]{ \V{M}{\pi_\ell, h+1}(s_{\ell, h+1}) \given M }  }
    = \sum_{\ell = 1}^{L} \E{ \var[\ell, 0]{ \sum_{j=0}^{H-1} R( x_{\ell, j} ) \given[\big] M } }
    \le LH^2.
  \]
\end{corollary}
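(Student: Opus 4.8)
The plan is to obtain the stated identity as a direct instantiation of \cref{lem:sum-true-value-variance} and then to control the resulting variance of the cumulative reward by its boundedness. The two-part structure of the corollary suggests two correspondingly distinct steps.

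First I would apply \cref{lem:sum-true-value-variance} with the generic policy $\pi$ replaced by the episode-$\ell$ policy $\pi_\ell$, and with the observation history $O_h$ identified with the within-episode trajectory $(x_{\ell,0}, \ldots, x_{\ell,h})$. The point to verify is that the conditioning $\var[\ell, h]{\cdot \given M}$ appearing in the corollary coincides with the conditioning $\var{\cdot \given M, \pi_\ell, O_h}$ used in the lemma. This holds because, given $\hist{\ell,0} = (\hist{\ell}, \pi_\ell)$, the policy $\pi_\ell$ is fixed, and---conditionally on $M$ and $\pi_\ell$---the within-episode transitions form a Markov process whose law does not depend on the remaining contents of $\hist{\ell,h}$ (the past-episode data, the algorithmic randomness, and the virtual value functions $\hat{V}_{\ell, j+1}$, which are deterministic functions of $\hat{M}_\ell$ and $\pi_\ell$). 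In particular $\V{M}{\pi_\ell, h+1}$ is a deterministic function of $M$ and $\pi_\ell$, and the next state $s_{\ell, h+1} \sim \trans(x_{\ell,h})$ is conditionally independent of these extra variables given $(M, x_{\ell,h})$; hence each per-stage conditional variance is unchanged by the augmented conditioning. This yields the first displayed identity.

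Next, for the summation, I would take the unconditional expectation $\E{\cdot}$ of both sides of the per-episode identity and apply the tower property, using $\sigma(\hist{\ell,0}, M) \subseteq \mathcal{F}$ to remove the outer conditional expectation:
\begin{align*}
  \E{ \var[\ell,0]{ \sum_{j=0}^{H-1} R(x_{\ell,j}) \given M } }
  = \E{ \sum_{h=0}^{H-1} \var[\ell,h]{ \V{M}{\pi_\ell, h+1}(s_{\ell, h+1}) \given M } }.
\end{align*}
Summing over $\ell = 1, \ldots, L$ gives the middle equality. The final inequality then follows from boundedness: since the mean reward satisfies $R(x) \in [0,1]$, the cumulative reward $\sum_{j=0}^{H-1} R(x_{\ell,j})$ takes values in $[0,H]$, so its conditional variance is at most $H^2/4 \le H^2$ almost surely, whence $\E{ \var[\ell,0]{ \sum_{j} R(x_{\ell,j}) \given M } } \le H^2$ for each $\ell$; summing over the $L$ episodes yields the bound $LH^2$.

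I expect the main obstacle to be the bookkeeping in the first step: carefully arguing that enlarging the conditioning from $(M, \pi_\ell, O_h)$ to the full history $\hist{\ell,h}$ together with $M$ leaves each conditional variance unchanged. This relies on the conditional-independence and Markov structure already established in \cref{sec:value_pers} (the same facts used there to identify $\var{\theta^*_h \given \hist{\ell,h}}$ with $\rmGamma_{\ell,h}$) rather than on any new computation. The remaining steps---the tower-property reduction and the elementary variance bound for $[0,H]$-valued random variables---are routine.
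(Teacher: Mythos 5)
Your proposal is correct and follows essentially the same route as the paper: instantiate Lemma~\ref{lem:sum-true-value-variance} with $\pi=\pi_\ell$ (the conditional-independence bookkeeping you spell out for replacing $(M,\pi_\ell,O_h)$-conditioning by $(M,\hist{\ell,h})$-conditioning is exactly what the paper leaves implicit), then remove the outer conditional expectation by the tower property and bound the variance of the $[0,H]$-valued cumulative reward by $H^2$. No gaps.
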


\begin{remark}
  The proof of \cref{lem:sum-true-value-variance} is by the sequential conditional version of the law of total variance, which is similar to the proof in the frequentist literature \citep{munos1999influence, lattimore2012pac, azar2013minimax}.
\end{remark}

\begin{lemma}[Variance difference]
  \label{lem:diff-virtual-true}
  For each stage $h$ of each episode $\ell$,
  \begin{align*}
    \var[\ell,h]{ \V{\hat{M}_{\ell}}{\pi_{\ell}, h+1}(s_{\ell, h+1}) \given M } 
    - \var[\ell,h]{ \V{M}{\pi_{\ell}, h+1}(s_{\ell, h+1}) \given M } 
    \le 2H \E[\ell, h] { \abs{ \Delta_{\ell, h+1}(s_{\ell, h+1}) } \given M }
  \end{align*}
\end{lemma}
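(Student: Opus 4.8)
The plan is to reduce the difference of the two conditional variances to a single covariance and then exploit the uniform boundedness of value functions. Throughout I condition on the pair $(\hist{\ell, h}, M)$; under this conditioning the only remaining randomness is the draw of the next state $s_{\ell, h+1} \sim \trans(x_{\ell, h})$ (the transition kernel is contained in $M$, and $x_{\ell, h}$ is $\sigma(\hist{\ell, h})$-measurable), and both $\V{\hat{M}_{\ell}}{\pi_{\ell}, h+1}$ and $\V{M}{\pi_{\ell}, h+1}$ are fixed measurable functions of that next state. Writing $A := \V{\hat{M}_{\ell}}{\pi_{\ell}, h+1}(s_{\ell, h+1})$ and $B := \V{M}{\pi_{\ell}, h+1}(s_{\ell, h+1})$, the value gap is exactly $A - B = \Delta_{\ell, h+1}(s_{\ell, h+1})$, so the left-hand side of the lemma is $\var[\ell, h]{A \given M} - \var[\ell, h]{B \given M}$, with both terms understood under the conditional law given $(\hist{\ell, h}, M)$.

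First I would invoke the elementary identity $\operatorname{Cov}(A - B, A + B) = \var(A) - \var(B)$, valid for any two square-integrable random variables: expanding the covariance bilinearly, the cross terms $\operatorname{Cov}(A, B)$ and $\operatorname{Cov}(B, A)$ cancel, leaving $\var(A) - \var(B)$. This is the key structural step, as it turns the signed gap between the two variances into a single covariance built from the value gap $A - B$ and the value sum $A + B$.

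Next I would bound the covariance. Centering the sum, set $W := (A + B) - \E[\ell, h]{A + B \given M}$, so that $\operatorname{Cov}(A - B, A + B) = \E[\ell, h]{(A - B)\, W \given M}$. Because rewards have mean in $[0, 1]$ and the horizon is $H$, both $A$ and $B$ lie in $[0, H]$ almost surely; hence $A + B \in [0, 2H]$ and its conditional mean also lies in $[0, 2H]$, giving $\abs{W} \le 2H$ almost surely. Applying $\abs{\E{XY}} \le \norm{X}_\infty \E{\abs{Y}}$ with $X = W$ and $Y = A - B$ yields $\operatorname{Cov}(A - B, A + B) \le 2H\, \E[\ell, h]{\abs{A - B} \given M} = 2H\, \E[\ell, h]{\abs{\Delta_{\ell, h+1}(s_{\ell, h+1})} \given M}$, which is precisely the asserted inequality (the same bound controls the absolute value, so the one-sided statement follows immediately).

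The argument is short, and I expect the only real care to be bookkeeping rather than difficulty: confirming that conditioning on $(\hist{\ell, h}, M)$ genuinely renders $\hat{V}_{\ell, h+1}$ and $V^{M}_{\pi_{\ell}, h+1}$ deterministic, so that $\var[\ell, h]{\cdot \given M}$ is an honest one-dimensional variance over $s_{\ell, h+1}$. The one place to be careful about the constant is the centering step: bounding $\abs{W}$ directly as $\le 2H$ (rather than bounding $\abs{A + B}$ and $\abs{\E(A + B)}$ separately and adding) is what produces the sharp factor $2H$ instead of a looser one.
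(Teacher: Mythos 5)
Your proof is correct. The core identity you use,
\begin{align*}
  \var{A} - \var{B} = \operatorname{Cov}(A - B,\, A + B),
\end{align*}
followed by centering $A+B$ and using $A, B \in [0, H]$ to get $\abs{W} \le 2H$, delivers the stated bound with the sharp constant $2H$ in one step. The paper takes a mildly different route: it splits the variance gap into the difference of second moments, term $(a)$, and the (negated) difference of squared means, term $(b)$, factors each as a difference of squares, bounds each separately by $2H$ times the expectation of a positive part $[\hat{V} - V]_+$ or $[V - \hat{V}]_+$, and then recombines the two positive parts into the absolute value $\abs{\Delta_{\ell, h+1}}$. The two arguments rest on the same algebra ($A^2 - B^2 = (A-B)(A+B)$), but your packaging is cleaner: the covariance identity handles both moment terms at once, so you never need the positive-part recombination step, and you avoid the sign bookkeeping that the paper's displayed computation of $(b)$ actually gets slightly wrong (its stated factorization of $(b)$ equals $-\,(b)$ rather than $(b)$; the final bound survives because everything is eventually controlled by absolute values). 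Your observation that conditioning on $(\hist{\ell,h}, M)$ makes both value functions deterministic so the variance is a genuine one-dimensional variance over $s_{\ell, h+1} \sim \trans(x_{\ell,h})$ is the right measurability check, and is consistent with how the paper sets up $\var[\ell,h]{\cdot \given M}$.
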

\begin{proof}
  First, we rewrite the difference of the variance,
  \begin{align*}
    & \var[\ell,h]{ \V{\hat{M}_{\ell}}{\pi_{\ell}, h+1}(s_{\ell, h+1}) \given M } 
    - \var[\ell,h]{ \V{M}{\pi_{\ell}, h+1}(s_{\ell, h+1}) \given M } \\
    & = 
    \underbrace{ \E[\ell, h]{ \left( \V{\hat{M}_{\ell}}{\pi_{\ell}, h+1}(s_{\ell, h+1}) \right)^2 \given M } 
    - \E[\ell, h]{ \left( \V{M}{\pi_{\ell}, h+1}(s_{\ell, h+1}) \right)^2 \given M } }_{ (a) } \\
    & \quad \underbrace{ - \E[\ell, h]{ \V{\hat{M}_{\ell}}{\pi_{\ell}, h+1}(s_{\ell, h+1}) \given M }^2 
    + \E[\ell, h]{ \V{M}{\pi_{\ell}, h+1}(s_{\ell, h+1}) \given M }^2 }_{ (b) } .
  \end{align*}
  We further bound $(a)$ by,
  \begin{align*}
    (a) 
    & = \E[\ell, h]{ \left( \left[ \V{\hat{M}_{\ell}}{\pi_{\ell}, h+1} - \V{M}{\pi_{\ell},h+1} \right] (s_{\ell, h+1}) \right) \left( \left[ \V{\hat{M}_{\ell}}{\pi_{\ell}, h+1} + \V{M}{\pi_{\ell},h+1}  \right] (s_{\ell, h+1} ) \right) \given M } \\
    & \le 2H \E[\ell, h]{ \left[ \V{\hat{M}_{\ell}}{\pi_{\ell}, h+1} - \V{M}{\pi_{\ell},h+1} \right]_+ (s_{\ell, h+1}) \given M}.
  \end{align*}
  Similarly, bound $(b)$ by,
  \begin{align*}
    (b) 
    & = \E[\ell, h]{ \left[ \V{\hat{M}_{\ell}}{\pi_{\ell}, h+1} - \V{M}{\pi_{\ell},h+1} \right] (s_{\ell, h+1}) \given M } 
    \E[\ell, h]{ \left[ \V{\hat{M}_{\ell}}{\pi_{\ell}, h+1} + \V{M}{\pi_{\ell},h+1}  \right] (s_{\ell, h+1}) \given M } \\
    & \le 2H \E[\ell, h]{ \left[ \V{\hat{M}_{\ell}}{\pi_{\ell}, h+1} - \V{M}{\pi_{\ell},h+1}  \right] (s_{\ell, h+1}) \given M }_+ \\
    & \le 2H \E[\ell, h]{ \left[ \V{\hat{M}_{\ell}}{\pi_{\ell}, h+1} - \V{M}{\pi_{\ell},h+1}  \right]_+ (s_{\ell, h+1}) \given M } .
  \end{align*}
  Putting all together,
  \begin{align*}
    (a) + (b)
    & = 2H \E[\ell, h]{ \left[ \V{\hat{M}_{\ell}}{\pi_{\ell}, h+1} - \V{M}{\pi_{\ell},h+1} \right]_{+} (s_{\ell, h+1}) + \left[ \V{M}{\pi_{\ell},h+1} - \V{\hat{M}_{\ell}}{\pi_{\ell}, h+1} \right]_{+} (s_{\ell, h+1}) \given M } \\
    & = 2H \E[\ell, h]{ \left| \V{\hat{M}_{\ell}}{\pi_{\ell}, h+1}  (s_{\ell, h+1}) - \V{M}{\pi_{\ell},h+1} (s_{\ell, h+1}) \right| \given M } \\
    & = 2H \E[\ell, h] { \abs{ \Delta_{\ell, h+1}(s_{\ell, h+1}) } \given M }.
  \end{align*}

\end{proof}

\subsection{Proof of \cref{lem:potential}}
\label{sec:potential_lemma}
For completeness, we give a proof of potential lemma which is adpated from \citep{hamidi2021randomized}.
\begin{proof}
  First, assume that $\mSigma$ is invertible. In this case, we have $\mSigma^{\prime-1}=\mSigma^{-1}+V V^{\top},$ using Sherman-Morrison formula. 
  Due to the fact $VV^\top \preceq V^\top V \mI$,
  \begin{align*}
    f \left({\mSigma}, x+V^{\top} V\right)
    & =
    \log \det\left( \mSigma^{\frac{1}{2}} \left( \mSigma^{-1} + x \mI + V^\top V \mI \right) \mSigma^{\frac{1}{2}} \right) \\
    & \ge \log \det\left( \mSigma^{\frac{1}{2}} \left( \mSigma^{-1} + x \mI + V V^\top \right) \mSigma^{\frac{1}{2}} \right) \\
    & = \log \det\left( \mSigma^{\frac{1}{2}} \left( {\mSigma'}^{-1} + x \mI \right) \mSigma^{\frac{1}{2}} \right) 
  \end{align*}
  We can further rewrite,
  \begin{align*}
    \log \det\left( \mSigma^{\frac{1}{2}} \left( {\mSigma'}^{-1} + x \mI \right) \mSigma^{\frac{1}{2}} \right)
    & = \log \det({\mSigma})+\log \det\left({\mSigma'}^{-1}+x \mI\right) \\
    & = \log \det\left({\mSigma}^{\prime}\right)
    -
    { \log \det\left(\mI-\frac{{\mSigma}^{\frac{1}{2}} V V^{\top} {\mSigma}^{\frac{1}{2}}}{1+V^{\top} {\mSigma} V}\right) }
    + 
    \log \det\left({\mSigma'}^{-1}+x \mI\right) \\
    & \stackrel{(c)}{=} {\log \det\left({\mSigma}^{\prime}\right)} - {\log \left(1-\frac{V^{\top} {\mSigma} V}{1+V^{\top} {\mSigma} V}\right)} + {\log \det\left({\mSigma'}^{-1}+x \mI\right) } \\
    & = {\log \left(1+V^{\top} {\mSigma} V\right)} + 
    \underbrace{{\log \det\left({\mSigma}^{\prime \frac{1}{2}}\left({\mSigma'}^{-1}+x \mI\right) {\mSigma}^{\prime \frac{1}{2}}\right)}}_{f(\mSigma', x)}
  \end{align*}
  where $(c)$ is from $\det\left(\mI+Z Z^{\top}\right)=1+Z^{\top} Z$.

  We remain to prove the argument for the case of non-invertible matrix $\mSigma$.
  In this case, for $\epsilon>0$, we define $\mSigma_{\epsilon}=\mSigma+\epsilon \mI$ and 
  \[
    \mSigma_{\epsilon}^{\prime}:=\mSigma_{\epsilon}-\frac{\mSigma_{\epsilon} V V^{\top} \mSigma_{\epsilon}}{1+V^{\top} \mSigma_{\epsilon} V}.
  \]
  Clearly, $\mSigma_{\epsilon}$ is invertible. Therefore, we can apply the previous results to $\mSigma_{\epsilon}$ to obtain
  \[
  \log \left(1+V^{\top} {\mSigma}_{\epsilon} V\right)+f\left({\mSigma}_{\epsilon}^{\prime}, x\right) \leq f\left({\mSigma}_{\epsilon}, x+V^{\top} V\right)
  \]
  The claim then follows the continuity of the above expressions with respect to $\epsilon$ on $[0, \infty]$
\end{proof}

\section{Lower bound conjecture on the dependence of interactions}
\label{sec:lower-bound-conjecture}
For the bandit and RL analysis that relies on confidence bounds, one $O(\sqrt{\log T})$ term arises from a union bound over all $T$ interaction periods.
All previous analysis for reinforcement learning\footnote{A concurrent work~\citep{zhang2021feelgood} established the decoupling coefficient approach for analyzing randomized exploration algorithms for contextual bandits and a specific class of reinforcement learning problems with deterministic transitions.} relies on various types of confidence bounds and thus suffers this additional terms.

A recent improved lower bound $\Omega(d \sqrt{T \log T})$ in linear bandit~\citep{li2019nearly} shows the fundamental difference between problems with the finite action spaces and infinite action spaces that is changing over times.
From their evidence, we conjecture that the lower bound for the class of linear mixture MDPs with changing and infinite action spaces should be $\Omega(dH \sqrt{T \log T})$.
Since being a model based algorithm, PSRL can naturally handle the infinite and changing action sets scenario. Therefore, our analysis for PSRL would match the lower bound up to constants if the conjecture holds.

\begin{conjecture}
    Suppose the number of episodes $L \geq \poly(d, H, B)$ for some large $d$ and $H$.
    There exists a prior distribution over $\Theta$ such that the \cref{asmp:mutual-independence} and
    the \cref{asmp:bounded-norm} holds with $B>1$,
    such that, for any algorithm,
    the expected regret is lower bounded as follows:
    $$
        \mathbb{E}_{\Theta} \Re\left(M_{{\Theta}}, L\right) \geq \Omega(d H \sqrt{T \log T})
    $$
    where $T = L H$ and $\mathbb{E}_{\Theta}$ denotes the expectation over the prior distribution over $\Theta$ and the probability distribution generated by the interconnection of the algorithm and the MDP instance.
\end{conjecture}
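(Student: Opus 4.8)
The final statement is a \emph{conjecture}, so what follows is a plan of attack rather than a finished argument; the aim is to transport the extra $\sqrt{\log T}$ phenomenon discovered for changing, infinite-action linear bandits in \citep{li2019nearly} into the layered structure of a linear mixture MDP while keeping the $H$ factor of the minimax construction of \citep{zhou2021nearly}. The plan is to build a \emph{hard prior} over instances $M_\Theta$ and reduce the RL regret to a collection of $H$ coupled linear-bandit lower bounds, one per stage, each inheriting the $\Omega(d\sqrt{L\log T})$ rate of \citep{li2019nearly}. Because the target is Bayesian regret, placing an explicit prior on the hard family is natural: a minimax lower bound proved by reduction to hypothesis testing automatically yields a Bayesian lower bound once the prior is taken to be a (smoothed) uniform measure over the hard instances, which I would design to satisfy \cref{asmp:mutual-independence} by making the stage parameters $\theta^*_0,\dots,\theta^*_{H-1}$ independent across $h$ and to satisfy \cref{asmp:bounded-norm} with $B>1$.

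First I would fix the per-stage gadget. Each stage $h$ carries its own parameter $\theta^*_h\in\R^d$, and the value-targeted viewpoint of \cref{sec:value_pers} shows that learning $\theta^*_h$ behaves like the noisy linear model $Y=\langle\theta^*_h,\phi_V(x)\rangle+Z$ in which the effective ``action'' is the value-correlated feature $\phi_V(x)$. The crucial modification is to make the set of realizable features $\{\phi_V(x):x\in\Xc\}$ \emph{rich and time-varying} in the sense of \citep{li2019nearly}: I would engineer $\phi(\cdot\mid x)$ so that, as the agent's sampled value function $V=\hat V_{\ell,h+1}$ varies across episodes, the induced feature directions sweep out the changing action sets of their construction, thereby forcing $\Theta(L)$ nearly-orthogonal probes and creating the logarithmic term. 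Throughout, one must check that $\langle\theta^*_h,\phi(\cdot\mid x)\rangle$ is a \emph{proper transition kernel} (nonnegative, normalized) and that \cref{asmp:feature} holds; the standard device is to centre the kernel at a uniform base distribution and let $\theta^*_h$ encode only a small $\Theta(1/\sqrt L)$-magnitude perturbation, calibrated so that over $L$ episodes the instances remain statistically indistinguishable.

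Next I would recover the $H$ factor by anchoring the construction to the $\Omega(Hd\sqrt T)$ minimax instance of \citep{zhou2021nearly}, whose horizon structure already delivers the correct linear-in-$H$ scaling through value propagation; merely summing independent single-stage bandits would give only the weaker $\sqrt H$ scaling $d\sqrt{HT\log T}$. The strategy is therefore to graft the changing, infinite action sets of \citep{li2019nearly} onto the linear bandit embedded at each stage of the \citep{zhou2021nearly} construction, arranging matters so that the $H$ factor of the latter and the $\sqrt{\log T}$ factor of the former \emph{multiply}. Concretely, I would replace the fixed finite feature directions by the time-varying near-orthogonal set that forces the costly probes, and re-run the information-theoretic lower bound (a Bayes-risk or Assouad-type argument over the product prior, lower-bounding the posterior uncertainty in each $\theta^*_h$ along the swept directions) with the horizon weighting left intact, targeting $\Omega(dH\sqrt{T\log T})$.

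The hard part will be making the two mechanisms \emph{coexist}. The $\sqrt{\log T}$ factor of \citep{li2019nearly} is fragile: it relies on the action set changing in a prescribed, near-adversarial way, whereas here the ``actions'' are \emph{endogenous}—they are the value-correlated features produced by whatever policy the algorithm happens to play, not chosen by an adversary. Guaranteeing that an \emph{arbitrary} algorithm is nonetheless forced to traverse the $\Theta(L)$ expensive directions, while simultaneously maintaining the horizon-propagation coupling that yields the $H$ factor and the cross-stage independence demanded by \cref{asmp:mutual-independence}, is the central technical obstacle, and it is precisely the gap that keeps the statement a conjecture. A secondary difficulty is matching the perturbation scale $\Theta(1/\sqrt L)$, the norm bound $B>1$, and the normalization of the transition kernel so that all three assumptions hold at once without washing out the logarithmic term.
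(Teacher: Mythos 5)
The statement you were asked to prove is stated in the paper only as a \emph{conjecture}: the paper offers no proof, just a two-paragraph heuristic pointing to the $\Omega(d\sqrt{T\log T})$ linear-bandit lower bound of \citet{li2019nearly} for changing, infinite action sets and asserting that the analogous phenomenon should persist in linear mixture MDPs with an extra factor of $H$. Your submission correctly recognizes this and delivers a research plan rather than a proof, so there is no paper proof to diverge from. Your plan is consistent with, and considerably more concrete than, the paper's own discussion: you identify the per-stage noisy linear model $Y=\langle\theta^*_h,\phi_V(x)\rangle+Z$ as the gadget, you correctly compute that naively summing $H$ independent stage-wise bandit lower bounds only yields $d\sqrt{HT\log T}$ (losing a $\sqrt{H}$ relative to the target $dH\sqrt{T\log T}$), so the changing-action-set mechanism must be grafted onto the value-propagation structure of the $\Omega(dH\sqrt{T})$ construction of \citet{zhou2021nearly} rather than run in parallel, and you note the calibration constraints needed to keep the perturbed kernels proper while satisfying \cref{asmp:mutual-independence}, \cref{asmp:bounded-norm}, and \cref{asmp:feature}. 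Most importantly, the obstruction you single out is the genuine one: in the bandit lower bound the adversary prescribes the changing action sets, whereas here the effective actions are the endogenous value-correlated features $\phi_{\hat V_{\ell,h+1}}(x_{\ell,h})$ generated by whatever policy the algorithm plays, and forcing an arbitrary algorithm to traverse $\Theta(L)$ expensive directions is exactly what is not known how to do. In short, your attempt neither proves nor refutes the conjecture, but it is an accurate and somewhat sharper account of why the statement is open than the paper itself provides; just be careful not to present the $\Theta(1/\sqrt{L})$ perturbation scale as settled, since that calibration interacts with the $\log T$ term and is part of what must be verified.
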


\end{document}